\definecolor{darkgreen}{rgb}{0,0.5,0}
\theoremstyle{plain}
\newtheorem{theorem}{Theorem}
\newtheorem{lemma}[theorem]{Lemma}
\theoremstyle{definition}
\newtheorem{definition}[theorem]{Definition}
\theoremstyle{remark}
\newtheorem{conjecture}[theorem]{Conjecture}
\newtheorem*{remark*}{Remark}
\newcommand{\R}{\mathbb{R}}
\newcommand{\E}{\mathbb{E}}
\newcommand{\cI}{\mathcal{I}}
\newcommand{\cE}{\mathcal{E}}
\newcommand{\cO}{\mathcal{O}}
\newcommand{\argmax}{\text{argmax}}
\newcommand{\argmin}{\text{argmin}}
\newcommand{\cmark}{\ding{51}}
\newcommand{\xmark}{\ding{55}}
\title{Verification and search algorithms for causal DAGs}
\author{
Davin Choo\thanks{Equal contribution}\\
National University of Singapore\\
\texttt{davin@u.nus.edu}
\and
Kirankumar Shiragur\footnotemark[1]\\
Stanford University\\
\texttt{shiragur@stanford.edu}
\and
Arnab Bhattacharyya\\
National University of Singapore\\
\texttt{arnabb@nus.edu.sg}
}
\date{}
\begin{document}

\maketitle

\begin{abstract}
We study two problems related to recovering causal graphs from interventional data:
(i) \emph{verification}, where the task is to check if a purported causal graph is correct, and (ii) \emph{search}, where the task is to recover the correct causal graph.
For both, we wish to minimize the number of interventions performed.
For the first problem, we give a characterization of a minimal sized set of atomic interventions that is necessary and sufficient to check the correctness of a claimed causal graph.
Our characterization uses the notion of \emph{covered edges}, which enables us to obtain simple proofs and also easily reason about earlier known results.
We also generalize our results to the settings of bounded size interventions and node-dependent interventional costs.
For all the above settings, we provide the first known provable algorithms for efficiently computing (near)-optimal verifying sets on general graphs.
For the second problem, we give a simple adaptive algorithm based on graph separators that produces an atomic intervention set which fully orients any essential graph while using $\cO(\log n)$ times the optimal number of interventions needed to \emph{verify} (verifying size) the underlying DAG on $n$ vertices.
This approximation is tight as \emph{any} search algorithm on an essential line graph has worst case approximation ratio of $\Omega(\log n)$ with respect to the verifying size.
With bounded size interventions, each of size $\leq k$, our algorithm gives an $\cO(\log n \cdot \log k)$ factor approximation.
Our result is the first known algorithm that gives a non-trivial approximation guarantee to the verifying size on general unweighted graphs and with bounded size interventions.

\end{abstract}

\section{Introduction}
\label{sec:introduction}

Causal inference has long been an important concept in various fields such as philosophy \cite{reichenbach1956direction,woodward2005making,eberhardt2007interventions}, medicine/biology/genetics \cite{king2004functional,sverchkov2017review,rotmensch2017learning,pingault2018using}, and econometrics \cite{hoover1990logic,rubin2006estimating}.
Recently, there has also been a growing interest in the machine learning community to use causal inference techniques to improve generalizability to novel testing environments (e.g.\ see \cite{ganin2016domain,louppe2017learning,arjovsky2019invariant,scholkopf2022causality} and references therein).
Under the assumption of causal sufficiency, where there is no unobserved confounders or selection bias, causal inference using observational data has been extensively studied and many algorithms such as PC \cite{spirtes2000causation} and GES \cite{chickering2002optimal} have been proposed.
These algorithms typically recover a causal graph up to its Markov equivalence class and it is known to be fundamentally impossible to learn causal relationships solely based on observational data.
To overcome this issue, one either adds data modeling assumptions (e.g.\ see \cite{shimizu2006linear,peters2014identifiability,mooij2016distinguishing}) or performs interventions to obtain interventional data (see \cref{sec:related-work} for a literature review).
In our work, we study the causal discovery problem via interventions.
As interventions often correspond to real-world experimental trials, they can be costly and it is of practical importance to minimize the number or cost of interventions.

In this work, we consider \emph{ideal interventions} (i.e.\ hard interventions with infinite samples\footnote{We do not consider sample complexity issues in this work.}) to recover causal graphs from its Markov equivalence class -- the \emph{Markov equivalence class} (MEC) of $G$, denoted by $[G]$, is the set of graphs that encode the same conditional distributions and it is known that any MEC $[G]$ can be represented by a unique partially oriented \emph{essential graph} $\cE(G)$.
While ideal interventions may not always be possible in practice, they serve as a first step to understand the verification and search problems.
Such interventions give us a clean graph-theoretic way to reason and identify arc directions in the causal graph $G = (V,E)$.
With these interventions, it is known that intervening on a set $S \subseteq V$ allows us to infer the edge orientation of any edge cut by $S$ and $V \setminus S$ \cite{eberhardt2007causation,hyttinen2013experiment,hu2014randomized,shanmugam2015learning,kocaoglu2017cost}.

Using the ideal interventions, we solve the verification and search problems.
The \emph{search problem} is the traditional question of finding a minimum set of interventions to recover the underlying ground truth causal graph.
As for the \emph{verification problem}, consider the following scenario:
Suppose we have performed a observational study to obtain a MEC and consulted an expert about the identity of the ground truth causal graph.
The question of verification involves testing if the expert is correct using the minimal number of interventional studies.

\begin{definition}[Search problem]
\label{defn:search-problem}
Given the essential graph $\cE(G^*)$ of an unknown causal graph $G^*$, use the minimal number of interventions to fully recover the ground truth causal graph $G^*$.
\end{definition}

\begin{definition}[Verification problem]
\label{defn:verification-problem}
Given the essential graph $\cE(G^*)$ of an unknown causal graph $G^*$ and an expert's graph $G \in [G^*]$, use the minimal number of interventions to verify $G \stackrel{?}{=} G^*$.
\end{definition}

To solve both these problems, we compute \emph{verifying sets} (see \cref{def:min-verifying-set}).
A verifying set is a collection of interventions that fully orients the essential graph.
Note that the minimum size/cost verifying set serves as a natural lower bound for both the search and verification problems.
One of our key contributions is to efficiently compute these verifying sets for a given causal graph.

\paragraph{Contributions}
We study the problems of verification and search for a causal graph from its MEC using ideal interventions.
In our work, we make standard assumptions such as the Markov assumption, the faithfulness assumption, and causal sufficiency \cite{spirtes2000causation}.

\begin{enumerate}
    \item \textbf{Verification}:
    We provide the first known efficient algorithms for computing minimal sized atomic verifying sets and near-optimal bounded size verifying sets (that use at most one more intervention than optimal) on general graphs.
    When vertices have additive interventional costs according to a weight function $w: V \to \R$, we give efficient computation of verifying sets $\cI$ that minimizes $\alpha \cdot w(\cI) + \beta \cdot |\cI|$.
    Our atomic verifying sets have optimal cost and bounded size verifying sets incur a total additive cost of at most $2 \beta$ more than optimal.
    To achieve these results, we prove properties about covered edges and give a characterization of verifying sets as a separation of unoriented covered edges in the given essential graph.
    Using our covered edge perspective, we show that the universal lower bounds of \cite{squires2020active,porwal2021almost} are \emph{not} tight and give a simple proof recovering the verification upper bound of \cite{porwal2021almost}.
    \item \textbf{Adaptive search}:
    We consider adaptive search algorithms which produce a \emph{sequence} of interventions one-at-a-time, possibly using information gained from the outcomes of earlier chosen interventions.
    Building upon the lower bound of \cite{squires2020active}, we establish a stronger (but not computable) lower bound to verify a causal graph.
    This further implies a lower bound on the minimum interventions needed by \emph{any} adaptive search algorithm.
    We also provide an adaptive search algorithm (based on graph separators for chordal graphs) and use our lower bound to prove that our approach uses at most a logarithmic multiplicative factor more interventions than a minimum sized verifying set of the true underlying causal graph.
\end{enumerate}

\textbf{Outline}
\cref{sec:prelim} introduces notation and preliminary notions, as well as discuss some known results on ideal interventions.
We give our results in \cref{sec:results} and provide an overview of the techniques used in \cref{sec:techniques}.
We discuss our experimental results in \cref{sec:experiments}.
Full proofs, side discussions, and source code/scripts are given in the appendix.

\section{Preliminaries and related work}
\label{sec:prelim}

For any set $A$, we denote its powerset by $2^A$.
We write $\{1, \ldots, n\}$ as $[n]$ and hide absolute constant multiplicative factors in $n$ using asymptotic notations $\cO(\cdot)$, $\Omega(\cdot)$, and $\Theta(\cdot)$.
Throughout, we use $G^*$ to denote the (unknown) ground truth DAG and we only know its essential graph $\cE(G^*)$.

\subsubsection*{Graph notions}
We study partially directed graphs $G = (V, E, A)$ on $|V| = n$ vertices with unoriented edges $E$ and oriented arcs $A$.
Any possible edge between two distinct vertices $u,v \in V$ is either undirected, oriented in one direction, or absent in the graph.
We write $u \sim v$ if these vertices are connected (either through an unoriented edge or an arc) in the graph and $u \not\sim v$ to indicate the absence of any edge/arc connection.
If $(u,v) \in A$ or $(v,u) \in A$, we write $u \to v$ or $u \gets v$ respectively.
When $E = \emptyset$, we say that the graph is fully oriented.
For any graph $G$, we use $V(G), E(G), A(G)$ to denote its vertices, unoriented edges and oriented arcs.

For any subset of vertices $V'\subseteq V$, $G[V']$ denotes the vertex-induced subgraph on $V'$.
Similarly, we define $G[A']$ and $G[E']$ as arc-induced and edge-induced subgraphs of $G$ for $A' \subseteq A$ and $E' \subseteq E$ respectively.
For an undirected graph $G$, $\omega(G)$ refers to the size of its maximum clique and $\chi(G)$ refers to its chromatic number.
For any vertex $v \in V$ in a directed graph, $Pa(v) \subseteq V$ denotes the parent set of $v$ and $pa(v)$ denotes the vector of values taken by $v$'s parents.

The \emph{skeleton} of a graph $G$ refers to the graph $G' = (V, E \cup A, \emptyset)$ where all arcs are made unoriented.
A \emph{v-structure} refers to three distinct vertices $u,v,w \in V$ such that $u \to v \gets w$ and $u \not\sim w$.
A simple cycle is a sequence of $k \geq 3$ vertices where $v_1 \sim v_2 \sim \ldots \sim v_k \sim v_1$.
The cycle is directed if at least one of the edges is directed and all directed arcs are in the same direction along the cycle.
A partially directed graph is a \emph{chain graph} if it contains no directed cycle.
In the undirected graph $G'$ obtained by removing all arcs from a chain graph $G$, each connected component in $G'$ is called a \emph{chain component}.
We use $CC(G)$ to denote the set of chain components in $G$.
Note that the vertices of these chain components form a partition of $V$.

Directed acyclic graphs (DAGs), a special case of chain graphs where \emph{all} edges are directed, are commonly used as graphical causal models \cite{pearl2009causality} where vertices represents random variables and the joint probability density $f$ factorizes according to the Markov property:
$
f(v_1, \ldots, v_n) = \prod_{i=1}^n f(v_i \mid pa(v))
$.
We can associate a \emph{valid permutation / topological ordering} $\pi : V \to [n]$ to any (partially oriented) DAG such that oriented arcs $(u,v)$ satisfy $\pi(u) < \pi(v)$ and any unoriented arc $\{u,v\}$ can be oriented as $u \to v$ whenever $\pi(u) < \pi(v)$ without forming directed cycles.
While there may be multiple valid permutations, we often only care that there exists at least one such permutation.
For any DAG $G$, we denote its \emph{Markov equivalence class} (MEC) by $[G]$ and \emph{essential graph} by $\cE(G)$.
It is known that two graphs are Markov equivalent if and only if they have the same skeleton and v-structures \cite{verma1990,andersson1997characterization}.

A \emph{clique} is a graph where $u \sim v$ for any pair of vertices $u,v \in V$.
A \emph{maximal clique} is an vertex-induced subgraph of a graph that is a clique and ceases to be one if we add any other vertex to the subgraph.
If all edges in the clique are oriented in an acyclic manner, then there is a unique valid permutation $\pi$ that respects this orientation.
We denote $v = \argmax_{u \in V} \pi(u)$ as the \emph{sink} of the clique.

\subsubsection*{Interventions, verifying sets, and additive vertex intervention costs}
An \emph{intervention} $S \subseteq V$ is an experiment where the experimenter forcefully sets each variable $s \in S$ to some value, independent of the underlying causal structure.
An intervention is called an \emph{atomic intervention} if $|S| = 1$ and called a \emph{bounded size intervention} if $|S| \leq k$ for some size upper bound $k$.
One can view observational data as a special case where $S = \emptyset$.
Interventions affect the joint distribution of the variables and are formally captured by Pearl's do-calulus \cite{pearl2009causality}.
An \emph{intervention set} $\cI \subseteq 2^V$ is a collection of interventions and $\cup_{S \in \cI} S$ is the union of all intervened vertices.

In this work, we study ideal interventions.
Graphically speaking, an ideal intervention $S$ on $G$ induces an interventional graph $G_S$ where all incoming arcs to vertices $v \in S$ are removed \cite{eberhardt2012number} and it is known that intervening on a set $S \subseteq V$ allows us to infer the edge orientation of any edge cut by $S$ and $V \setminus S$ \cite{eberhardt2007causation,hyttinen2013experiment,hu2014randomized,shanmugam2015learning,kocaoglu2017cost}.
For ideal interventions, an $\cI$-essential graph $\cE_{\cI}(G)$ of $G$ is the essential graph representing the Markov equivalence class of graphs whose interventional graphs for each intervention is Markov equivalent to $G_S$ for any intervention $S \in \cI$.

There are several known properties about $\cI$-essential graph properties \cite{hauser2012characterization,hauser2014two,squires2020active}:
Every $\cI$-essential graph is a chain graph with chordal\footnote{A chordal graph is a graph where every cycle of length at least 4 has a chord, which is an edge that is not part of the cycle but connects two vertices of the cycle. See \cite{blair1993introduction} for more properties.} chain components.
This includes the case of $S = \emptyset$.
Orientations in one chain component do not affect orientations in other components.
In other words, to fully orient any essential graph $\cE(G^*)$, it is necessary and sufficient to orient every chain component in $\cE(G^*)$ independently.
More formally, we have\footnote{Lemma 1 of \cite{hauser2014two} actually considers a \emph{single} additional intervention, but a closer look at their proof shows that the statement can be strengthened to allow for \emph{multiple} additional interventions. For completeness, we provide the proof of this strengthened version in \cref{sec:appendix-stronger-HB}. Note that we can drop the $\emptyset$ intervention in the statement since essential graphs are defined with the observational data provided.}

\begin{restatable}[Modified lemma 1 of \cite{hauser2014two}]{lemma}{hauserbulmannstrengthened}
\label{lem:hauser-bulmann-strengthened}
Let $\cI \subseteq 2^V$ be an intervention set.
Consider the $\cI$-essential graph $\cE_{\cI}(G^*)$ of some DAG $G^*$ and let $H \in CC(\cE_{\cI}(G^*))$ be one of its chain components.
Then, for any additional interventional set $\cI' \subseteq 2^V$ such that $\cI \cap \cI' = \emptyset$, we have
\[
\cE_{\cI \cup \cI'}(G^*)[V(H)] = \cE_{\{S \cap V(H)~:~S \in \cI'\}}(G^*[V(H)]).
\]
\end{restatable}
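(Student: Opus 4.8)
The plan is to prove the claimed identity of two $\cI$-essential graphs on the vertex set $V(H)$ by matching their skeletons and then their oriented arcs, using the constructive (Meek-rule) characterization of interventional essential graphs: $\cE_{\cI}(G^*)$ is obtained from $G^*$ by orienting every edge that lies in a v-structure of $G^*$ or is \emph{cut} by some intervention $S \in \cI$ (exactly one endpoint in $S$), and then closing the orientation under Meek's rules R1--R4. The skeletons agree immediately, since interventions never add or delete edges: both $\cE_{\cI \cup \cI'}(G^*)[V(H)]$ and $\cE_{\{S \cap V(H)\,:\,S \in \cI'\}}(G^*[V(H)])$ have the skeleton of $G^*[V(H)]$. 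It then remains to show that each edge $\{u,v\}$ with $u,v \in V(H)$ receives the same orientation on both sides.

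First I would record two structural facts about the chain component $H$ of $\cE_{\cI}(G^*)$, in which every internal edge is unoriented by definition. (a) $G^*[V(H)]$ contains no v-structure: a v-structure $a \to b \gets c$ with $a,b,c \in V(H)$ would be a v-structure of $G^*$ and hence forced in $\cE_{\cI}(G^*)$, contradicting that $H$ is unoriented; consequently the observational essential graph of $G^*[V(H)]$ is exactly the undirected graph $H$. (b) (Uniform external parents.) If $w \notin V(H)$ and $w \to v$ in $\cE_{\cI}(G^*)$ for some $v \in V(H)$, then $w \to u$ for \emph{every} $u \in V(H)$. I would prove this by propagating along the connected graph $H$: for a neighbour $u$ of $v$ with $u - v$ unoriented, if $w \not\sim u$ then R1 applied to $w \to v - u$ forces $v \to u$, a contradiction; hence $w \sim u$, and the edge cannot be oriented (else $u,w$ lie in the same chain component), nor can it be $u \to w$, since $u \to w \to v$ with $u - v$ would force $u \to v$ by R2. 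So $w \to u$, and connectivity of $H$ spreads this to all of $V(H)$. The payoff of (b) is that every vertex outside $V(H)$ that sends an arc into $H$ is adjacent to \emph{all} of $V(H)$, and adding $\cI'$ cannot change these already-oriented boundary arcs.

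Next I would compare the two Meek-closure processes and argue, by induction on the closure steps, that the arcs produced \emph{inside} $V(H)$ coincide. At the baseline, fact (a) together with $H$ being a chain component shows no internal edge is oriented at the $\cI$ level, so every internal base orientation comes from being cut by some $S \in \cI'$; and $\{u,v\}\subseteq V(H)$ is cut by $S$ exactly when cut by $S \cap V(H)$, so the internal base orientations match. For the closure, I would verify that no Meek instance can orient an internal edge using an external vertex: in R1 an external tail $a \to b$ with $b \in V(H)$ would, by (b), satisfy $a \to c$ for the other internal endpoint $c$, violating the required $a \not\sim c$; in R2 an external intermediate $b$ on $a \to b \to c$ with $a,c \in V(H)$ would be an external parent of $c$, forcing $b \to a$ by (b) and contradicting $a \to b$; and in R3/R4 the unoriented edges incident to the oriented target pin all participating vertices to $V(H)$, with any non-adjacency witness again blocked from being external by (b). Conversely, every purely internal (local) instance is a valid global instance, since induced non-adjacencies and internal adjacencies are identical in $G^*[V(H)]$ and $G^*$ and the global graph carries at least the internal orientations by induction. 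Hence the two closures produce the same arcs inside $V(H)$, proving the equality.

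The main obstacle is precisely this bidirectional boundary-respecting claim for the full set of Meek rules: showing that neither closure can ``leak'' across the frontier of $H$ in either direction, so the global orientations restricted to $V(H)$ are neither more nor fewer than the local ones. Fact (b) is the lever that makes every case go through, as it guarantees the external neighbourhood of $H$ is complete enough to disqualify any outside vertex as the non-adjacency witness or intermediate needed by a rule whose target lies inside $H$. Finally, note that the passage from a single added intervention to an arbitrary set $\cI'$ requires no new idea: the base orientations are simply ``cut by \emph{some} member of $\cI'$'' and the closure is unchanged, so the argument is insensitive to $|\cI'|$, which is exactly the strengthening over Lemma~1 of \cite{hauser2014two}.
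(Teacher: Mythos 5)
Your proof is correct, but it takes a genuinely different route from the paper's. The paper works with the \emph{strong protection} characterization of $\cI$-essential graphs (Theorem 18 of \cite{hauser2012characterization}): assuming some arc is oriented in $\cE_{\cI\cup\cI'}(G^*)[V(H)]$ but not in $\cE_{\{S\cap V(H)\}}(G^*[V(H)])$, it picks such an arc $a\to b$ with $\pi(b)$ minimal, notes that every arc of an $\cI$-essential graph is strongly protected, and shows each of the four protection configurations forces the same orientation on the other side; the reverse containment is obtained by swapping roles. You instead use the constructive characterization ($\cI$-essential graph $=$ Meek closure of v-structure and cut-edge orientations), prove the common-external-parents property of a chain component from first principles, and run an induction on closure steps showing no Meek instance can orient an internal edge via an external participant and vice versa. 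Your route is more elementary and self-contained on the structural side, and it makes transparent why the strengthening from one extra intervention to a set $\cI'$ is free; the paper's route is shorter because the cited characterization does the heavy lifting. Two points in your argument deserve to be made explicit. First, you should cite the fact that $\cE_{\cI}(G^*)$ equals the Meek closure of the v-structure and cut-edge orientations (this is the algorithmic counterpart of the characterization the paper quotes). Second, in the R3/R4 cases your claim that the unoriented edges $a\sim c$ and $a\sim d$ at the moment a rule fires pin $c,d$ inside $V(H)$ is only immediate if all boundary arcs of $\cE_{\cI}(G^*)$ are already present at that moment; this is justified by order-independence of the Meek closure, since the closure of the full base set equals the closure of $\cE_{\cI}(G^*)$ together with the $\cI'$-cut orientations, so one may assume every edge leaving $V(H)$ is oriented throughout. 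With those two facts stated, the argument is complete.
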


As a consequence of \cref{lem:hauser-bulmann-strengthened}, one may assume without loss of generality that $CC(\cE(G))$ is a single connected component and then generalize results by summing across all connected components.

A \emph{verifying set} $\cI$ for a DAG $G \in [G^*]$ is an intervention set that fully orients $G$ from $\cE(G^*)$, possibly with repeated applications of Meek rules (see \cref{sec:appendix-meek-rules}).
In other words, for any graph $G = (V,E)$ and any verifying set $\cI$ of $G$, we have $\cE_{\cI}(G)[V'] = G[V']$ for \emph{any} subset of vertices $V' \subseteq V$.
Furthermore, if $\cI$ is a verifying set for $G$, then $\cI \cup S$ is also a verifying set for $G$ for any additional intervention $S \subseteq V$.
While DAGs may have multiple verifying sets in general, we are often interested in finding one with minimum size or cost.

\begin{definition}[Minimum size/cost verifying set]
\label{def:min-verifying-set}
Let $w$ be a weight function on intervention sets.
An intervention set $\cI$ is called a verifying set for a DAG $G^*$ if $\cE_{\cI}(G^*) = G^*$.
$\cI$ is a \emph{minimum size (resp.\ cost) verifying set} if $\cE_{\cI'}(G^*) \neq G^*$ for any $|\cI'| < |\cI|$ (resp.\ for any $w(\cI') < w(\cI)$).
\end{definition}

When restricting to interventions of size at most $k$, the \emph{minimum verification number} $\nu_k(G)$ of $G$ denotes the size of the minimum size verifying set for any DAG $G \in [G^*]$.
That is, any revealed arc directions when performing interventions on $\cE(G^*)$ respects $G$.
$\nu_1(G)$ denotes the case where we restrict to atomic interventions.
One of the goals of this work is to characterize $\nu(G)$ given an essential graph $\cE(G^*)$ and some $G \in [G^*]$.

\subsubsection*{Covered edges}
Covered edges are special arcs in a causal graph where the endpoints of $u \sim v$ share the same set of parents in $V \setminus \{u,v\}$.
These edges are crucial in causal discovery because their orientation can be reversed and they still yield the same conditional independencies.
See \cref{fig:standing-windmill} for an illustration.
Note that one can compute all covered edges of a given DAG $G$ in polynomial time.

\begin{figure}[htbp]
\centering
\resizebox{\linewidth}{!}{%
\begin{tikzpicture}
%
%
\node[draw, circle, minimum size=10pt, inner sep=0pt] at (0,0) (a-eg) {\scriptsize $a$};
\node[draw, circle, minimum size=10pt, inner sep=0pt] at ($(a-eg) + (-1,-0.5)$) (b-eg) {\scriptsize $b$};
\node[draw, circle, minimum size=10pt, inner sep=0pt] at ($(a-eg) + (-1,0.5)$) (c-eg) {\scriptsize $c$};
\node[draw, circle, minimum size=10pt, inner sep=0pt] at ($(a-eg) + (-0.5,1)$) (d-eg) {\scriptsize $d$};
\node[draw, circle, minimum size=10pt, inner sep=0pt] at ($(a-eg) + (0.5,1)$) (e-eg) {\scriptsize $e$};
\node[draw, circle, minimum size=10pt, inner sep=0pt] at ($(a-eg) + (1,0.5)$) (f-eg) {\scriptsize $f$};
\node[draw, circle, minimum size=10pt, inner sep=0pt] at ($(a-eg) + (1,-0.5)$) (g-eg) {\scriptsize $g$};
\node[draw, circle, minimum size=10pt, inner sep=0pt] at ($(a-eg) + (0,-1)$) (h-eg) {\scriptsize $h$};

\draw[thick] (a-eg) -- (b-eg);
\draw[thick] (a-eg) -- (c-eg);
\draw[thick] (a-eg) -- (d-eg);
\draw[thick] (a-eg) -- (e-eg);
\draw[thick] (a-eg) -- (f-eg);
\draw[thick] (a-eg) -- (g-eg);
\draw[thick] (a-eg) -- (h-eg);
\draw[thick] (b-eg) -- (c-eg);
\draw[thick] (d-eg) -- (e-eg);
\draw[thick] (f-eg) -- (g-eg);

%
%
\node[draw, circle, minimum size=10pt, inner sep=0pt] at (4,0) (a-gstar) {\scriptsize $a$};
\node[draw, circle, minimum size=10pt, inner sep=0pt] at ($(a-gstar) + (-1,-0.5)$) (b-gstar) {\scriptsize $b$};
\node[draw, circle, minimum size=10pt, inner sep=0pt] at ($(a-gstar) + (-1,0.5)$) (c-gstar) {\scriptsize $c$};
\node[draw, circle, minimum size=10pt, inner sep=0pt] at ($(a-gstar) + (-0.5,1)$) (d-gstar) {\scriptsize $d$};
\node[draw, circle, minimum size=10pt, inner sep=0pt] at ($(a-gstar) + (0.5,1)$) (e-gstar) {\scriptsize $e$};
\node[draw, circle, minimum size=10pt, inner sep=0pt] at ($(a-gstar) + (1,0.5)$) (f-gstar) {\scriptsize $f$};
\node[draw, circle, minimum size=10pt, inner sep=0pt] at ($(a-gstar) + (1,-0.5)$) (g-gstar) {\scriptsize $g$};
\node[draw, circle, minimum size=10pt, inner sep=0pt] at ($(a-gstar) + (0,-1)$) (h-gstar) {\scriptsize $h$};

\draw[thick, -stealth] (a-gstar) -- (b-gstar);
\draw[thick, -stealth] (a-gstar) -- (c-gstar);
\draw[thick, -stealth] (a-gstar) -- (d-gstar);
\draw[thick, -stealth] (a-gstar) -- (e-gstar);
\draw[thick, -stealth] (a-gstar) -- (f-gstar);
\draw[thick, -stealth] (a-gstar) -- (g-gstar);
\draw[thick, -stealth, dashed] (h-gstar) -- (a-gstar);
\draw[thick, -stealth, dashed] (b-gstar) -- (c-gstar);
\draw[thick, -stealth, dashed] (d-gstar) -- (e-gstar);
\draw[thick, -stealth, dashed] (f-gstar) -- (g-gstar);

%
%
\node[draw, circle, minimum size=10pt, inner sep=0pt] at (8,0) (a-g1) {\scriptsize $a$};
\node[draw, circle, minimum size=10pt, inner sep=0pt] at ($(a-g1) + (-1,-0.5)$) (b-g1) {\scriptsize $b$};
\node[draw, circle, minimum size=10pt, inner sep=0pt] at ($(a-g1) + (-1,0.5)$) (c-g1) {\scriptsize $c$};
\node[draw, circle, minimum size=10pt, inner sep=0pt] at ($(a-g1) + (-0.5,1)$) (d-g1) {\scriptsize $d$};
\node[draw, circle, minimum size=10pt, inner sep=0pt] at ($(a-g1) + (0.5,1)$) (e-g1) {\scriptsize $e$};
\node[draw, circle, minimum size=10pt, inner sep=0pt] at ($(a-g1) + (1,0.5)$) (f-g1) {\scriptsize $f$};
\node[draw, circle, minimum size=10pt, inner sep=0pt] at ($(a-g1) + (1,-0.5)$) (g-g1) {\scriptsize $g$};
\node[draw, circle, minimum size=10pt, inner sep=0pt] at ($(a-g1) + (0,-1)$) (h-g1) {\scriptsize $h$};

\draw[thick, -stealth, dashed] (a-g1) -- (b-g1);
\draw[thick, -stealth] (a-g1) -- (c-g1);
\draw[thick, -stealth, dashed] (a-g1) -- (d-g1);
\draw[thick, -stealth] (a-g1) -- (e-g1);
\draw[thick, -stealth, dashed] (a-g1) -- (f-g1);
\draw[thick, -stealth] (a-g1) -- (g-g1);
\draw[thick, -stealth, dashed] (a-g1) -- (h-g1);
\draw[thick, -stealth, dashed] (b-g1) -- (c-g1);
\draw[thick, -stealth, dashed] (d-g1) -- (e-g1);
\draw[thick, -stealth, dashed] (f-g1) -- (g-g1);

%
%
\node[draw, circle, minimum size=10pt, inner sep=0pt] at (12,0) (a-g2) {\scriptsize $a$};
\node[draw, circle, minimum size=10pt, inner sep=0pt] at ($(a-g2) + (-1,-0.5)$) (b-g2) {\scriptsize $b$};
\node[draw, circle, minimum size=10pt, inner sep=0pt] at ($(a-g2) + (-1,0.5)$) (c-g2) {\scriptsize $c$};
\node[draw, circle, minimum size=10pt, inner sep=0pt] at ($(a-g2) + (-0.5,1)$) (d-g2) {\scriptsize $d$};
\node[draw, circle, minimum size=10pt, inner sep=0pt] at ($(a-g2) + (0.5,1)$) (e-g2) {\scriptsize $e$};
\node[draw, circle, minimum size=10pt, inner sep=0pt] at ($(a-g2) + (1,0.5)$) (f-g2) {\scriptsize $f$};
\node[draw, circle, minimum size=10pt, inner sep=0pt] at ($(a-g2) + (1,-0.5)$) (g-g2) {\scriptsize $g$};
\node[draw, circle, minimum size=10pt, inner sep=0pt] at ($(a-g2) + (0,-1)$) (h-g2) {\scriptsize $h$};

\draw[thick, -stealth, dashed] (b-g2) -- (a-g2);
\draw[thick, -stealth, dashed] (a-g2) -- (c-g2);
\draw[thick, -stealth] (a-g2) -- (d-g2);
\draw[thick, -stealth] (a-g2) -- (e-g2);
\draw[thick, -stealth] (a-g2) -- (f-g2);
\draw[thick, -stealth] (a-g2) -- (g-g2);
\draw[thick, -stealth] (a-g2) -- (h-g2);
\draw[thick, -stealth] (b-g2) -- (c-g2);
\draw[thick, -stealth, dashed] (d-g2) -- (e-g2);
\draw[thick, -stealth, dashed] (f-g2) -- (g-g2);

%
%
\node[above=35pt of a-eg, inner sep=0pt] {\small $\cE(G^*)$};
\node[above=35pt of a-gstar, inner sep=0pt] {\small $G^*$};
\node[above=35pt of a-g1, inner sep=0pt] {\small $G_1$};
\node[above=35pt of a-g2, inner sep=0pt] {\small $G_2$};

%
%
\node[double arrow, draw, minimum height=2.5em, double arrow head extend=0.5ex, inner sep=2pt] at (6,0) (gstar-g1-arrow) {};
\node[above=5pt of gstar-g1-arrow] {\footnotesize $a \sim h$};

\node[double arrow, draw, minimum height=2.5em, double arrow head extend=0.5ex, inner sep=2pt] at (10,0) (g1-g2-arrow) {};
\node[above=5pt of g1-g2-arrow] {\footnotesize $a \sim b$};
\end{tikzpicture}
}
\caption{
A DAG $G^*$ with its essential graph $\cE(G^*)$ on the left.
$G_1$ and $G_2$ are two other DAGs that belong to the same Markov equivalence class $[G^*]$.
Dashed arcs are covered edges in each DAG.
One can perform a sequence of covered edge reversals to transform between the DAGs (see \cref{lem:sequence}).
Note that the sizes of the minimum vertex cover of the covered edges may differ across DAGs.
}
\label{fig:standing-windmill}
\end{figure}
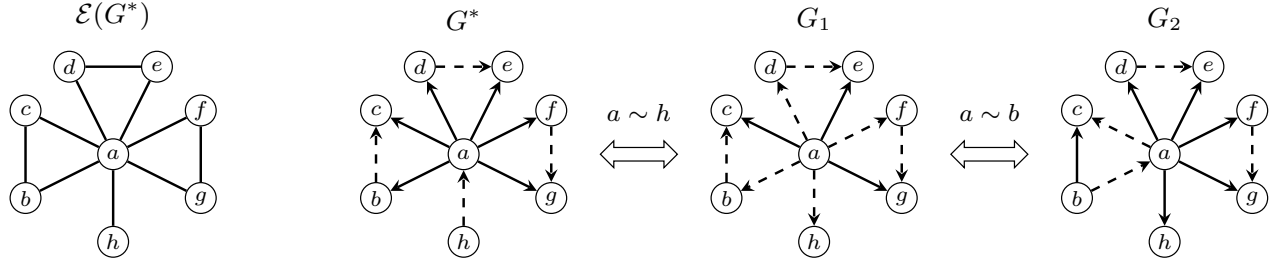

\begin{definition}[Covered edge]
An edge $u \sim v$ is a covered edge if $Pa(u) \setminus \{v\} = Pa(v) \setminus \{u\}$.
\end{definition}

\begin{definition}[Covered edge reversal]
A covered edge reversal means that we replace $u \to v$ with $v \to u$, for some covered edge $u \to v$, while keeping all other arcs unchanged.
\end{definition}

\begin{lemma}[\cite{chickering2013transformational}]
\label{lem:sequence}
If $G$ and $G'$ belong in the same MEC if and only if there exists a sequence of covered edge reversals to transform between them.
\end{lemma}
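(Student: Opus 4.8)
The plan is to prove both directions of the equivalence using the Verma--Andersson characterization that two DAGs are Markov equivalent if and only if they share the same skeleton and the same v-structures \cite{verma1990,andersson1997characterization}. The two directions are asymmetric in difficulty: the implication ``reversals $\Rightarrow$ same MEC'' reduces to a local check on a single reversal, whereas ``same MEC $\Rightarrow$ reversals'' requires constructing the sequence.

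\emph{Reversals imply equivalence.} It suffices to show that reversing one covered edge preserves the MEC, since a sequence then composes. Let $u \to v$ be covered in $G$ and let $G''$ be the result of replacing it with $v \to u$. First, the skeleton is unchanged, as reversal does not alter adjacency. Second, $G''$ is acyclic: a new directed cycle would force a directed path $u \rightsquigarrow v$ in $G$ avoiding the arc $u \to v$, whose last internal vertex $w$ would satisfy $w \in Pa_G(v) \setminus \{u\} = Pa_G(u) \setminus \{v\}$, giving $w \to u$ and hence a directed cycle already in $G$, a contradiction. Third, the v-structures coincide in both directions: since $Pa_G(u) \setminus \{v\} = Pa_G(v) \setminus \{u\}$ and $u \sim v$, no collider at $u$ or at $v$ can involve the $u$--$v$ edge (any would-be co-parent is adjacent to the opposite endpoint by the covered condition), and the parent sets of all other vertices are untouched. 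By the Verma--Andersson characterization, $G$ and $G''$ lie in the same MEC.

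\emph{Equivalence implies reversals.} I would induct on $d(G,G')$, the number of edges oriented oppositely in $G$ and $G'$. Because equivalent graphs share a skeleton, $d = 0$ holds exactly when $G = G'$, giving the base case. For the inductive step, fix a topological ordering $\tau$ of $G'$ and consider the \emph{conflicting} arcs, i.e.\ arcs $u \to v$ in $G$ with $v \to u$ in $G'$ (so $\tau(v) < \tau(u)$). Choose a conflicting arc $u \to v$ that is extremal with respect to $\tau$. The crucial claim is that this arc is covered in $G$; granting it, the first direction shows that the reversed graph $G''$ is still Markov equivalent to $G$ (hence to $G'$), and since $G''$ now agrees with $G'$ on this single arc and on all others, $d(G'', G') = d(G,G') - 1$. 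Applying the induction hypothesis to $G''$ and $G'$ yields a sequence of covered edge reversals from $G''$ to $G'$, and prepending the reversal $G \to G''$ completes the sequence.

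The main obstacle is the claim that the chosen conflicting arc $u \to v$ is covered, i.e.\ that $Pa_G(u) \setminus \{v\} = Pa_G(v) \setminus \{u\}$. The two inclusions split naturally into two types of argument. Adjacency of any candidate co-parent is forced by the matched v-structures: for instance, a parent $w$ of $v$ with $w \not\sim u$ would create a v-structure $w \to v \gets u$ in $G$ that cannot appear in $G'$, where $v \to u$, contradicting equivalence. What remains is to pin down the \emph{direction} of each co-parent edge and to rule out co-parents whose own edge is conflicting; this is exactly what the extremal choice of $u \to v$ with respect to $\tau$ must buy, by excluding configurations that would either violate acyclicity or contradict the minimality of the selection. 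Setting up the extremal criterion so that this bookkeeping goes through cleanly is the delicate part; the acyclicity check, skeleton invariance, and the outer induction are all routine once the covered-edge claim is secured.
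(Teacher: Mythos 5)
The paper does not prove this lemma at all: it is imported verbatim from \cite{chickering2013transformational}, so there is no in-paper argument to compare against. Judged on its own, your proposal proves one direction and leaves the other with a genuine gap. The direction ``covered edge reversals preserve the MEC'' is complete and correct: skeleton invariance is immediate; your acyclicity argument (the last internal vertex $w$ of a putative directed path $u \rightsquigarrow v$ avoiding the arc lies in $Pa_G(v)\setminus\{u\} = Pa_G(u)\setminus\{v\}$, forcing a cycle already in $G$) is sound; and the observation that the covered condition makes every co-parent of $v$ adjacent to $u$, and after reversal every co-parent of $u$ adjacent to $v$, correctly rules out both destroyed and created v-structures, so the Verma--Andersson characterization closes this half.

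The converse is where the entire difficulty of Chickering's theorem lives, and that is precisely the step you do not supply: the claim that a conflicting arc chosen ``extremally'' with respect to a topological order is covered in $G$. You neither specify the extremal criterion nor prove the claim; you state that the extremal choice ``must buy'' the required bookkeeping and defer it as the delicate part. In Chickering's argument the selection is a two-level one --- take the node $y$ earliest in a topological order of $G$ that is the head of some conflicting arc, then among conflicting arcs $x \to y$ take $x$ latest in that order --- and the proof that $x \to y$ is covered is a multi-case analysis: a parent $z$ of $x$ not adjacent to $y$ yields either a v-structure mismatch at $x$ or a conflicting arc whose head precedes $y$; a parent $z$ of $y$ not adjacent to $x$ yields a v-structure mismatch at $y$; and a parent $z$ of $y$ with $x \to z$ yields either a directed cycle in $G'$, a conflicting arc whose head precedes $y$, or a conflicting arc into $y$ from a node after $x$, contradicting the choice of $x$. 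A single min or max over one coordinate does not suffice, and a wrong extremal criterion genuinely fails, so this is not routine bookkeeping. The outer induction on the number of disagreeing arcs is fine, but without this selection lemma the construction of the sequence is unproved.
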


\begin{definition}[Separation of covered edges]
We say that an intervention $S \subseteq V$ \emph{separates} a covered edge $u \sim v$ if $|\{u,v\} \cap S| = 1$.
That is, \emph{exactly} one of the endpoints is intervened by $S$.
We say that an intervention set $\cI$ separates a covered edge $u \sim v$ if there exists $S \in \cI$ that separates $u \sim v$.
\end{definition}

\subsubsection*{Relationship between searching and verification}
Recall \cref{defn:search-problem} and \cref{defn:verification-problem}.
The verification number is a useful analytical tool for the search problem as $\nu_k(G^*)$ is a lower bound on the number of interventions used by an optimal search algorithm.
Furthermore, since the search problem needs to fully orient $\cE(G^*)$ regardless of which DAG is the ground truth, any search algorithm given $\cE(G^*)$ requires \emph{at least} $\min_{G \in [G^*]} \nu_k(G)$ interventions, even if it is adaptive and randomized.
In fact, the \emph{strongest possible universal lower bound} guarantee one can prove must be \emph{at most} $\min_{G \in [G^*]} \nu_k(G)$ and the \emph{strongest possible universal upper bound} guarantee one can prove must be \emph{at least} $\max_{G \in [G^*]} \nu_k(G)$.
Note that if the search algorithm is \emph{non-adaptive}, then it trivially needs \emph{at least} $\max_{G \in [G^*]} \nu_k(G)$ interventions.

\textbf{Example}
Consider the graph $G^*$ in \cref{fig:standing-windmill} where the essential graph $\cE(G^*)$ representing the MEC $[G^*]$ is the standing windmill\footnote{To be precise, it is the Wd(3,3) windmill graph with an additional edge from the center.}.
Now, consider only atomic interventions.
We will later show that the minimum verification number of a DAG is the size of the minimum vertex cover of its covered edges (see \cref{thm:efficient-optimal-atomic}).
One can check that $\nu_1(G^*) = \nu_1(G_1) = 4$ while $\nu_1(G_2) = 3$.
In fact, we actually show that $\min_{G \in [G^*]} \nu_1(G) = 3$ and $\max_{G \in [G^*]} \nu_1(G) = 4$ in \cref{sec:appendix-standing-windmill}.
Thus, any search algorithm using only atomic interventions on $\cE(G^*)$ needs at least 3 atomic interventions.

\subsection{Related work}
\label{sec:related-work}

\cref{tab:upper-bounds} and \cref{tab:lower-bounds} in \cref{sec:appendix-landscape} summarize\footnote{Some known results are discussed in further detail below instead of being summarized in the table format.} the existing upper (sufficient) and lower (worst case necessary) bounds on the size ($|\cI|$, or $\E[|\cI|]$ for randomized algorithms) of intervention sets that fully orient a given essential graph.
These lower bounds are ``worst case'' in the sense that there exists a graph, typically a clique, which requires the stated number of interventions.
Observe that there are settings where adaptivity\footnote{Given an essential graph $\cE(G^*)$, non-adaptive algorithms decide a \emph{set} of interventions without looking at the outcomes of the interventions.
Meanwhile, adaptive algorithms can provide a \emph{sequence} of interventions one-at-a-time, possibly using any information gained from the outcomes of earlier chosen interventions.} and randomization strictly improves the number of required interventions.

\textbf{Separating systems}
\cite{hyttinen2013experiment} drew connections between causal discovery via interventions and the concept of separating systems from the combinatorics literature.
This was extended by \cite{shanmugam2015learning} to the bounded size and adaptive settings.
An $(n,k)$-separating system is a Boolean matrix with $n$ columns where each row has at most $k$ ones, indicating which vertex is to be intervened upon.
Using their proposed separating system construction based on ``label indexing'', \cite{shanmugam2015learning} showed that roughly $\frac{n}{k} \log_{\frac{n}{k}} n$ interventions is sufficient to fully an essential graph $G$ with bounded size interventions.
On cliques (i.e.\ worst case lower bound), \cite{shanmugam2015learning} showed that the bound is tight while only roughly $\frac{\chi(\cE(G))}{k} \log_{\frac{\chi(\cE(G))}{k}} \chi(\cE(G))$ interventions are necessary for general graphs\footnote{Note that there is a slight gap between $\frac{\chi(\cE(G))}{k} \log_{\frac{\chi(\cE(G))}{k}} \chi(\cE(G))$ and $\frac{n}{k} \log_{\frac{n}{k}} n$ on general graphs.}, even if the interventions are chosen adaptively or in a randomized fashion.

\textbf{Universal bounds for minimum sized atomic interventions}
Beyond worst case lower bounds, recent works have studied universal bounds for orienting essential graphs $\cE(G^*)$ using atomic interventions \cite{squires2020active,porwal2021almost}.
These universal bounds depend on graph parameters of $\cE(G)$ beyond the number of nodes $n$.
\cite{squires2020active} showed that search algorithms must use at least $\sum_{H \in CC(\cE(G^*))} \lfloor \frac{\omega(H)}{2} \rfloor$ interventions, where $H$ is a chain component of $\cE(G^*)$ and the summation across chain components is a consequence of \cref{lem:hauser-bulmann-strengthened}.
They also introduced a graph concept called directed clique trees and designed an adaptive, deterministic algorithm.
On intersection-incomparable chordal graphs, their algorithm outputs an intervention set of size $\cO(\log_2 (\max_{H \in CC(\cE(G^*))} \omega(H)) \cdot \nu_1(G^*))$.
More recently, \cite{porwal2021almost} introduced the notion of clique-block shared-parents orderings and showed that any search algorithm for an essential graph $\cE(G^*)$ with $r$ maximal cliques requires at least $\lceil \frac{n-r}{2} \rceil$ interventions and $\nu_1(G) \leq n-r$ for any $G \in [G^*]$.

\textbf{Non-atomic interventions}
The randomized algorithm of \cite{hu2014randomized} fully orients an essential graph using $\cO(\log(\log(n)))$ unbounded interventions in expectation.
Building upon this, \cite{shanmugam2015learning} shows that $\cO(\frac{n}{k} \log(\log(k)))$ bounded sized interventions (each involving at most $k$ nodes) suffice.

\textbf{Additive vertex costs}
\cite{kocaoglu2017cost,ghassami2018budgeted,lindgren2018experimental} studied the \emph{non-adaptive} search setting where vertices may have different intervention costs and intervention costs accumulate additively.
\cite{ghassami2018budgeted} studied the problem of maximizing number of oriented edges given a budget of atomic interventions while \cite{kocaoglu2017cost,lindgren2018experimental} studied the problem of finding a minimum cost (bounded size) intervention set that fully orients the essential graph.
\cite{lindgren2018experimental} showed that computing the minimum cost intervention set is NP-hard and gave search algorithms with constant approximation factors.

\textbf{Other related work}
\cite{hu2014randomized,katz2019size} showed that Erd\H{o}s-R\'{e}nyi graphs can be easily oriented.
\section{Results}
\label{sec:results}

\subsection{Verification}

Our core contribution for the verification problem is deriving an interesting connection between the covered edges and verifying sets.
We show importance of this connection by using it to derive several novel results on finding optimal verifying sets in various settings such as bounded size interventions and when vertices have varying interventional costs.
For detailed proofs, see \cref{sec:appendix-verification}.

\begin{restatable}{theorem}{optimalverifyingset}
\label{thm:optimal-verifying-set}
Fix an essential graph $\cE(G^*)$ and $G \in [G^*]$.
An intervention set $\cI$ is a verifying set for $G$ if and only if $\cI$ is a set that separates every covered edge of $G$ that is unoriented in $\cE(G^*)$.
\end{restatable}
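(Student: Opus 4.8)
The plan is to reason through the operational description of the interventional essential graph: $\cE_{\cI}(G)$ is obtained from the skeleton of $G$ by first orienting all v-structures together with every edge cut by some $S \in \cI$ (each oriented as in $G$), and then taking the closure under the Meek rules. This description is deterministic in the skeleton together with the set of initially oriented arcs, and it matches the paper's convention that an intervention $S$ reveals exactly the orientations of the edges cut by $(S, V\setminus S)$. The one structural fact I will lean on repeatedly is that a covered edge $u \sim v$ of $G$ never participates in a v-structure: any parent of $v$ other than $u$ lies in $Pa(v)\setminus\{u\} = Pa(u)\setminus\{v\}$ and is hence adjacent to $u$, so $u \to v$ cannot be the tail of a v-structure at $v$, and symmetrically for $v \to u$. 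Consequently a covered edge is never among the initially oriented arcs unless it is actually cut by some $S \in \cI$.

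For the ``only if'' direction I argue the contrapositive. Suppose a covered edge $u \sim v$ of $G$ that is unoriented in $\cE(G^*)$ is \emph{not} separated by $\cI$, and let $G'$ be the DAG obtained by reversing this single covered edge. By \cref{lem:sequence} (a covered-edge reversal keeps us in the same MEC and preserves acyclicity) we have $G' \in [G^*]$, so $G$ and $G'$ share the same skeleton and the same v-structures. Because no $S \in \cI$ separates $u \sim v$, this edge is cut by no intervention, so $G$ and $G'$ induce exactly the same set of initially oriented arcs: identical v-structures, and identical cut edges since the two DAGs agree on every edge other than $u \sim v$. Determinism of the Meek closure then forces $\cE_{\cI}(G) = \cE_{\cI}(G')$ as graphs. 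This common essential graph is consistent with both $G$ and $G'$, which disagree on the orientation of $u \sim v$, so it must leave $u \sim v$ unoriented. Hence $\cE_{\cI}(G) \neq G$ and $\cI$ is not a verifying set.

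For the ``if'' direction, suppose $\cI$ separates every covered edge of $G$ that is unoriented in $\cE(G^*)$, yet $\cE_{\cI}(G) \neq G$. Then $\cE_{\cI}(G)$ has an unoriented edge, so there is a DAG $G' \neq G$ consistent with all of its orientations; since $\cE(G^*)=\cE(G)=\cE_{\emptyset}(G)$ and its orientations are contained in those of $\cE_{\cI}(G)$, we get $G' \in [G^*]$. Invoking the monotone refinement of Chickering's transformational characterization underlying \cref{lem:sequence} (the reversal sequence can be chosen so that the number of differing arcs strictly decreases), there is a covered edge $u \sim v$ of $G$ whose orientation is reversed in $G'$. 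As $G$ and $G'$ are both consistent with $\cE_{\cI}(G)$ but disagree on $u \sim v$, this edge is unoriented in $\cE_{\cI}(G)$ and hence in $\cE(G^*)$. By hypothesis $\cI$ separates it, so it is a cut edge and therefore oriented in $\cE_{\cI}(G)$ --- a contradiction. Hence $\cE_{\cI}(G) = G$, i.e.\ $\cI$ is a verifying set.

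The two routine ingredients are the Meek-closure description of $\cE_{\cI}(G)$ and the observation that covered edges avoid v-structures; the genuinely delicate step is the ``if'' direction, where I must extract a covered edge of $G$ that is \emph{actually} flipped in the competing DAG $G'$. Merely knowing that $G \neq G'$ lie in the same MEC is not enough, since an arbitrary reversal sequence might flip an edge and later flip it back, so I appeal to the monotone version of Chickering's result to guarantee a covered edge of $G$ that is genuinely reoriented in $G'$. Confirming that this edge is unoriented in $\cE_{\cI}(G)$ (and therefore in $\cE(G^*)$), so that it falls under the separation hypothesis, is exactly what closes the argument.
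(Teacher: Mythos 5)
Your proposal is correct, but it takes a genuinely different route from the paper. The paper proves necessity (\cref{lem:necessary}) by a direct case analysis over all four Meek rules, showing that none of them can ever orient an unseparated covered edge; you instead give an indistinguishability argument: reversing the unseparated covered edge yields $G' \in [G^*]$ with identical v-structures and identical revealed cut-edge orientations, so $\cE_{\cI}(G) = \cE_{\cI}(G')$ must leave that edge undirected. For sufficiency (\cref{lem:sufficient}) the paper runs an induction along a topological order of $G$ with a case analysis showing each unoriented non-covered edge gets oriented by R1 or R2; you instead take any competing DAG $G'$ consistent with $\cE_{\cI}(G)$ and extract, via the monotone form of Chickering's transformational characterization, a covered edge of $G$ that is genuinely flipped in $G'$, which must be both unoriented and separated --- a contradiction. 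Your version is shorter and more conceptual, and you are right that the delicate point is needing the \emph{monotone} refinement: \cref{lem:sequence} as stated in the paper (mere existence of some reversal sequence) would not suffice, since a sequence could flip an edge and flip it back; the stronger statement is indeed what Chickering proved, but it is a stronger black box than what the paper invokes. You also lean on the operational description of $\cE_{\cI}(G)$ as the Meek closure of the v-structures and cut edges, and on the fact that an $\cI$-essential graph with an undirected edge admits a second consistent DAG in the $\cI$-MEC; both are standard in this literature (and implicitly used by the paper too), but stating them as the facts you rely on would tighten the write-up. What the paper's more pedestrian proof buys in exchange is self-containedness --- it needs only the definition of covered edges and the Meek rules --- and an explicit identification of which rules do the orienting, which the authors reuse in their later analysis.
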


Together with \cref{lem:sequence} (any undirected edge in $\cE(G^*)$ is a covered edge for \emph{some} $G \in [G^*]$), \cref{thm:optimal-verifying-set} implies a simple alternative proof for an earlier known result that characterizes non-adaptive search algorithms via separating systems \cite{hyttinen2013experiment,shanmugam2015learning}: any non-adaptive search algorithm, which has \emph{no} knowledge of $G^*$, should separate \emph{every} undirected edge in $\cE(G^*)$.

Another immediate application of \cref{thm:optimal-verifying-set} is the following result for the verification problem.

\begin{restatable}{corollary}{solvingverification}
\label{cor:solving-verification}
Given an essential graph $\cE(G^*)$ of an unknown ground truth DAG $G^*$ and a causal DAG $G \in [G^*]$, we can test if $G \stackrel{?}{=} G^*$ by intervening on any verifying set of $G$.
Furthermore, in the worst case, \emph{any} algorithm that correctly resolves $G \stackrel{?}{=} G^*$ needs at least $\nu(G)$ interventions.
\end{restatable}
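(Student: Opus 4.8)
The plan is to prove the two assertions separately: the sufficiency of intervening on a verifying set, and the matching worst-case lower bound of $\nu(G)$ interventions.

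For the sufficiency part, I would use the defining property of a verifying set directly. Given a verifying set $\cI$ of $G$, the test is to perform the interventions in $\cI$ on the unknown ground truth, observe the resulting $\cI$-essential graph $\cE_{\cI}(G^*)$, and declare $G = G^*$ precisely when $\cE_{\cI}(G^*) = G$. Correctness follows by checking both directions: if $G = G^*$, then the verifying-set property gives $\cE_{\cI}(G^*) = \cE_{\cI}(G) = G$, so the test outputs ``yes''; conversely, if $\cE_{\cI}(G^*) = G$, then since $G$ is fully oriented and $G^*$ is always consistent with its own interventional data, $G^*$ is the unique DAG in this trivial interventional equivalence class, forcing $G^* = G$.

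For the lower bound, I would argue by indistinguishability. Suppose some (possibly adaptive) algorithm resolves $G \stackrel{?}{=} G^*$ correctly while running on the true graph $G^* = G$ and performing a total intervention set $\cI'$ with $|\cI'| < \nu(G)$. Since $\nu(G)$ is the minimum verifying-set size, $\cI'$ is not a verifying set, so by \cref{thm:optimal-verifying-set} the interventional essential graph $\cE_{\cI'}(G)$ is not fully oriented. Because every $\cI'$-essential graph is a chain graph with chordal chain components, any unoriented edge admits two consistent acyclic orientations, so there is a DAG $G' \neq G$ with $\cE_{\cI'}(G') = \cE_{\cI'}(G)$. Moreover $G' \in [G^*]$, since the $\cI'$-essential equivalence class refines the observational one; hence $\cE(G') = \cE(G^*)$, and the expert's claim $G$ is unchanged across the two candidate ground truths.

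Finally I would close the loop using the nesting of interventional equivalence classes to handle adaptivity. For every prefix $\mathcal{J} \subseteq \cI'$ of the interventions actually performed, $G$ and $G'$ remain $\mathcal{J}$-Markov equivalent -- coarsening $\cI'$ to $\mathcal{J}$ only enlarges the equivalence class -- so $\cE_{\mathcal{J}}(G) = \cE_{\mathcal{J}}(G')$. Consequently, running the same algorithm on the true graph $G'$ yields identical observations at every adaptive step, making it select exactly the same interventions and output the same answer ``yes'', which is incorrect since $G' \neq G$. This contradiction shows that every correct algorithm must use at least $\nu(G)$ interventions when $G^* = G$, establishing the claimed worst-case bound (the same indistinguishability forces a wrong answer with positive probability for randomized algorithms). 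I expect the main obstacle to be this middle step: rigorously producing the indistinguishable alternative $G'$ and confirming $G' \in [G^*]$, where \cref{thm:optimal-verifying-set} and the structural facts about $\cI$-essential graphs do the real work.
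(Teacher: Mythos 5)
Your proposal is correct and follows essentially the same route as the paper: intervene on a verifying set and check whether the resulting interventional essential graph is fully oriented and equal to $G$, and for the lower bound exhibit an indistinguishable $G' \in [G^*]$ obtained by flipping an unoriented edge that the sub-$\nu(G)$ intervention set fails to separate (the paper's version of this is the covered-edge reversal guaranteed by \cref{thm:optimal-verifying-set} and \cref{lem:sequence}). In fact your write-up is more complete than the paper's, which leaves the lower-bound construction and the treatment of adaptivity (via the nesting $\cE_{\mathcal{J}}(G) = \cE_{\mathcal{J}}(G')$ for prefixes $\mathcal{J} \subseteq \cI'$) only as a sketch.
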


The above corollary provides a solution to the verification problem in terms of verifying sets and in the following we give efficient algorithms for computing these verifying sets that are optimal in the atomic setting and are near-optimal in the case of bounded size.

\begin{restatable}{theorem}{efficientoptimalatomic}
\label{thm:efficient-optimal-atomic}
Fix an essential graph $\cE(G^*)$ and $G \in [G^*]$.
An atomic intervention set $\cI$ is a minimal sized verifying set for $G$ if and only if $\cI$ is a minimum vertex cover of unoriented covered edges of $G$.
A minimal sized atomic verifying set can be computed in polynomial time.
\end{restatable}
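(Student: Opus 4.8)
The plan is to reduce everything to \cref{thm:optimal-verifying-set} and then to a minimum vertex cover computation whose tractability hinges on a structural fact about covered edges.

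First, for the characterization. Write $\cC$ for the set of covered edges of $G$ that are unoriented in $\cE(G^*)$, and for an atomic intervention set $\cI$ let $V_{\cI} = \cup_{S \in \cI} S$ denote the intervened vertices. Since each $S \in \cI$ is a singleton, a covered edge $u \sim v$ satisfies $|\{u,v\} \cap S| = 1$ for some $S \in \cI$ if and only if $\{u\} \in \cI$ or $\{v\} \in \cI$, i.e.\ if and only if $V_{\cI} \cap \{u,v\} \neq \emptyset$ (a singleton can never contain both endpoints). Hence, by \cref{thm:optimal-verifying-set}, $\cI$ is an atomic verifying set for $G$ if and only if $V_{\cI}$ is a vertex cover of $\cC$. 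It then remains to match up the notions of minimality. If $\cI$ is a minimal sized atomic verifying set, its singletons are distinct (deleting a repeated singleton leaves $V_{\cI}$ unchanged, hence by the above preserves the verifying property, while reducing $|\cI|$), so $|\cI| = |V_{\cI}|$; moreover $V_{\cI}$ must be a \emph{minimum} vertex cover, since any smaller vertex cover $C$ would give the strictly smaller atomic verifying set $\{\{c\} : c \in C\}$. Conversely, any minimum vertex cover $C$ yields the atomic verifying set $\{\{c\} : c \in C\}$, which is minimal because every atomic verifying set $\cI$ satisfies $|\cI| \geq |V_{\cI}| \geq |C|$. This establishes the stated equivalence.

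Second, for efficient computation, the previous paragraph shows it suffices to compute a minimum vertex cover of the graph $(V, \cC)$, which is constructible in polynomial time (covered edges and the edges unoriented in $\cE(G^*)$ are both polynomial-time computable). The main obstacle, and the crux of the proof, is that minimum vertex cover is NP-hard in general; I would resolve this by proving the structural lemma that the covered edges of any DAG $G$ form a forest, so that $(V, \cC)$ --- a subgraph of this forest --- is itself a forest, on which minimum vertex cover is polynomial-time solvable (forests are bipartite, so one may invoke König's theorem with bipartite maximum matching, or run a direct linear-time tree dynamic program).

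To prove the forest lemma, fix a topological ordering $\pi$ of $G$ and orient every covered edge $\{u,v\}$ from lower to higher $\pi$-value, matching its orientation $u \to v$ in $G$. For such a covered edge we have $v \notin Pa(u)$ and $u \in Pa(v)$, so the covered-edge condition $Pa(u) \setminus \{v\} = Pa(v) \setminus \{u\}$ simplifies to the key identity $Pa(v) = Pa(u) \cup \{u\}$. I then claim each vertex $v$ has at most one incoming covered edge: if $u_1 \to v$ and $u_2 \to v$ were both covered with $\pi(u_1) < \pi(u_2)$, then $u_2 \in Pa(v) = Pa(u_1) \cup \{u_1\}$ forces $u_2 \in Pa(u_1)$, i.e.\ $u_2 \to u_1$, contradicting $\pi(u_1) < \pi(u_2)$. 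Thus the oriented covered-edge graph is acyclic (it respects $\pi$) with maximum in-degree at most one, and any such directed graph is a forest: in each connected component the $\pi$-minimum vertex has in-degree $0$, so the number of edges (equal to the number of in-degree-one vertices) is strictly less than the number of vertices, ruling out an undirected cycle. Assembling the two parts then yields both the characterization and the polynomial-time algorithm.
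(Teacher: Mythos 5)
Your proposal is correct and follows essentially the same route as the paper: both reduce the problem via \cref{thm:optimal-verifying-set} to computing a minimum vertex cover of the unoriented covered edges of $G$, and both obtain tractability from the structural fact that the covered edges of a DAG form a forest (your in-degree-at-most-one argument via $Pa(v) = Pa(u) \cup \{u\}$ is the same observation as the first part of \cref{lem:covered-edges-properties}, just phrased through a topological order rather than a mutual-parenthood contradiction). Your write-up is somewhat more explicit about matching minimal verifying sets to minimum vertex covers, but the substance is identical.
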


Our result provides the first efficient algorithm for computing minimum sized atomic verifying set for general graphs.
Previously, efficient algorithms for computing minimum sized atomic verifying sets were only known for simple graphs such as cliques and trees. For general graphs, only a brute force algorithm is known \cite[Appendix F]{squires2020active} which takes exponential time in the worst case\footnote{\cref{thm:efficient-optimal-atomic} also provides a rigorous justification to the observation of \cite{squires2020active} that ``In general, the size of an [atomic verifying set] cannot be calculated from just its essential graph''. This is because essential graphs could imply minimum vertex covers of different sizes (see \cref{fig:standing-windmill}).}.
In contrast to optimal algorithms, \cite{porwal2021almost} provides an efficient algorithm that returns a verifying set of size at most 2 times that of the optimum. 

\begin{restatable}{theorem}{efficientnearoptimalbounded}
\label{thm:efficient-near-optimal-bounded}
Fix an essential graph $\cE(G^*)$ and $G \in [G^*]$.
If $\nu_1(G) = \ell$, then $\nu_k(G) \geq \lceil \frac{\ell}{k} \rceil$ and there exists a polynomial time algo.\ to compute a bounded size intervention set $\cI$ of size $|\cI| \leq \lceil \frac{\ell}{k} \rceil + 1$.
\end{restatable}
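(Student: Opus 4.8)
The plan is to leverage the atomic characterization from \cref{thm:efficient-optimal-atomic}, which tells us that $\nu_1(G) = \ell$ equals the minimum vertex cover of the unoriented covered edges of $G$. Let me call this optimal vertex cover $C$, so $|C| = \ell$. By \cref{thm:optimal-verifying-set}, a bounded size intervention set is verifying precisely when it separates every unoriented covered edge of $G$. The key observation is that since $C$ is a vertex cover of these edges, any intervention set that separates each vertex of $C$ from the rest of the graph (i.e.\ places each $c \in C$ in an intervention that excludes all other covered-edge endpoints) will separate every covered edge. So the task reduces to covering the $\ell$ special vertices in $C$ using interventions of size at most $k$, in a way that still guarantees separation.

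First I would establish the lower bound $\nu_k(G) \geq \lceil \ell/k \rceil$. The natural approach is: any verifying set $\cI$ with bounded size $k$ must separate all unoriented covered edges, so by \cref{thm:efficient-optimal-atomic} the set of intervened vertices must in some sense ``hit'' the covered-edge structure. More carefully, I would argue that if we had a bounded verifying set using fewer than $\lceil \ell/k \rceil$ interventions, we could extract from it a collection of separating constraints contradicting the minimality of the atomic vertex cover $\ell$; one clean way is to observe that each intervention $S \in \cI$ can ``account for'' at most $k$ units of the required separation, and the total required separation is governed by $\ell$.

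Next I would construct the upper bound. The idea is to take the $\ell$ vertices of the minimum vertex cover $C$ and partition them into $\lceil \ell/k \rceil$ groups of size at most $k$. The subtlety—and the main obstacle—is that simply intervening on each group $S$ does \emph{not} automatically separate a covered edge $u \sim v$ when \emph{both} $u$ and $v$ happen to lie in the same group $S$ (then $|\{u,v\}\cap S| = 2 \neq 1$). To handle this, I would invoke a standard separating-system / labeling trick in the spirit of \cite{shanmugam2015learning}: rather than a single partition, use a second ``shifted'' partition (or equivalently a small separating family over the $\ell$ cover vertices) so that every pair of cover vertices is separated by at least one intervention. A two-coloring/balanced-split argument shows one extra intervention suffices, yielding total size $\lceil \ell/k \rceil + 1$. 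I would verify that separating all \emph{pairs within $C$}, together with the fact that every covered edge has at least one endpoint in $C$, guarantees every covered edge is separated: if the edge has exactly one endpoint in $C$, that endpoint lies in some intervention excluding the other (non-cover) endpoint; if it has both endpoints in $C$, the pairwise-separation property handles it.

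The crux is therefore the combinatorial construction achieving the $+1$ overhead rather than a larger additive or multiplicative loss. I expect the cleanest route is to reduce ``separate all pairs among $\ell$ labeled vertices with blocks of size $\leq k$'' to a small separating system and check that $\lceil \ell/k \rceil + 1$ blocks suffice, e.g.\ by taking the $\lceil \ell/k \rceil$ consecutive blocks plus one block formed by splitting each block and regrouping. Finally, I would confirm polynomial running time: the minimum vertex cover of covered edges is computable in polynomial time via \cref{thm:efficient-optimal-atomic}, and the partitioning/regrouping step is clearly polynomial.
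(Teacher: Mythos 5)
Your lower bound argument is essentially the paper's: fewer than $\lceil \ell/k \rceil$ interventions of size $\leq k$ touch fewer than $\ell$ vertices, and since atomizing an intervention set only increases the set of separated covered edges (\cref{lem:atomic-only-helps}), the union of intervened vertices would be a vertex cover of the unoriented covered edges of size less than $\ell$, contradicting \cref{thm:efficient-optimal-atomic}. That part is fine, though you should state explicitly that the union of intervened vertices must form such a vertex cover.

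The upper bound, however, has a genuine gap. You reduce to the problem of \emph{separating all pairs} among the $\ell$ cover vertices using blocks of size at most $k$, and claim $\lceil \ell/k \rceil + 1$ blocks suffice. This is combinatorially impossible in general. If $m$ sets separate every pair of $\ell$ elements, the characteristic vectors of the elements must be pairwise distinct binary vectors of length $m$, and the size constraint forces each coordinate to have at most $k$ ones, so the total weight is at most $km$. Take $\ell = k^2$ and $m = \lceil \ell/k \rceil + 1 = k+1$: only $k+2$ distinct vectors have weight $\leq 1$, so the remaining $k^2 - k - 2$ vectors each have weight $\geq 2$, giving total weight at least $2k^2 - k - 3$, which exceeds $km = k^2 + k$ for all $k \geq 4$. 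So no such separating family exists, and your construction cannot reach the claimed bound. The missing idea is structural, not combinatorial: by \cref{lem:covered-edges-properties}, the edge-induced subgraph of the covered edges is a \emph{forest}, hence bipartite. The paper 2-colors this forest, splits the minimum vertex cover into the two color classes, and greedily packs each class into groups of size $k$. Since two vertices in the same color class are never joined by a covered edge, grouping them into one intervention loses no separations — one never needs to separate all pairs within $C$, only the pairs that are actual covered edges, and those always straddle the two classes. The greedy packing yields at most $\lfloor \ell/k \rfloor$ full groups plus at most two leftover groups (one per class), which a short case analysis shows is at most $\lceil \ell/k \rceil + 1$ interventions (\cref{lem:bounded-size-ub}). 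Without invoking the forest/bipartite structure of the covered edges, the $+1$ overhead is not attainable by your route.
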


To the best of our knowledge, our work provides the first known efficient algorithm for computing near-optimal bounded sized verifying sets for general graphs.
Furthermore, we note that, for every $k$, there exists a family of graphs where the optimum solution requires at least $\lceil \frac{\ell}{k} \rceil + 1$ bounded size interventions.
Thus, our upper bound is tight in the worst case (see \cref{fig:near-optimal} in \cref{sec:near-optimal-bounded}).

Beyond minimal sized interventions, a natural and much broader setting in causal inference is one where different vertices have varying intervention costs, e.g.\ in a smoking study, it is easier to modify a subject's diet than to force the subject to smoke (or stop smoking).
Formally, one can define a weight function on the vertices $w: V \to \R$ which overloads to $w(S) = \sum_{v \in S} w(v)$ on interventions and $w(\cI) = \sum_{S \in \cI} S$ on intervention sets.
Such an additive cost structure has been studied by \cite{kocaoglu2017cost,ghassami2018budgeted}.
Consider an essential graph which is a star graph on $n$ nodes where the leaves have cost 1 and the root has cost significantly larger than $n$.
For atomic verifying sets, we see that the \emph{minimum cost} verifying set is to intervene on the leaves while the \emph{minimum size} verifying set is to simply intervene on the root.
Since one may be more preferred over the other, depending on the actual real-life situation, we propose to find a verifying set $\cI$ which minimizes
\begin{equation}
\label{eq:generalized-cost}
\alpha \cdot w(\cI) + \beta \cdot |\cI| \qquad \text{where $\alpha, \beta \geq 0$}
\end{equation}
so as to explicitly trade-off between the cost and size of the intervention set.
This objective also naturally allows the constraint of bounded size interventions by restricting $|S| \leq k$ for all $S \in \cI$.
Note that the earlier results on minimum size verifying sets correspond to $\alpha = 0$ and $\beta = 1$.
Interestingly, the techniques we developed in the minimum size setting generalizes to this broader setting.
Furthermore, our framework of covered edges allow us to get simple proofs.

\begin{restatable}{theorem}{efficientoptimalatomiccost}
\label{thm:efficient-optimal-atomic-cost}
Fix an essential graph $\cE(G^*)$ and $G \in [G^*]$.
An atomic verifying set $\cI$ for $G$ that minimizes \cref{eq:generalized-cost} can be computed in polynomial time.
\end{restatable}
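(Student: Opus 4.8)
The plan is to reduce the cost-minimization problem to a single weighted minimum vertex cover instance on the graph of unoriented covered edges, and then to exploit the structure of that graph to solve the instance in polynomial time.

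First I would specialize \cref{thm:optimal-verifying-set} to the atomic setting. Let $C$ be the graph on $V$ whose edge set consists precisely of the covered edges of $G$ that are unoriented in $\cE(G^*)$; these can be identified in polynomial time. A singleton intervention $\{w\}$ separates an edge $u \sim v$ exactly when $w \in \{u,v\}$, so an atomic intervention set $\cI$ (a collection of singletons) separates every edge of $C$ if and only if the intervened vertex set $T = \bigcup_{S \in \cI} S$ is a vertex cover of $C$. By \cref{thm:optimal-verifying-set}, $\cI$ is then an atomic verifying set for $G$ if and only if $T$ is a vertex cover of $C$. Since each singleton occurs at most once in $\cI$, we have $|\cI| = |T|$ and $w(\cI) = \sum_{v \in T} w(v)$, so the objective in \cref{eq:generalized-cost} equals $\sum_{v \in T} (\alpha \cdot w(v) + \beta)$.

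Next I would fold both terms of \cref{eq:generalized-cost} into a single additive vertex weight $w'(v) = \alpha \cdot w(v) + \beta$. Minimizing \cref{eq:generalized-cost} over atomic verifying sets is then exactly the problem of finding a minimum $w'$-weight vertex cover of $C$: an optimal cover $T^*$ yields the optimal atomic verifying set $\{\, \{v\} : v \in T^* \,\}$, and conversely every atomic verifying set arises this way. The key structural fact (the same one underlying the polynomial-time claim of \cref{thm:efficient-optimal-atomic}) is that the covered edges of a DAG form a forest, so $C$, being a subgraph of them, is also a forest. Minimum-weight vertex cover is solvable in polynomial time on forests for arbitrary real vertex weights, for instance by a leaf-to-root dynamic program on each tree, or because forests are bipartite and the vertex-cover linear program is integral there. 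Running this computation on $C$ with weights $w'$ and emitting the corresponding singletons produces the desired optimal atomic verifying set in polynomial time.

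The hard part is not the reduction, which is transparent, but tractability: general weighted vertex cover is NP-hard, so the entire argument rests on the forest structure of the covered-edge graph, which is what makes the weighted problem solvable exactly and efficiently. The only additional care needed is to check that collapsing the size penalty $\beta \cdot |\cI|$ into a uniform additive per-vertex cost is legitimate, and this follows immediately from the identity $|\cI| = |T|$ for nonredundant atomic interventions established in the first step.
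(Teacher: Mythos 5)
Your proposal is correct and follows essentially the same route as the paper: invoke \cref{thm:optimal-verifying-set} to reduce the problem to a weighted minimum vertex cover on the subgraph of unoriented covered edges, fold the size penalty $\beta \cdot |\cI|$ into a per-vertex weight $\alpha \cdot w(v) + \beta$, and use the forest structure from \cref{lem:covered-edges-properties} to solve it by dynamic programming. The paper's proof is a two-line sketch that leaves the reweighting implicit (``accounting for \cref{eq:generalized-cost}''), so your explicit verification that $|\cI| = |T|$ and $w(\cI) = \sum_{v \in T} w(v)$ for nonredundant atomic intervention sets is a faithful and slightly more careful rendering of the same argument.
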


\begin{restatable}{theorem}{efficientnearoptimalboundedcost}
\label{thm:efficient-near-optimal-bounded-cost}
Fix an essential graph $\cE(G^*)$ and $G \in [G^*]$.
Suppose the optimal bounded size intervention set that minimizes \cref{eq:generalized-cost} costs $OPT$.
Then, there exists a polynomial time algorithm that computes a bounded size intervention set with total cost $OPT + 2 \beta$.
\end{restatable}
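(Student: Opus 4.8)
The plan is to reduce the problem to a weighted vertex cover computation on the forest of unoriented covered edges, and then to split that cover into bounded-size interventions using a proper $2$-colouring. Throughout I will use the structural fact underlying \cref{thm:efficient-optimal-atomic}, namely that the unoriented covered edges of $G$ form a forest $F$ (in particular, $F$ is bipartite), together with the characterization of \cref{thm:optimal-verifying-set}: an intervention set is a verifying set for $G$ exactly when it separates every edge of $F$. The key observation is that if an intervention $S$ is an \emph{independent set} in $F$, then $S$ separates every covered edge incident to it, since for such an edge $u \sim v$ with $u \in S$ independence forces $v \notin S$, so exactly one endpoint lies in $S$. Hence, if we partition a vertex cover $C$ of $F$ into independent sets, each serving as one intervention, the resulting set $\cI$ separates every edge of $F$ and is therefore a verifying set.

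Define the modified weight $w'(v) = \alpha\, w(v) + \tfrac{\beta}{k}$ for each $v \in V$, which is nonnegative since $\alpha,\beta \geq 0$ and $w \geq 0$. First I would compute a minimum-$w'$ vertex cover $C$ of $F$; this is exactly an optimal atomic solution for the objective \cref{eq:generalized-cost} run with parameters $(\alpha, \beta/k)$ (recall atomic verifying sets correspond to vertex covers of $F$), so it can be found in polynomial time by \cref{thm:efficient-optimal-atomic-cost}, or directly by a dynamic program on the forest. Next, using a $2$-colouring of $F$, I split $C$ into two independent sets $C_1, C_2$ and chunk each into groups of size at most $k$; this yields
\[
|\cI| \;\leq\; \left\lceil \tfrac{|C_1|}{k} \right\rceil + \left\lceil \tfrac{|C_2|}{k} \right\rceil \;\leq\; \left\lceil \tfrac{|C|}{k} \right\rceil + 1
\]
bounded-size interventions whose disjoint union is $C$, so that $w(\cI) = w(C)$.

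For the cost bound, the upper estimate is
\[
\alpha\, w(\cI) + \beta\, |\cI| \;\leq\; \alpha\, w(C) + \beta\Big(\tfrac{|C|}{k} + 2\Big) \;=\; w'(C) + 2\beta .
\]
It then remains to show $w'(C) \leq OPT$. Let $\cI^*$ be an optimal bounded-size verifying set and set $C^* = \bigcup_{S \in \cI^*} S$. Since $\cI^*$ separates every edge of $F$, each such edge has an endpoint in $C^*$, so $C^*$ is a vertex cover of $F$. Counting vertices with multiplicity gives $w(\cI^*) \geq w(C^*)$ (using $w \geq 0$) and $|\cI^*| \geq |C^*|/k$ (each intervention has at most $k$ vertices), whence
\[
OPT = \alpha\, w(\cI^*) + \beta\, |\cI^*| \;\geq\; \alpha\, w(C^*) + \tfrac{\beta}{k}\,|C^*| \;=\; w'(C^*) \;\geq\; w'(C),
\]
the last inequality because $C$ minimizes $w'$ over vertex covers of $F$. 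Combining the two displays gives total cost at most $OPT + 2\beta$, as claimed, and every step runs in polynomial time.

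The main obstacle is the lower-bound half: an arbitrary optimal $\cI^*$ may use overlapping interventions and need not be built from any vertex cover in an obvious way, so I must pass through $C^* = \bigcup \cI^*$ and argue separately that its $w'$-cost underestimates both the weight term (via $w \geq 0$) and the count term (via the size-$k$ budget). The other delicate point is guaranteeing that grouping the cover into interventions preserves separation, which is precisely why the interventions must be independent sets, and why bipartiteness of $F$ is invoked twice — once for the polynomial-time weighted vertex cover and once for the $2$-colouring.
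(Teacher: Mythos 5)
Your proposal is correct and follows essentially the same route as the paper: your minimum-$w'$ vertex cover with $w'(v)=\alpha w(v)+\beta/k$ is exactly the paper's optimal atomic verifying set for the modified objective, your flattening of $\cI^*$ into the vertex cover $C^*$ reproduces \cref{lem:modified-atomic}, and the $2$-colouring-plus-chunking step is the construction of \cref{lem:bounded-size-ub}. The only cosmetic difference is that you phrase the intermediate comparison directly in vertex-cover language rather than via an auxiliary atomic verifying set.
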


\subsection{Adaptive search}

Here, we study the unweighted search problem in causal inference where one wishes to fully orient an essential graph obtained from observational data while minimizing the number of interventions used.
Formally, we give an algorithm (\cref{alg:search-algo}) that fully orients $\cE(G^*)$ using at most a logarithmic multiplicative factor more interventions than $\nu_k(G^*)$, the number of (bounded size) interventions needed to \emph{verify} the ground truth $G^*$.
Since \emph{any} search algorithm will incur at least $\nu_k(G^*)$ interventions, our result implies that search is (almost, up to $\log n$ multiplicative approximation) as easy as the verification.
For detailed proofs, see \cref{sec:appendix-search}.

\begin{restatable}{theorem}{searchatomic}
\label{thm:search-atomic}
Fix an essential graph $\cE(G^*)$ with an unknown underlying ground truth DAG $G^*$.
Given $k=1$, \cref{alg:search-algo} runs in polynomial time and computes an atomic intervention set $\cI$ in a deterministic and adaptive manner such that $\cE_{\cI}(G^*) = G^*$ and $|\cI| \in \cO(\log(n) \cdot \nu_1(G^*))$.
\end{restatable}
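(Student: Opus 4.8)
The plan is to process the essential graph one chain component at a time and to recurse using balanced \emph{clique separators} of chordal graphs, charging the cost of the whole procedure against a strengthened form of the Squires et al.\ lower bound. By \cref{lem:hauser-bulmann-strengthened}, after any set of interventions $\cI$ performed so far the graph $\cE_{\cI}(G^*)$ is a chain graph whose chordal chain components can be oriented independently, so it suffices to describe one \emph{round} of \cref{alg:search-algo} that acts simultaneously on every current chain component, and then to count the number of rounds.

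\textbf{One round.} For each chain component $H \in CC(\cE_{\cI}(G^*))$, which is chordal, I would invoke the classical fact that a chordal graph on $m$ vertices admits a clique $C \subseteq V(H)$, computable in polynomial time, such that every connected component of $H[V(H) \setminus C]$ has at most $m/2$ vertices. The round intervenes atomically on each vertex of $C$. Intervening atomically on a vertex orients every edge incident to it, so after intervening on all of $C$ every edge incident to $C$ is oriented and all remaining unoriented edges lie strictly inside the components of $H[V(H)\setminus C]$; since Meek-rule propagation only orients \emph{more} edges, the chain components of the refined essential graph are subsets of these components and hence each has at most $m/2 \le |V(H)|/2$ vertices. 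The clique property is used only for the cost: $|C| \le \omega(H)$ automatically.

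\textbf{Depth and running time.} Across a round the size of the largest chain component at least halves, so after $\cO(\log n)$ rounds every chain component is a singleton and $\cE_{\cI}(G^*) = G^*$ is fully oriented. Each round is polynomial (separator computation plus Meek closure) and there are $\cO(\log n)$ rounds, giving the claimed running time.

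\textbf{Cost, and the main obstacle.} The heart of the argument is to bound the cost of \emph{every single round} by $\cO(\nu_1(G^*))$, so that the total is $\cO(\log n \cdot \nu_1(G^*))$. For a fixed round the cost is $\sum_H |C| \le \sum_H \omega(H)$, where $H$ ranges over $CC(\cE_{\cI}(G^*))$ for the set $\cI$ already performed. I would establish the strengthened lower bound
\[
\nu_1(G^*) \;\ge\; \sum_{H \in CC(\cE_{\cI}(G^*))} \left\lfloor \frac{\omega(H)}{2} \right\rfloor,
\]
valid for \emph{any} performed $\cI$. To prove it, note that appending a minimum verifying set of $G^*$ to $\cI$ still verifies $G^*$, so the residual cost of verifying $G^*$ from $\cE_{\cI}(G^*)$ is at most $\nu_1(G^*)$; by \cref{lem:hauser-bulmann-strengthened} this residual verification decomposes over the components $H$, and by the clique lower bound of \cite{squires2020active} (equivalently, \cref{thm:efficient-optimal-atomic} applied to the length-$(\omega(H)-1)$ path of covered edges inside a maximum clique of $G^*[V(H)]$) each component contributes at least $\lfloor \omega(H)/2 \rfloor$. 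Since every nonsingleton component has $\omega(H) \ge 2$, hence $\omega(H) \le 2\lfloor \omega(H)/2\rfloor + 1 \le 3\lfloor \omega(H)/2 \rfloor$, the round cost is $\sum_H \omega(H) \le 3 \sum_H \lfloor \omega(H)/2\rfloor \le 3\,\nu_1(G^*)$, and multiplying by the $\cO(\log n)$ rounds concludes. I expect the per-round charging to be the delicate step: the cliques $\omega(H)$ live in the interventionally refined intermediate graphs, yet the \emph{single} fixed quantity $\nu_1(G^*)$ must dominate the clique-based bound at every round simultaneously. Verifying that the residual verification cost both decomposes across components and never exceeds $\nu_1(G^*)$ — which is exactly what the strengthened (non-computable) lower bound provides — is the crux of the proof.
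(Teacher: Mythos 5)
Your proposal is correct and follows essentially the same route as the paper: recurse on $1/2$-clique separators of the chordal chain components for $\cO(\log n)$ rounds, and charge each round's cost $\sum_H |C| \le \sum_H \omega(H)$ against the strengthened lower bound $\nu_1(G^*) \ge \sum_{H \in CC(\cE_{\cI}(G^*))} \lfloor \omega(H)/2 \rfloor$, which the paper likewise proves by restricting a minimum verifying set to each component via \cref{lem:hauser-bulmann-strengthened} and invoking the clique/moral-DAG bound of \cite{squires2020active}. The only cosmetic difference is your constant $3$ versus the paper's $2$ (it uses $|K_H| \le \omega(H)-1$ from \cref{thm:chordal-separator}), which is immaterial to the $\cO(\cdot)$ claim.
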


\begin{restatable}{theorem}{searchbounded}
\label{thm:search-bounded}
Fix an essential graph $\cE(G^*)$ with an unknown underlying ground truth DAG $G^*$.
Given $k > 1$, \cref{alg:search-algo} runs in polynomial time and computes a bounded size intervention set $\cI$ in a deterministic and adaptive manner such that $\cE_{\cI}(G^*) = G^*$ and $|\cI| \in \cO(\log(n) \cdot \log (k) \cdot \nu_k(G^*))$.
\end{restatable}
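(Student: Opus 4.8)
The plan is to mirror the atomic construction of \cref{thm:search-atomic} and pay only an extra $\cO(\log k)$ factor for the bounded-size constraint. Recall that every $\cI$-essential graph is a chain graph with chordal chain components, and by \cref{lem:hauser-bulmann-strengthened} we may treat each chain component independently. The algorithm proceeds in rounds. In each round, for every unoriented chain component $H$ (a chordal graph on $m_H \geq 2$ vertices) I would compute a balanced clique separator $K_H$ -- a clique whose removal leaves connected pieces each of size at most $m_H/2$, which exists for every chordal graph and is computable in polynomial time via its clique tree. I then orient $K_H$ together with all edges incident to it, using bounded-size interventions supported on $\bigcup_H K_H$. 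Since the separator vertices are the only intervened vertices, every edge with exactly one endpoint in some $K_H$ is cut and hence oriented; because $K_H$ separates $H$, the still-unoriented graph breaks into the pieces of $H \setminus K_H$, each now its own chain component of size at most $m_H/2$. Thus component sizes halve every round and the recursion terminates after $\cO(\log n)$ rounds, with correctness ($\cE_{\cI}(G^*) = G^*$) following since we eventually orient every edge.

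The per-round work reduces to the following subproblem: given a disjoint union of cliques of total size $A = \sum_H |K_H|$, orient all their internal edges with interventions of size at most $k$. First I would bin the $A$ separator vertices arbitrarily into $g = \lceil A/k \rceil$ groups $B_1, \dots, B_g$ of size at most $k$ and intervene on each $B_i$; any two clique-mates lying in different bins are then separated by whichever bin contains one of them, orienting all such edges at cost $g$. For clique-mates lying in a common bin $B_i$, I would assign the vertices of $B_i$ distinct binary identifiers of length $\lceil \log_2 |B_i| \rceil \leq \lceil \log_2 k \rceil$ and, for each bit position, intervene on the vertices of $B_i$ whose identifier has a $1$ there; every within-bin pair differs in some bit and is thus separated. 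These within-bin interventions stay inside a single $B_i$ (so have size at most $k$) but cannot be merged across bins without violating the size bound, giving $g \cdot \lceil \log_2 k \rceil$ interventions. Hence a disjoint union of cliques of total size $A$ is fully oriented with $\cO\!\left(\frac{A}{k}\log k\right)$ bounded-size interventions.

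It remains to control $A$ at every round. The key invariant is that the current essential graph $R = \cE_{\cI'}(G^*)$ is itself an essential graph to which the lower bound of \cite{squires2020active} applies: $\sum_{H \in CC(R)} \lfloor \omega(H)/2 \rfloor$ is a valid universal lower bound and is therefore at most the verification number of $G^*$ from $R$, which in turn is at most $\nu_1(G^*)$ because the covered edges of $G^*$ left unoriented in $R$ form a subset of those unoriented in $\cE(G^*)$ (so \cref{thm:efficient-optimal-atomic} gives a smaller minimum vertex cover). Since $|K_H| \leq \omega(H)$ and the number of nontrivial components is itself at most $\sum_H \lfloor \omega(H)/2 \rfloor$, this yields $A = \sum_H |K_H| \leq 3\,\nu_1(G^*) = \cO(\nu_1(G^*))$ in every round. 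Plugging $A = \cO(\nu_1(G^*))$ into the subproblem bound and using $\nu_k(G^*) \geq \lceil \nu_1(G^*)/k \rceil$ from \cref{thm:efficient-near-optimal-bounded}, each round costs $\cO\!\left(\frac{\nu_1(G^*)}{k}\log k\right) = \cO(\nu_k(G^*) \log k)$ interventions, and multiplying by the $\cO(\log n)$ rounds gives the claimed $\cO(\log n \cdot \log k \cdot \nu_k(G^*))$.

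The main obstacle I anticipate is not the separator recursion (which is standard and identical in spirit to the atomic case) but the accounting in the last paragraph: one must verify that $\sum_H \lfloor \omega(H)/2 \rfloor \leq \nu_1(G^*)$ continues to hold for the shrinking essential graph at \emph{every} round, i.e.\ that the relevant lower bound is monotone as edges get oriented, and simultaneously that the \emph{global} binning -- rather than a per-clique separating system -- is what prevents the number of components (which can be as large as $\nu_1(G^*)$) from contributing a spurious multiplicative factor of $k$. Getting $\cO(\nu_k \log k)$ rather than $\cO(\nu_1 \log k)$ per round hinges precisely on binning across all cliques at once, so that the group count is $\lceil A/k \rceil$ and not $\sum_H \lceil \omega(H)/k \rceil$.
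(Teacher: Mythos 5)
Your proposal is correct and follows essentially the same route as the paper: a $1/2$-clique-separator recursion giving $\cO(\log n)$ rounds, with the per-round cost controlled by the strengthened lower bound $\nu_1(G^*) \geq \sum_{H} \lfloor \omega(H)/2 \rfloor$ applied to the \emph{current} interventional essential graph, so that the union $Q$ of separator cliques has size $\cO(\nu_1(G^*))$ and can be fully separated with $\cO\left(\frac{|Q|}{k}\log k\right) = \cO(\nu_k(G^*)\log k)$ bounded-size interventions. The only difference is cosmetic: where the paper invokes the labeling scheme of Shanmugam et al.\ on $Q$ with parameters $(|Q|,k',a)$, you substitute an explicit two-stage bin-then-binary-label construction, which achieves the same guarantee.
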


These results are the first competitive results that holds for using atomic or bounded size interventions on \emph{general graphs}.
The only previously known result of $\cO(\log_2 (\max_{H \in CC(\cE(G^*))} \omega(H)) \cdot \nu_1(G^*))$ by \cite{squires2020active} was an algorithm based on directed clique trees with provable guarantees only for atomic interventions on intersection-incomparable chordal graphs.
To obtain our results, we are \emph{not} simply improving the analysis of \cite{squires2020active}.
Instead, we developed a new algorithmic approach that is based on graph separators which is a much simpler concept than directed clique trees.

The approximation of $\cO(\log n)$ to $\nu_1(G^*)$ is the tightest one can hope for atomic interventions in general.
For instance, consider the case where $\cE(G^*)$ is an undirected line graph on $n$ vertices.
Then, \emph{any} adaptive algorithm needs $\Omega(\log n)$ atomic interventions in the worst case\footnote{The lower bound reasoning is similar to the lower bound for binary search.} while $\nu_1(G^*) = 1$.
The line graph also provides a clear distinction between adaptive and non-adaptive search algorithms since \emph{any} non-adaptive algorithm needs $\Omega(n)$ atomic interventions to separate all the edges in $\cE(G^*)$.
\section{Overview of techniques}
\label{sec:techniques}

\subsection{Verification}

Our results for the verification problem are broadly divided into two categories:
(I) Connection between the covered edges and verification set (\cref{thm:optimal-verifying-set}, \cref{cor:solving-verification});
(II) Efficient computation of verification sets under various settings using the connection established.

For the first type of results, we show that any intervention set is a verifying set if and only if it separates every unoriented covered edge of $G \in \cE(G^*)$.
For necessity, we show that all four Meek rules (which are known to be consistent and complete) will \emph{not} orient any unoriented covered edge of $G$ that is \emph{not} separated by any intervention.
Our proof is simple due to the usage of covered edges.
For sufficiency, we show that \emph{every} unoriented non-covered edge of $G$ will be oriented by Meek rules if all covered edges are separated.
We prove this using a subtle  induction over a valid topological ordering of the vertices $\pi$ of $G^*$:
Let $V_i$ be the first $i$ smallest vertices in $\pi$, for $i = 1, 2, \ldots, n$.
Consider subgraph $\cE(G^*)[V_i]$ induced by $V_i$ with $v_i$ being the last vertex in the ordering of $V_i$.
By induction, it suffices to show that all non-covered $u \to v_i$ edges are oriented for $u \in V_{i-1}$.
To show this, we perform case analysis to argue that either $u \to v_i$ is part of v-structure (i.e.\ is already oriented in $\cE(G^*)$) or Meek rule R2 will orient it.

The previous result only establishes equivalence between the verifying sets and the intervention sets that separate unoriented covered edges.
For efficient computation of optimal verifying sets, we prove several additional properties of covered edges, which may be of independent interest.

\begin{restatable}[Properties of covered edges]{lemma}{coverededgesproperties}
\label{lem:covered-edges-properties}
\hspace{0pt}
\begin{enumerate}
    \item Let $H$ be the edge-subgraph induced by covered edges of a DAG $G$.
    Then, every vertex in $H$ has at most one incoming edge and thus $H$ is a forest of directed trees.
    \item If a DAG $G$ is a clique on $n \geq 3$ vertices $v_1, v_2, \ldots, v_n$ with $\pi(v_1) < \pi(v_2) < \ldots < \pi(v_n)$, then $v_1 \to v_2, \ldots, v_{n-1} \to v_n$ are the covered edges of $G$.
    \item If $u \to v$ is a covered edge in a DAG $G$, then $u$ \emph{cannot} be a sink of any maximal clique of $G$.
\end{enumerate}
\end{restatable}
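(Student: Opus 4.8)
The plan is to prove the three parts of \cref{lem:covered-edges-properties} by reasoning directly from the definition of a covered edge, namely that $u \sim v$ is covered when $Pa(u) \setminus \{v\} = Pa(v) \setminus \{u\}$. Each part is a structural consequence of this shared-parent condition, and I would handle them in the order stated, since later parts lean on the intuition built in earlier ones.

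For the first part, the goal is to show that the edge-subgraph $H$ induced by covered edges has in-degree at most one at every vertex, which immediately forces $H$ to be a forest of directed trees (acyclicity is inherited from $G$ being a DAG, so no directed cycles can appear, and bounded in-degree rules out any undirected cycle collapsing into a tree-violating structure). The key step is a proof by contradiction: suppose a vertex $v$ has two incoming covered edges $u \to v$ and $w \to v$ with $u \neq w$. Applying the covered-edge condition to $u \to v$ gives $Pa(v) \setminus \{u\} = Pa(u) \setminus \{v\}$, and since $w \in Pa(v)$ and $w \neq u$, we get $w \in Pa(u)$, so $w \to u$. Symmetrically, applying the condition to $w \to v$ forces $u \to w$. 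This yields arcs in both directions between $u$ and $w$, contradicting acyclicity of $G$. Hence every vertex has at most one incoming covered edge.

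For the second part, where $G$ is an acyclic clique ordered by $\pi(v_1) < \dots < \pi(v_n)$, I would argue that an edge $v_i \to v_j$ with $i < j$ is covered if and only if $j = i + 1$. The parents of $v_i$ are exactly $\{v_1, \dots, v_{i-1}\}$ and the parents of $v_j$ are $\{v_1, \dots, v_{j-1}\}$. Computing $Pa(v_j) \setminus \{v_i\}$ versus $Pa(v_i) \setminus \{v_j\} = \{v_1, \dots, v_{i-1}\}$, the equality holds precisely when the set $\{v_1, \dots, v_{j-1}\} \setminus \{v_i\}$ equals $\{v_1, \dots, v_{i-1}\}$, which forces $j - 1 = i$, i.e.\ $j = i+1$. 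This is a short direct computation, so it is the least delicate of the three.

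For the third part, I would show that if $u \to v$ is a covered edge, then $u$ cannot be the sink of any maximal clique $K$ of $G$. Suppose for contradiction that $u$ is the sink of a maximal clique $K$, so every other vertex of $K$ is a parent of $u$ and $u$ has the largest $\pi$-value in $K$. The endpoint $v$ is adjacent to $u$; the natural move is to consider the clique $K \cup \{v\}$ or to argue that $v$ together with $K$ must form a larger clique, contradicting maximality. The crux is to use the covered-edge condition $Pa(u) \setminus \{v\} = Pa(v) \setminus \{u\}$ to show that $v$ is adjacent to every vertex of $K \setminus \{u\}$: indeed each such vertex is a parent of $u$ (hence in $Pa(u) \setminus \{v\}$, since it cannot equal $v$) and therefore a parent of $v$, making it adjacent to $v$. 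Since $v$ is also adjacent to $u$, the set $K \cup \{v\}$ is a clique strictly larger than $K$, contradicting maximality. I expect this third part to be the main obstacle, because one must carefully handle the possibility that $v$ itself already lies in $K$ and verify that the adjacency argument genuinely produces a vertex outside $K$, ensuring the maximality contradiction is valid rather than vacuous.
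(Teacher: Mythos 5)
Your proposal is correct and follows essentially the same route as the paper's proof for all three parts: the two-incoming-covered-edges contradiction forcing arcs in both directions, the direct parent-set computation in the acyclic clique, and the maximality contradiction via the enlarged clique $K \cup \{v\}$. The one worry you flag in part 3 --- that $v$ might already lie in $K$ --- dissolves immediately, since $u$ being the sink of $K$ would then give $v \to u$, contradicting the arc $u \to v$ in a DAG.
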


The fact that the edge-induced subgraph of the covered edges is a forest enables us to use standard dynamic programming techniques to compute (weighted) minimum vertex covers for the unoriented covered edges of $G$, corresponding to minimum size atomic verifying sets (\cref{thm:efficient-optimal-atomic} and \cref{thm:efficient-optimal-atomic-cost}).
For bounded size verifying sets, we exploit the fact that trees are bipartite and so we can divide the minimum vertex covers into two partitions.
Since vertices within each partite are non-adjacent, we can group them into larger interventions without affecting the overall number of separated edges, giving us the guarantees in \cref{thm:efficient-near-optimal-bounded} and \cref{thm:efficient-near-optimal-bounded-cost}.

Through the lens of covered edges, we see that existing universal bounds of \cite{squires2020active,porwal2021almost} are \emph{not} tight\footnote{The non-tightness of the universal lower bound of [PSS22] was known and verified via random graph experiments.}.
Consider the case where the essential graph $\cE(G^*)$ is the standing windmill graph given in \cref{fig:standing-windmill}.
The graph $\cE(G^*)$ has $n = 8$ nodes, $r = 4$ maximal cliques and the largest maximal clique is size $3$.
The lower bound of \cite{squires2020active} yields $\sum_{H \in CC(\cE(G^*))} \lfloor \frac{\omega(H)}{2} \rfloor = \lfloor \frac{3}{2} \rfloor = 1$ while lower bound of \cite{porwal2021almost} yields $\lceil \frac{n-r}{2} \rceil = \lceil \frac{8-4}{2} \rceil = 2$.
Meanwhile, we show $\min_{G \in [G^*]} \nu_1(G) = 3$ in \cref{sec:appendix-standing-windmill}.

We can also recover the $\nu_1(G^*) \leq n-r$ bound of \cite{porwal2021almost} with a short proof using covered edges.

\begin{lemma}
\label{lem:n-r-suffices}
For any essential graph $\cE(G^*)$ on $n$ vertices with $r$ maximal cliques, there exists an atomic verifying set of size at most $n-r$.
\end{lemma}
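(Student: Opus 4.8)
The plan is to combine \cref{thm:efficient-optimal-atomic} with the structural facts in \cref{lem:covered-edges-properties}. By \cref{thm:efficient-optimal-atomic}, for any fixed $G \in [G^*]$ the number $\nu_1(G)$ equals the size of a minimum vertex cover of the covered edges of $G$ that are unoriented in $\cE(G^*)$; moreover, since reversing a covered edge yields a Markov-equivalent DAG (\cref{lem:sequence}), every covered edge of $G$ is in fact unoriented in $\cE(G^*)$, so we may simply work with all covered edges. Let $H$ be the forest they induce (\cref{lem:covered-edges-properties}, part 1). Because a minimum vertex cover and a maximum independent set are complementary, it suffices to exhibit an independent set of $H$ of size at least $r$: its complement is then a vertex cover, hence a verifying set, of size at most $n-r$.

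The candidate independent set I would use is the collection of sinks of the $r$ maximal cliques. Independence is immediate from \cref{lem:covered-edges-properties}, part 3: a clique sink is never the tail of a covered edge, so if two sinks were joined by a covered edge, then whichever way it is oriented one of them would be such a tail, a contradiction. Thus the clique sinks always form an independent set in $H$, and the only remaining task is to argue that the $r$ maximal cliques yield $r$ \emph{distinct} sinks.

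Distinctness is the crux, and I would establish it after reducing to the case where $\cE(G^*)$ is a single connected undirected chordal chain component. Covered edges only ever appear inside chain components, so $H$ splits as a disjoint union over components; \cref{lem:hauser-bulmann-strengthened} lets us treat each induced subgraph $G[V_i]$ separately, and the bounds add up with $r$ read off component-by-component. Within one such component $G$ has no v-structure (any v-structure would be invariant across $[G^*]$ and thus already oriented in $\cE(G^*)$, contradicting the component being undirected), so every vertex's parent set is a clique. Now if two maximal cliques $K_1, K_2$ shared a sink $t$, then $K_1 \setminus \{t\}$ and $K_2 \setminus \{t\}$ would both lie inside $Pa(t)$, which is a clique; hence $K_1 \cup K_2 \subseteq \{t\} \cup Pa(t)$ is itself a clique, forcing $K_1 = K_2$ by maximality. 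This gives $r$ distinct sinks and completes the argument.

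The step I expect to be the main obstacle is precisely this distinctness claim, together with the bookkeeping that legitimizes the reduction to a single undirected component. In particular one must pin down that the relevant $r$ is the count of maximal cliques read inside the chordal chain components (a maximal clique using oriented arcs would span several components and must not be mishandled), and one must verify that the absence of v-structures inside a component is exactly what makes each parent set a clique. Once this ``parents form a clique'' property is in hand, the sink argument is short, and everything else, namely the forest structure from \cref{lem:covered-edges-properties}, the cover/independent-set duality, and the per-component summation, is routine.
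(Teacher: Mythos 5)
Your proof is correct and follows essentially the same route as the paper's: reduce via \cref{thm:efficient-optimal-atomic} to covering the unoriented covered edges, then use part~3 of \cref{lem:covered-edges-properties} to conclude that the complement of the set of maximal-clique sinks is such a cover of size $n-r$. The one point where you go beyond the paper is in explicitly establishing that the $r$ maximal cliques have $r$ \emph{distinct} sinks (via the reduction to a single chain component, where the absence of v-structures makes every parent set a clique); the paper's one-line proof leaves this implicit, so your extra care there is a worthwhile tightening rather than a different approach.
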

\begin{proof}
By \cref{thm:efficient-optimal-atomic}, it suffices to find a vertex cover of the unoriented covered edges of $\cE(G^*)$.
By \cref{lem:covered-edges-properties}, any covered edge $u \to v$ cannot have $u$ as a sink of any maximal clique.
So, the set of all vertices without the $r$ sink vertices is a vertex cover of the covered edges in $\cE(G^*)$.
\end{proof}

\subsection{Search}

Here we provide the description and proof overview of the search results.
Our search algorithm (\cref{alg:search-algo}) relies on graph separators that we formally define next.
Existence and efficient computation of graph separators are well studied \cite{lipton1979separator,gilbert1984separatorboundedgenus,gilbert1984separatorchordal,alon1990separator,kawarabayashi2010separator,wulff2011separator} and are commonly used in divide-and-conquer graph algorithms and as analysis tools.

\begin{definition}[$\alpha$-separator and $\alpha$-clique separator]
Let $A,B,C$ be a partition of the vertices $V$ of a graph $G = (V,E)$.
We say that $C$ is an \emph{$\alpha$-separator} if no edge joins a vertex in $A$ with a vertex in $B$ and $|A|, |B| \leq \alpha \cdot |V|$. We call $C$ is an \emph{$\alpha$-clique separator} if it is an \emph{$\alpha$-separator} and a clique.
\end{definition}

\begin{algorithm}[htbp]
\caption{Search algorithm via graph separators.}
\label{alg:search-algo}
\begin{algorithmic}[1]
    \State \textbf{Input}: Essential graph $\cE(G^*)$, intervention size $k$. \textbf{Output}: A fully oriented graph $G \in [G^*]$.
    \State Initialize $i=0$ and $\cI_0 = \emptyset$.
	\While{$\cE_{\cI_{i}}(G^*)$ still has undirected edges}
    	\State For each $H \in CC(\cE_{\cI_{i}}(G^*))$ of size $|H| \geq 2$, find a 1/2-clique separator $K_H$ using
    	\Statex\hspace{\algorithmicindent}\cref{thm:chordal-separator}.
    	Define $Q = \{K_H\}_{H \in CC(\cE_{\cI_{i}}(G^*)), |H| \geq 2}$ as the union of clique separator nodes.
    	\State \textbf{if} $k=1$ or $|Q|=1$ \textbf{then} Define $C_i = Q$ as an atomic intervention set.
    	\State \textbf{else} Define $k' = \min\{k, |Q|/2\}$, $a = \lceil |Q|/k' \rceil \geq 2$, and $\ell = \lceil \log_a n \rceil$. Compute labelling
    	\Statex\hspace{\algorithmicindent}scheme of \cite[Lemma 1]{shanmugam2015learning} on $Q$ with $(|Q|, k', a)$, and define $C_i = \{S_{x,y}\}_{x \in [\ell], y \in [a]}$,
    	\Statex\hspace{\algorithmicindent}where $S_{x,y} \subseteq Q$ is the subset of vertices whose $x^{th}$ letter in the label is $y$.
    	\State Update $i \gets i+1$, intervene on $C_{i}$ to obtain $\cE_{\cI_{i}}(G^*)$, and update $\cI_i \gets \cI_{i-1} \cup C_{i}$.
	\EndWhile
\end{algorithmic}
\end{algorithm}

We first give the proof strategy for \cref{thm:search-atomic}, the atomic case where $k=1$.
We divide the analysis into two steps.
In the first step, we show that the algorithm terminates in $\cO(\log n)$ iterations.
In the second step, we argue that the number of interventions performed in each iteration is in $\cO(\nu_{1}(G))$.

The analysis of the first step relies on the following result of \cite{gilbert1984separatorchordal}.

\begin{theorem}[\cite{gilbert1984separatorchordal}, instantiated for unweighted graphs]
\label{thm:chordal-separator}
Let $G = (V,E)$ be a chordal graph with $|V| \geq 2$ and $p$ vertices in its largest clique.
There exists a $1/2$-clique-separator $C$ of size $|C| \leq p-1$.
The clique $C$ can be computed in $\cO(|E|)$ time.
\end{theorem}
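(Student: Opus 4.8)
The plan is to prove this via the clique-tree (junction-tree) representation of chordal graphs, which converts the separator question into a weighted-centroid computation on a tree. Recall that every chordal graph $G$ admits a clique tree $T$ whose nodes are the maximal cliques of $G$, with the property that for each vertex $v$ the cliques containing $v$ induce a connected subtree of $T$; moreover the intersection $K_1 \cap K_2$ of two cliques adjacent in $T$ is exactly a minimal vertex separator of $G$, and being a subset of a clique it is itself a clique. The first step is to build $T$ in $\cO(|E|)$ time: run maximum cardinality search (or Lex-BFS) to obtain a perfect elimination ordering, extract the maximal cliques and the clique tree in linear time, and note that each clique-tree edge $K_1 \sim K_2$ yields a separator $K_1 \cap K_2$ that is a \emph{proper} subset of a maximal clique, hence of size at most $p-1$.

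The second step is to locate the separating clique by a centroid walk on $T$. I would assign weight $1$ to each vertex, root $T$ arbitrarily, and for each tree node $K$ compute the number of vertices whose entire containing-subtree lies below $K$; call this $\mathrm{below}(K)$. Since the hanging subtrees of $K$ are vertex-exclusive and their weights cannot simultaneously exceed $|V|/2$ (they sum to at most $|V|$), I would walk downward, always descending into the unique child whose $\mathrm{below}$ value exceeds $|V|/2$, until reaching a node $K^\star$ all of whose hanging pieces have weight at most $|V|/2$. The key structural fact is that the connected components of $G - K^\star$ are in bijection with the tree-neighbours of $K^\star$, each component consisting of the exclusive vertices of one hanging subtree, separated from the rest by the minimal separator $K^\star \cap K_j$. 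Hence removing the clique $K^\star$ leaves every component of size at most $|V|/2$, which is precisely the balance guarantee needed (and yields the partition $A,B$ by grouping components).

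The main obstacle is pushing the separator size from $|K^\star| \le p$ down to the claimed $p-1$, and this is the delicate part I would spend the most care on. If $K^\star$ is not a maximum clique the bound $|K^\star| \le p-1$ is immediate, so the only issue is when $K^\star$ has size exactly $p$. In that case I would shave off a single vertex: it suffices to find a vertex $v \in K^\star$ that is \emph{private} to $K^\star$ (contained in no adjacent minimal separator $K^\star \cap K_j$), because such a $v$ is adjacent only to $K^\star \setminus \{v\}$, so removing $v$ from the separator merely splits off $\{v\}$ as a size-$1$ component while leaving all other components (whose separators lie in $K^\star \setminus \{v\}$) intact and still at most $|V|/2$. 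The subtle point is guaranteeing such a droppable vertex exists, or otherwise rerouting the centroid to a clique that admits one; I would argue this using simplicial vertices (leaf cliques of $T$ always contain private vertices) and, when a single hanging component is nearly maximal, switching to the incident minimal separator $K^\star \cap K_j$, which already has size at most $p-1$. The extremal case is the complete graph $K_p$, where the separator must be taken as any $p-1$ vertices, confirming the bound is tight and cannot be improved.

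Finally, I would confirm the $\cO(|E|)$ running time: clique-tree construction is linear, the $\mathrm{below}$ values and the downward centroid walk are a single pass over $T$ (whose size is $\cO(|V|)$ since a chordal graph has at most $|V|$ maximal cliques), the private-vertex check is local to $K^\star$, and listing the resulting components is a linear-time graph traversal. Thus the entire construction runs in time linear in $|E|$, as claimed.
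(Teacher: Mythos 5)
The paper does not actually prove this statement: it is imported from Gilbert--Rose--Edenbrandt \cite{gilbert1984separatorchordal} and used as a black box, so there is no in-paper argument to compare yours against. Judged on its own terms, your clique-tree construction plus centroid walk is a correct and standard route to the core content of the theorem: in $\cO(|V|+|E|)$ time one finds a \emph{maximal clique} $K^\star$ (hence $|K^\star|\le p$) such that every connected component of $G-K^\star$ has at most $|V|/2$ vertices, and this ``every component is halved'' guarantee is exactly what \cref{alg:search-algo} and its analysis consume.

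The two places you flag as delicate, however, are genuine gaps, and the second cannot be closed because that part of the statement is false as written. First, the parenthetical ``yields the partition $A,B$ by grouping components'' does not follow: components each of size at most $|V|/2$ need not admit a two-part grouping with both parts of size at most $|V|/2$ (three components of size $3$ with $|V|=10$, as in a spider with three legs of three vertices cut at its center, already fail), so the theorem must be read with the per-component balance guarantee rather than the paper's literal $A,B$ definition. Second, the bound $|C|\le p-1$ is not achievable in general, so no choice of ``private vertex to shave'' or switch to an incident minimal separator will rescue it. Concretely, take the triangle $\{u,v,w\}$ and attach a two-vertex tail to each of its edges via triangles: $x_1\sim u,v$; $x_2\sim x_1,v$; $y_1\sim u,w$; $y_2\sim y_1,w$; $z_1\sim v,w$; $z_2\sim z_1,w$. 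This graph is chordal (eliminate $x_2,y_2,z_2,x_1,y_1,z_1,u,v,w$), has $n=9$ and $p=3$, yet every clique of size at most $2$ leaves a connected component with at least $5>n/2$ vertices (e.g.\ removing $\{u,v\}$ leaves $\{w,y_1,y_2,z_1,z_2\}$ intact), while the size-$p$ clique $\{u,v,w\}$ leaves components of size $2$ each. So the correct instantiation is $|C|\le p$; your centroid argument proves exactly that, and the loss only affects the constant in the ``at most $2\cdot\nu_1(G^*)$ interventions per iteration'' step, not any asymptotic claim. I would keep your construction, drop the shaving step, and state the conclusion with $|C|\le p$ and the per-component guarantee.
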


At each iteration $i$ of the algorithm, we intervene on a $1/2$-clique separators of each connected chain component.
Note that the size of each connected chain compoenent at the end of iteration $i$ is at most $n/2^i$.
Therefore, after $\cO(\log n)$ iterations, each connected chain component will contain at most $1$ vertex, implying that all the edges have been oriented. 

To bound the number of interventions used in each iteration, we prove a stronger universal lower bound that is built upon the lower bound of \cite{squires2020active}.
While \emph{not} computable\footnote{It involves a maximization over all possible atomic interventions \emph{and} we do not know the $\cE_{\cI}(G^*)$'s.}, it is a very powerful lower bound for analysis:
In \cref{fig:chain-triangles} of \cref{sec:appendix-chain-triangles}, we give an example where $\nu_1(G^*) \approx n$ while the lower bound of \cite{squires2020active} on $CC(\cE(G^*))$ is a constant.
Meanwhile, there exists a set of atomic interventions $\cI$ such that applying \cite{squires2020active} on $CC(\cE_{\cI}(G^*))$ yields a much stronger $\Omega(n)$ bound.

\begin{restatable}{lemma}{strengthenedlb}
\label{lem:strengthened-lb}
Fix an essential graph $\cE(G^*)$ and $G \in [G^*]$.
Then,
\[
\nu_1(G) \geq \max_{\cI \subseteq V} \sum_{H \in CC(\cE_{\cI}(G^*))} \left\lfloor \frac{\omega(H)}{2} \right\rfloor
\]
\end{restatable}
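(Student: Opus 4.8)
My plan is to fix an arbitrary atomic intervention set $\cI$, establish the per-$\cI$ inequality $\nu_1(G) \ge \sum_{H \in CC(\cE_\cI(G^*))} \lfloor \omega(H)/2 \rfloor$, and then take the maximum over $\cI$. The mechanism is a decomposition across the chain components that $\cI$ carves out, combined with the base lower bound of \cite{squires2020active} applied to each component separately.

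For the componentwise base case, I would use \cref{lem:hauser-bulmann-strengthened} with an empty additional intervention to identify each component $H \in CC(\cE_\cI(G^*))$ with the essential graph $\cE(G^*[V(H)])$ of the induced DAG. Since $H$ is a single connected (chordal) component with clique number $\omega(H)$, the $\cI = \emptyset$ bound of \cite{squires2020active} specializes to $\nu_1(G^*[V(H)]) \ge \lfloor \omega(H)/2 \rfloor$; via \cref{thm:efficient-optimal-atomic} and \cref{lem:covered-edges-properties}(2) this is exactly the fact that the covered edges of a maximum clique form a Hamiltonian path, whose induced matching has size $\lfloor \omega(H)/2 \rfloor$.

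To glue the components, take a minimum atomic verifying set $\mathcal{V}$ of the true graph $G^*$ and restrict it to each block via $\mathcal{V}|_H = \{\, S \cap V(H) : S \in \mathcal{V} \setminus \cI \,\}$. Applying \cref{lem:hauser-bulmann-strengthened} once more, now with the additional interventions $\mathcal{V} \setminus \cI$, the orientation inside $V(H)$ is governed solely by interventions intersected with $V(H)$; since $\mathcal{V}$ fully orients $G^*$, the restriction $\mathcal{V}|_H$ must verify $G^*[V(H)]$ and therefore has size at least $\lfloor \omega(H)/2 \rfloor$. Because the vertex sets $V(H)$ partition $V$ and the interventions are atomic, $\nu_1(G^*) = |\mathcal{V}| \ge \sum_H |\mathcal{V}|_H| \ge \sum_H \lfloor \omega(H)/2 \rfloor$, and maximizing over $\cI$ gives the bound for $G = G^*$, which is the instance actually invoked in \cref{thm:search-atomic}.

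The hard part is extending this to a general expert graph $G \in [G^*]$ with $G \ne G^*$. The components in the statement are those of $\cE_\cI(G^*)$, while a verifying set of $G$ is adapted to $\cE_\cI(G)$; these interventional essential graphs share only their skeleton, so their chain components and clique numbers can genuinely differ, and the gluing step breaks down (restricting a verifying set of $G$ to $V(H)$ need not verify $G^*[V(H)]$, because a covered edge of $G[V(H)]$ left unoriented in $H$ need not be a covered edge of $G$). I expect to close this gap in one of two ways: either by showing that $\max_{\cI} \sum_{H \in CC(\cE_\cI(G^*))} \lfloor \omega(H)/2 \rfloor$ is invariant over the Markov equivalence class---so that it coincides with the same quantity formed from $\cE_\cI(G)$, to which the clean decomposition applies---or by a direct covered-edge argument in which Meek's rule R2 forbids the problematic unseparated configurations precisely when the clique edge in question remains unoriented.
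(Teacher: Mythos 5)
Your argument for the case $G = G^*$ is correct and is essentially the paper's own proof: restrict a minimum verifying set to the vertex sets of the chain components of $\cE_{\cI}(G^*)$, invoke \cref{lem:hauser-bulmann-strengthened} to conclude that each restriction verifies the induced moral DAG on $V(H)$, lower-bound each piece by $\lfloor \omega(H)/2 \rfloor$ via the moral-DAG bound of \cite{squires2020active} (\cref{lem:moral-dag-lemma}), and sum over the partition $\{V(H)\}_{H}$. Since \cref{thm:search-atomic} and \cref{thm:search-bounded} only invoke the lemma with $G = G^*$, what you have already suffices for everything downstream in the paper.

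The issue you flag for $G \neq G^*$ is, as written, a genuine gap: your gluing step yields the inequality with $CC(\cE_{\cI}(G))$ on the right-hand side (equivalently, only the $G = G^*$ case), whereas the statement uses $CC(\cE_{\cI}(G^*))$, and you are right that these component structures can differ --- on the path $a \sim b \sim c$ with $G^* = a \to b \to c$, $G = a \gets b \to c$ and $\cI = \{a\}$, the former interventional essential graph is fully oriented while the latter leaves $b \sim c$ undirected. Neither of your two proposed repairs is executed, so the lemma as stated is not fully proved. You should know, however, that the paper's own write-up makes the same identification without comment: it asserts $\cE_{\cI'}(G^*) = G$ for a verifying set $\cI'$ of $G$, and it applies \cref{lem:hauser-bulmann-strengthened} to the DAG $G$ using a chain component $H$ drawn from $CC(\cE_{\cI}(G^*))$ rather than from $CC(\cE_{\cI}(G))$ --- precisely the step whose justification is missing. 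If you want to close the gap, your second route (a direct covered-edge argument) is the right one: one must show that a verifying set of $G$ restricted to $V(H)$ still verifies $G[V(H)]$ for $H \in CC(\cE_{\cI}(G^*))$, e.g.\ by arguing via \cref{thm:optimal-verifying-set} that covered edges of $G[V(H)]$ lift to covered edges of $G$; the delicate part is handling neighbours in $V(H_0) \setminus V(H)$ for the observational chain component $H_0 \supseteq V(H)$, on whose incident edges $G$ and $\cE_{\cI}(G^*)$ may disagree. Your first route (arguing the maximum is invariant over the MEC) is riskier, since the per-$\cI$ sums themselves are certainly not invariant.
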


As we intervene on cliques in each connected component at every iteration, \cref{lem:strengthened-lb} shows that we use at most $2 \cdot \nu_1(G^*)$ interventions per iteration.
Therefore, the total interventions used by \cref{alg:search-algo} is in $\cO(\log (n) \cdot \nu_1(G^*))$.

For bounded size interventions (\cref{thm:search-bounded}), we follow the same strategy as above but we modify how we orient the edges within and cut by the $1/2$-clique separators.
More specifically, we compute a separating system for the union of clique separator nodes based on the labeling scheme of \cite{shanmugam2015learning} and perform case analysis to argue that we use $\cO(\log (k) \cdot \nu_k(G^*))$ bounded size interventions per iteration.

\begin{lemma}[Lemma 1 of \cite{shanmugam2015learning}]
Let $(n,k,a)$ be parameters where $k \leq n/2$.
There is a polynomial time labeling scheme that produces distinct $\ell$ length labels for all elements in $[n]$ using letters from the integer alphabet $\{0\} \cup [a]$ where $\ell = \lceil \log_a n \rceil$.
Further, in every digit (or position), any integer letter is used at most $\lceil n/a \rceil$ times.
This labelling scheme is a separating system: for any $i,j \in [n]$, there exists some digit $d \in [\ell]$ where the labels of $i$ and $j$ differ.
\end{lemma}

\section{Experiments and implementation}
\label{sec:experiments}

We implement our verification algorithm and test its correctness on some well-known graphs such as cliques and trees, for which we know the exact verification number.
In addition, we implement \cref{alg:search-algo} and compare its performance with other known atomic search algorithms \cite{he2008active,hauser2014two,shanmugam2015learning,squires2020active} via the experimental setup of \cite{squires2020active}: on synthetic graphs of varying sizes, we compare the runtime and total number of interventions performed compared to the verification number of the underlying DAG.
In \cref{sec:appendix-experiments}, we provide the full experimental details and results of running various search algorithms (including ours) on different graphs.
Qualitatively, our algorithm is competitive with the state-of-the-art search algorithms while being $\sim$10x faster in some experiments.
\cref{fig:exp-section} shows a subset of these results.
We also investigated the impact of different $k$ values on the performance of our search algorithm in \cref{sec:appendix-experiments}. The implementations, along with entire experimental setup, are available at \url{https://github.com/cxjdavin/verification-and-search-algorithms-for-causal-DAGs}.

\begin{figure}[htbp]
\centering
\begin{subfigure}{0.49\textwidth}
    \centering
    \includegraphics[width=\linewidth]{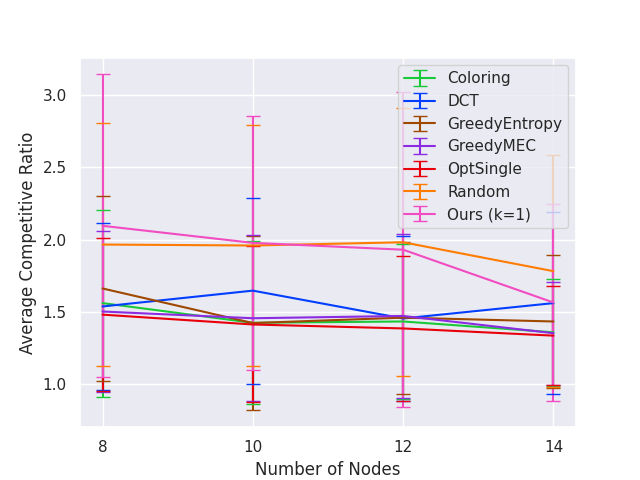}
\end{subfigure}
\begin{subfigure}{0.49\textwidth}
    \centering
    \includegraphics[width=\linewidth]{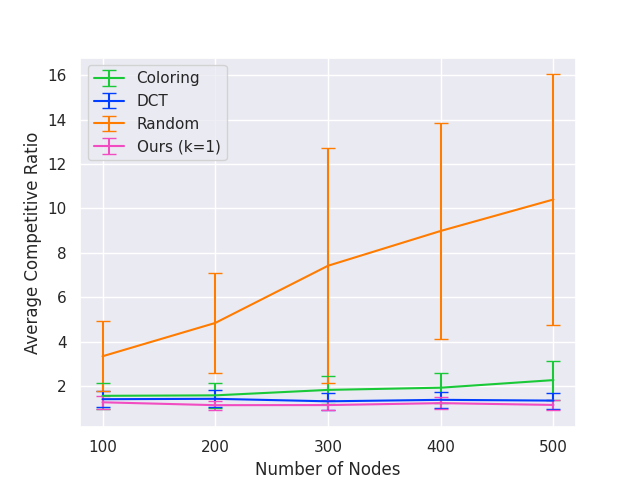}
\end{subfigure}
\caption{
The figures show the average competitive ratios with respect to synthetic graphs of different node sizes.
See \cref{sec:appendix-experiments} for details about how the synthetic graphs are generated and for details on the algorithms benchmarked.
We also show the \emph{maximum} competitive ratios in \cref{sec:appendix-experiments}.
}
\label{fig:exp-section}
\end{figure}

\textbf{Conclusion, limitations, and societal impact}
\label{sec:statement}
Learning causal relationships is of fundamental importance to science and society in general.
This is especially important when one wishes to correctly predict effects of making changes to a system for downstream tasks such as designing fair algorithms.
In this work, we gave a complete understanding of the verification problem and an improved search algorithm under the standard causal inference assumptions (see \cref{sec:introduction}).
However, if our assumptions are violated by the data, then wrong causal conclusions may be drawn and possibly lead to unintended downstream consequences.
Hence, it is of great interest to remove/weaken these assumptions while maintaining strong theoretical guarantees.
A crucial limitation of this work is that we study an idealized setting with hard interventions and infinite samples while soft interventions may be more realistic in certain real-life scenarios (e.g.\ effects from parental vertices are not completely removed but only altered) and sample complexities play a crucial role when one has limited experimental budget (e.g.\ see \cite{kocaoglu2019characterization} and \cite{acharya2018learning} respectively).

\subsubsection*{Acknowledgements}
This research/project is supported by the National Research Foundation, Singapore under its AI Singapore Programme (AISG Award No: AISG-PhD/2021-08-013).
KS was supported by a Stanford Data Science Scholarship, a Dantzig-Lieberman Research Fellowship and a Simons-Berkeley Research
Fellowship.
AB is partially supported by an NUS Startup Grant (R-252-000-A33-133) and an NRF Fellowship for AI (NRFFAI1-2019-0002).
We would like to thank Themis Gouleakis, Dimitrios Myrisiotis, and Chandler Squires for valuable feedback and discussions.
Part of this work was done while the authors were visiting the Simons Institute for the Theory of Computing.

\bibliography{refs}
\bibliographystyle{alpha}

\newpage
\appendix
\section{Meek rules}
\label{sec:appendix-meek-rules}

Meek rules are a set of 4 edge orientation rules that are sound and complete with respect to any given set of arcs that has a consistent DAG extension \cite{meek1995}.
Given any edge orientation information, one can always repeatedly apply Meek rules till a fixed point to maximize the number of oriented arcs.

\begin{definition}[Consistent extension]
A set of arcs is said to have a \emph{consistent DAG extension} $\pi$ for a graph $G$ if there exists a permutation on the vertices such that (i) every edge $\{u,v\}$ in $G$ is oriented $u \to v$ whenever $\pi(u) < \pi(v)$, (ii) there is no directed cycle, (iii) all the given arcs are present.
\end{definition}

\begin{definition}[The four Meek rules \cite{meek1995}, see \cref{fig:meek-rules} for an illustration]
\hspace{0pt}
\begin{description}
    \item [R1] Edge $\{a,b\} \in E$ is oriented as $a \to b$ if $\exists$ $c \in V$ such that $c \to a$ and $c \not\sim b$.
    \item [R2] Edge $\{a,b\} \in E$ is oriented as $a \to b$ if $\exists$ $c \in V$ such that $a \to c \to b$.
    \item [R3] Edge $\{a,b\} \in E$ is oriented as $a \to b$ if $\exists$ $c,d \in V$ such that $d \sim a \sim c$, $d \to b \gets c$, and $c \not\sim d$.
    \item [R4] Edge $\{a,b\} \in E$ is oriented as $a \to b$ if $\exists$ $c,d \in V$ such that $d \sim a \sim c$, $d \to c \to b$, and $b \not\sim d$.
\end{description}
\end{definition}

\begin{figure}[htbp]
\centering
\resizebox{\linewidth}{!}{%
\begin{tikzpicture}
%
%
\node[draw, circle, inner sep=2pt] at (0,0) (R1a-before) {\small $a$};
\node[draw, circle, inner sep=2pt, right=of R1a-before] (R1b-before) {\small $b$};
\node[draw, circle, inner sep=2pt, above=of R1a-before](R1c-before) {\small $c$};
\draw[thick, -stealth] (R1c-before) -- (R1a-before);
\draw[thick] (R1a-before) -- (R1b-before);

\node[draw, circle, inner sep=2pt] at (3,0) (R1a-after) {\small $a$};
\node[draw, circle, inner sep=2pt, right=of R1a-after] (R1b-after) {\small $b$};
\node[draw, circle, inner sep=2pt, above=of R1a-after](R1c-after) {\small $c$};
\draw[thick, -stealth] (R1c-after) -- (R1a-after);
\draw[thick, -stealth] (R1a-after) -- (R1b-after);

\node[single arrow, draw, minimum height=2em, single arrow head extend=1ex, inner sep=2pt] at (2.2,0.75) (R1arrow) {};
\node[above=5pt of R1arrow] {\footnotesize R1};

%
%
\node[draw, circle, inner sep=2pt] at (6,0) (R2a-before) {\small $a$};
\node[draw, circle, inner sep=2pt, right=of R2a-before] (R2b-before) {\small $b$};
\node[draw, circle, inner sep=2pt, above=of R2a-before](R2c-before) {\small $c$};
\draw[thick, -stealth] (R2a-before) -- (R2c-before);
\draw[thick, -stealth] (R2c-before) -- (R2b-before);
\draw[thick] (R2a-before) -- (R2b-before);

\node[draw, circle, inner sep=2pt] at (9,0) (R2a-after) {\small $a$};
\node[draw, circle, inner sep=2pt, right=of R2a-after] (R2b-after) {\small $b$};
\node[draw, circle, inner sep=2pt, above=of R2a-after](R2c-after) {\small $c$};
\draw[thick, -stealth] (R2a-after) -- (R2c-after);
\draw[thick, -stealth] (R2c-after) -- (R2b-after);
\draw[thick, -stealth] (R2a-after) -- (R2b-after);

\node[single arrow, draw, minimum height=2em, single arrow head extend=1ex, inner sep=2pt] at (8.2,0.75) (R2arrow) {};
\node[above=5pt of R2arrow] {\footnotesize R2};

%
%
\node[draw, circle, inner sep=2pt] at (12,0) (R3d-before) {\small $d$};
\node[draw, circle, inner sep=2pt, above=of R3d-before](R3a-before) {\small $a$};
\node[draw, circle, inner sep=2pt, right=of R3a-before] (R3c-before) {\small $c$};
\node[draw, circle, inner sep=2pt, right=of R3d-before](R3b-before) {\small $b$};
\draw[thick, -stealth] (R3c-before) -- (R3b-before);
\draw[thick, -stealth] (R3d-before) -- (R3b-before);
\draw[thick] (R3c-before) -- (R3a-before) -- (R3d-before);
\draw[thick] (R3a-before) -- (R3b-before);

\node[draw, circle, inner sep=2pt] at (15,0) (R3d-after) {\small $d$};
\node[draw, circle, inner sep=2pt, above=of R3d-after](R3a-after) {\small $a$};
\node[draw, circle, inner sep=2pt, right=of R3a-after] (R3c-after) {\small $c$};
\node[draw, circle, inner sep=2pt, right=of R3d-after](R3b-after) {\small $b$};
\draw[thick, -stealth] (R3c-after) -- (R3b-after);
\draw[thick, -stealth] (R3d-after) -- (R3b-after);
\draw[thick] (R3c-after) -- (R3a-after) -- (R3d-after);
\draw[thick, -stealth] (R3a-after) -- (R3b-after);

\node[single arrow, draw, minimum height=2em, single arrow head extend=1ex, inner sep=2pt] at (14.2,0.75) (R3arrow) {};
\node[above=5pt of R3arrow] {\footnotesize R3};

%
%
\node[draw, circle, inner sep=2pt] at (18,0) (R4a-before) {\small $a$};
\node[draw, circle, inner sep=2pt, above=of R4a-before](R4d-before) {\small $d$};
\node[draw, circle, inner sep=2pt, right=of R4d-before] (R4c-before) {\small $c$};
\node[draw, circle, inner sep=2pt, right=of R4a-before](R4b-before) {\small $b$};
\draw[thick, -stealth] (R4d-before) -- (R4c-before);
\draw[thick, -stealth] (R4c-before) -- (R4b-before);
\draw[thick] (R4d-before) -- (R4a-before) -- (R4c-before);
\draw[thick] (R4a-before) -- (R4b-before);

\node[draw, circle, inner sep=2pt] at (21,0) (R4a-after) {\small $a$};
\node[draw, circle, inner sep=2pt, above=of R4a-after](R4d-after) {\small $d$};
\node[draw, circle, inner sep=2pt, right=of R4d-after] (R4c-after) {\small $c$};
\node[draw, circle, inner sep=2pt, right=of R4a-after](R4b-after) {\small $b$};
\draw[thick, -stealth] (R4d-after) -- (R4c-after);
\draw[thick, -stealth] (R4c-after) -- (R4b-after);
\draw[thick] (R4d-after) -- (R4a-after) -- (R4c-after);
\draw[thick, -stealth] (R4a-after) -- (R4b-after);

\node[single arrow, draw, minimum height=2em, single arrow head extend=1ex, inner sep=2pt] at (20.2,0.75) (R4arrow) {};
\node[above=5pt of R4arrow] {\footnotesize R4};

\draw[thick] (5.25,1.75) -- (5.25,-0.25);
\draw[thick] (11.25,1.75) -- (11.25,-0.25);
\draw[thick] (17.25,1.75) -- (17.25,-0.25);
\end{tikzpicture}
}
\caption{An illustration of the four Meek rules}
\label{fig:meek-rules}
\end{figure}
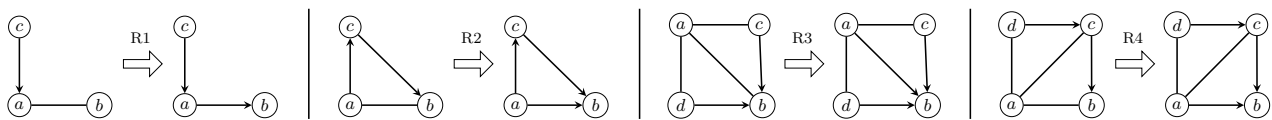

There exists an algorithm \cite[Algorithm 2]{pmlr-v161-wienobst21a} that runs in $\cO(d \cdot |E \cup A|)$ time and computes the closure under Meek rules, where $d$ is the degeneracy of the graph skeleton\footnote{A $d$-degenerate graph is an undirected graph in which every subgraph has a vertex of degree at most $d$. Note that the degeneracy of a graph is typically smaller than the maximum degree of the graph.}.
\section{Proof of \texorpdfstring{\cref{lem:hauser-bulmann-strengthened}}{Lemma 3}}
\label{sec:appendix-stronger-HB}

Lemma 1 of \cite{hauser2014two} actually considers a \emph{single} additional intervention, but a closer look at their proof shows that the statement can be strengthened to allow for \emph{multiple} additional interventions.
In fact, the proof below will almost mimic the proof of \cite[Lemma 1]{hauser2014two} except for some minor changes\footnote{\cite{hauser2014two} considered whether the additional intervention $S \subseteq V$ separates a particular edge. In our proof, we change that argument to whether \emph{some} intervention $S \in \cI'$ separates that same edge. To argue that two graphs $G$ and $H$ are the same, one can show that $G \subseteq H$ and $H \subseteq G$. They only proved ``one direction'' and claim that the other holds by similar arguments. For completeness, we state \emph{exactly} what are changes needed.}.
Note that we can drop the $\emptyset$ intervention in the statement since essential graphs are defined with the observational data provided.
The proof relies on the definition of \emph{strongly protected edges} and a characterization of $\cI$-essential graphs from \cite{hauser2012characterization}.

\begin{definition}[Strong protection; Definition 14 of \cite{hauser2012characterization}]
\label{def:strong-protection}
Let $G = (V,E,A)$ be a (partially oriented) DAG and $\cI \subseteq 2^V$ be an intervention set.
An arc $a \to b$ is \emph{strongly $\cI$-protected} in $G$ if there is some intervention $S \in \cI$ such that $|S \cap \{a,b\}| = 1$, or the arc $a \to b$ occurs in at least one of the following four configurations as an induced subgraph of $G$ (see \cref{fig:strongly-protected}):
\begin{enumerate}
    \item There exists $c \in V$ such that $c \to a \to b$ and $c \not\sim b$.
    \item There exists $c \in V$ such that $a \to b \gets c$ and $c \not\sim a$.
    \item There exists $c \in V$ such that $a \to c \to b$ and $a \to b$.
    \item There exists $c,d \in V$ such that $a \sim c \to b$, $a \sim d \to b$, and $a \to b$.
\end{enumerate}
\end{definition}

\begin{figure}[htbp]
\centering
\resizebox{\linewidth}{!}{%
\begin{tikzpicture}
%
%
\node[draw, circle, inner sep=2pt] at (0,0) (config1-a) {\small $a$};
\node[draw, circle, inner sep=2pt, right=of config1-a] (config1-b) {\small $b$};
\node[draw, circle, inner sep=2pt, above=of config1-a] (config1-c) {\small $c$};
\draw[thick, -stealth] (config1-c) -- (config1-a);
\draw[thick, -stealth] (config1-a) -- (config1-b);

%
%
\node[draw, circle, inner sep=2pt] at (3,0) (config2-a) {\small $a$};
\node[draw, circle, inner sep=2pt, right=of config2-a] (config2-b) {\small $b$};
\node[draw, circle, inner sep=2pt, above=of config2-b] (config2-c) {\small $c$};
\draw[thick, -stealth] (config2-c) -- (config2-b);
\draw[thick, -stealth] (config2-a) -- (config2-b);

%
%
\node[draw, circle, inner sep=2pt] at (6,0) (config3-a) {\small $a$};
\node[draw, circle, inner sep=2pt, right=of config3-a] (config3-b) {\small $b$};
\node[draw, circle, inner sep=2pt, above=of config3-b] (config3-c) {\small $c$};
\draw[thick, -stealth] (config3-c) -- (config3-b);
\draw[thick, -stealth] (config3-a) -- (config3-b);
\draw[thick, -stealth] (config3-a) -- (config3-c);

%
%
\node[draw, circle, inner sep=2pt] at (9,0) (config4-d) {\small $d$};
\node[draw, circle, inner sep=2pt, right=of config4-d] (config4-b) {\small $b$};
\node[draw, circle, inner sep=2pt, above=of config4-d] (config4-a) {\small $a$};
\node[draw, circle, inner sep=2pt, right=of config4-a] (config4-c) {\small $c$};
\draw[thick, -stealth] (config4-c) -- (config4-b);
\draw[thick, -stealth] (config4-d) -- (config4-b);
\draw[thick, -stealth] (config4-a) -- (config4-b);
\draw[thick] (config4-a) -- (config4-c);
\draw[thick] (config4-a) -- (config4-d);
\end{tikzpicture}
}
\caption{An illustration of the four configurations of strongly protected arc $a \to b$}
\label{fig:strongly-protected}
\end{figure}
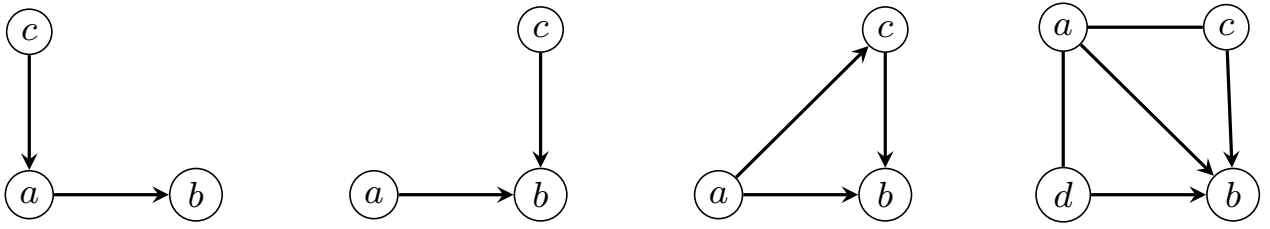

\begin{theorem}[Characterization of $\cI$-essential graphs; Theorem 18 of \cite{hauser2012characterization}]
\label{thm:characterization-of-I-essential-graph}
A (partially oriented) DAG $\cE(G)$ is an $\cI$-essential graph of $G$ if and only if
\begin{enumerate}
    \item $G$ is a chain graph.
    \item For each chain component $H \in CC(G)$, $G[V(H)]$ is chordal.
    \item $G$ has no induced subgraph of the form $a \to b \sim c$.
    \item $G$ has no undirected edge $a \sim b$ whenever $\exists S \in \cI$ such that $|S \cap \{a,b\}| = 1$.
    \item Every arc $a \to b$ in $G$ is strongly $\cI$-protected.
\end{enumerate}
\end{theorem}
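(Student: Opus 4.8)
The plan is to establish the ``if and only if'' in both directions, leveraging the dictionary that an arc $a \to b$ appears in a $\cI$-essential graph exactly when $a \to b$ holds in \emph{every} member of the $\cI$-Markov equivalence class of $G$, whereas an edge $a \sim b$ is left undirected exactly when the class contains one DAG orienting it $a \to b$ and another orienting it $b \to a$. I would build on the observational characterization of essential graphs (the fact, recalled earlier, that Markov equivalence is determined by a common skeleton and common v-structures), adding the interventional bookkeeping supplied by the targets in $\cI$.

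For the necessity direction, I would read off each of the five conditions from this dictionary. Acyclicity of every member together with the fact that the oriented arcs are shared across the class immediately gives the chain-graph property (Condition~1); restricting to a connected undirected component, the orientations realized by members of the class are precisely the acyclic orientations that introduce no new v-structure, and the existence of such orientations forces chordality (Condition~2). Condition~3 would follow by contradiction: if $a \to b \sim c$ were induced with $a \not\sim c$, then the class member orienting $c \to b$ would carry a v-structure $a \to b \gets c$ that is absent from the member orienting $b \to c$, contradicting $\cI$-equivalence. Condition~4 is immediate, since an intervention separating $\{a,b\}$ identifies the orientation of that edge, so the edge cannot be free. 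Condition~5 (strong protection) is the delicate one: an arc that is neither separated by an intervention nor locally forced by one of the four configurations can be flipped---by a covered-edge reversal via \cref{lem:sequence}, or a local relabeling preserving skeleton, v-structures, and interventional targets---producing a class member with the opposite orientation and contradicting its directedness.

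For the sufficiency direction, I would first use Conditions~1 and~2 to produce a consistent acyclic extension $G$ carrying no new v-structures, via a perfect elimination ordering of each chordal chain component; this $G$ is then a member of the purported class. The remaining task is to show that the $\cI$-essential graph of $G$ equals the given graph. That every given arc stays directed would follow from Condition~5, since strong protection means each arc is forced either by an intervention or by a configuration propagated by the sound and complete Meek rules. That every given undirected edge stays undirected would be shown by exhibiting, for each such edge, a second class member reversing exactly that edge; here Condition~4 guarantees the edge is uncut by any intervention, so both orientations remain $\cI$-admissible.

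I expect the main obstacle to be precisely this last construction in the sufficiency direction: reorienting a single edge inside a chordal chain component without creating or destroying any v-structure and without violating any interventional target. I would handle it by combining two perfect elimination orderings that differ only at the chosen edge with the covered-edge reversal machinery of \cref{lem:sequence}, and the interventional clauses (Condition~4 and the intervention case of strong protection) are exactly what make the classical observational argument go through in the presence of $\cI$.
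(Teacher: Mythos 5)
First, a point of order: the paper does not prove \cref{thm:characterization-of-I-essential-graph} at all. It is quoted verbatim as Theorem 18 of \cite{hauser2012characterization} and used as a black box in the proof of \cref{lem:hauser-bulmann-strengthened}. So there is no in-paper proof to compare your sketch against; I can only measure it against the known Andersson--Madigan--Perlman/Hauser--B\"uhlmann argument. Your architecture matches that argument (necessity by reading each condition off the definition of the $\cI$-essential graph as the common-arc/union representation of the $\cI$-Markov equivalence class; sufficiency by exhibiting one consistent extension and then showing each directed arc is forced in, and each undirected edge is reversible within, the class). But the two places where all the real work sits are asserted rather than argued.

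(i) Necessity of Condition~5. Saying a non-strongly-protected arc ``can be flipped by a covered-edge reversal via \cref{lem:sequence}'' does not close the gap: $a \to b$ being directed in the essential graph is a statement about \emph{every} member of the class, whereas \cref{lem:sequence} is the \emph{observational} transformational characterization of a single MEC. To reverse $a \to b$ you must (a) locate a member $G'$ of the class in which $a \to b$ is a covered edge --- a non-strongly-protected arc of the essential graph need not be covered in the particular $G$ you start from --- and (b) invoke the \emph{interventional} analogue of Chickering's theorem (that reversing a covered edge not separated by any $S \in \cI$ stays inside the $\cI$-MEC), which is a separate result in \cite{hauser2012characterization} that you neither state nor prove. (ii) Sufficiency. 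Conditions~1 and~2 alone do not give a consistent extension with no new v-structures: Condition~3 is precisely what prevents a perfect-elimination-ordering orientation of a chain component from creating a v-structure $a \to b \gets c$ out of an already-directed arc $a \to b$ together with an undirected $b \sim c$, yet your sketch never uses it in this direction. Likewise, ``every given arc stays directed because strong protection is propagated by Meek rules'' needs a global induction (e.g., over a topological order with a minimality argument, exactly as in the paper's proof of \cref{lem:hauser-bulmann-strengthened}); the local resemblance of the four configurations to Meek rules is not by itself a proof that the arc is oriented in the essential graph of the constructed $G$. As it stands the proposal is a correct table of contents for the proof, not a proof.
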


For simplicity, we say that an intervention $S \subseteq V$ \emph{separates} an edge $a \sim b$ if $|S \cap \{a,b\}| = 1$ and that an intervention set $\cI \subseteq 2^V$ \emph{separates} an edge $a \sim b$ if it has an intervention that separates it.

\hauserbulmannstrengthened*
\begin{proof}
To shorten notation, define $G = \cE_{\cI \cup \cI'}(G^*)[V(H)]$ and $G' = \cE_{\{S \cap V(H)~:~S \in \cI'\}}(G^*[V(H)])$.
Since $G = (V,E,A)$ and $G' = (V',E',A')$ share the same skeleton (i.e.\ $V = V'$ and $E \cup A = E' \cup A'$) and must respect the same underlying DAG directions of $G^*$, it suffices to argue that $A \subseteq A'$ and $A' \subseteq A$ (i.e.\ they share the same set of directed arcs).
Let $\pi$ be the topological ordering of the ground truth DAG $G^*$.

\textbf{Direction 1 ($A \subseteq A'$)}:
Suppose there are directed arcs in $G$ that are undirected in $G'$.
Let $a \to b$ be one such arc where $\pi(b)$ is \emph{minimized}.

By property 5 of \cref{thm:characterization-of-I-essential-graph}, $a \to b$ is strongly $(\cI \cup \cI')$-protected in $G$.
If $\{S \cap V(H)~:~S \in \cI'\}$ separates $a \sim b$, then $a \to b$ must be oriented in $G'$.
Otherwise, let us consider the 4 configurations given by \cref{def:strong-protection}:
\begin{enumerate}
    \item In $G$, there exists $c \in V$ such that $c \to a \to b$ and $c \not\sim b$.
        By minimality of $\pi(b)$, $c \to a$ must be oriented in $G'$.
        Thus, Meek rule R1 will orient $a \to c$.
    \item In $G$, there exists $c \in V$ such that $a \to b \gets c$ and $c \not\sim a$.
        This is a v-structure in $G^*$ and so $a \to b$ would also be oriented in $G'$.
    \item In $G$, there exists $c \in V$ such that $a \to c \to b$ and $a \to b$.
        By minimality of $\pi(b)$, $a \to c$ must be oriented in $G'$.
        Then, if $a \to b$ is \emph{not} directed in $G$', we will have a directed cycle of the form $a \to c \sim b \sim a$ (regardless of whether the edge $b \sim c$ is directed).
        By property 1 of \cref{thm:characterization-of-I-essential-graph}, $G'$ is a chain graph and \emph{cannot} have such a directed cycle.
        Therefore, $a \to b$ must be oriented in $G'$.
    \item In $G$, there exists $c,d \in V$ such that $a \sim c \to b$, $a \sim d \to b$, and $a \to b$.
        We cannot have $c \to a \gets d$ otherwise such a v-struct will prevent this configuration from occurring.
        Without loss of generality, $a \to c$.
        Then, we can apply the argument of the third configuration on the subgraph induced by $\{a,b,c\}$ to conclude that $a \to b$ is also oriented in $G'$.
\end{enumerate}

\textbf{Direction 2 ($A' \subseteq A$)}:
Repeat the \emph{exact} same argument but perform the following 2 swaps:
\begin{enumerate}
    \item Swap the roles of $G$ and $G'$
    \item Swap the roles of $(\cI \cup \cI')$ and $\{S \cap V(H)~:~S \in \cI'\}$ \qedhere
\end{enumerate}
\end{proof}
\section{Further analysis of the standing windmill essential graph}
\label{sec:appendix-standing-windmill}

In this section, we show that \emph{all} DAGs in the standing windmill essential graph requires at least 3 and at most 4 atomic interventions.

By \cref{thm:efficient-optimal-atomic}, we know that the optimal number of atomic interventions needed to verify any graph is the size of the minimum vertex cover of its oriented edges.
To explore the space of DAGs in the essential graph, we will perform covered edge reversals (as justified by \cref{lem:sequence}).

Consider the DAG $G^*$ with MEC $[G^*]$ and the standing windmill essential graph $\cE(G^*)$ in \cref{fig:standing-windmill-full}.
Starting from $G^*$, if we fix the arc direction $h \to a$, then reversing any arc (possibly multiple times) from the set $\{ b \sim c, d \sim e, f \sim g\}$ does \emph{not} change the covered edge status of any edge (i.e.\ the covered edges remain exactly the same 4 edges) and thus the size of the minimum vertex cover remains unchanged.
Meanwhile, reversing $a \sim h$ in $G^*$ yields the graph $G_1$.
Fixing the arc direction $a \to h$, we observe that the three sets of edges $\{a \sim b, a \sim c, b \sim c\}$, $\{a \sim d, a \sim e, d \sim e\}$, and $\{a \sim f, a \sim g, f \sim g\}$ are symmetric.
Furthermore, if we flip one of the edges from $\{ a \sim b, a \sim d, a \sim f \}$ from $G_1$ (or $\{ a \sim c, a \sim d, a \sim f \}$ from $G_4$), then all other two $a \to \cdot$ arcs are no longer covered edges.
So, it suffices to study what happens when we only reverse arc directions in one of these sets: $\{a \sim b, a \sim c, b \sim c\}$, $\{a \sim d, a \sim e, d \sim e\}$, and $\{a \sim f, a \sim g, f \sim g\}$.
The graphs $G_1$ to $G_6$ illustrate all possible cases when we fix $a \to h$ and only reverse edges in the set $\{a \sim b, a \sim c, b \sim c\}$.
We see that $\nu_1(G^*) = \nu_1(G_1) = \nu_1(G_4) = 4$ and $\nu_1(G_2) = \nu_1(G_3) = \nu_1(G_5) = \nu_1(G_6) = 3$.
Thus, we can conclude that $\min_{G \in [G^*]} \nu_1(G) = 3$ and $\max_{G \in [G^*]} \nu_1(G) = 4$.

\begin{figure}[htbp]
\centering
\input{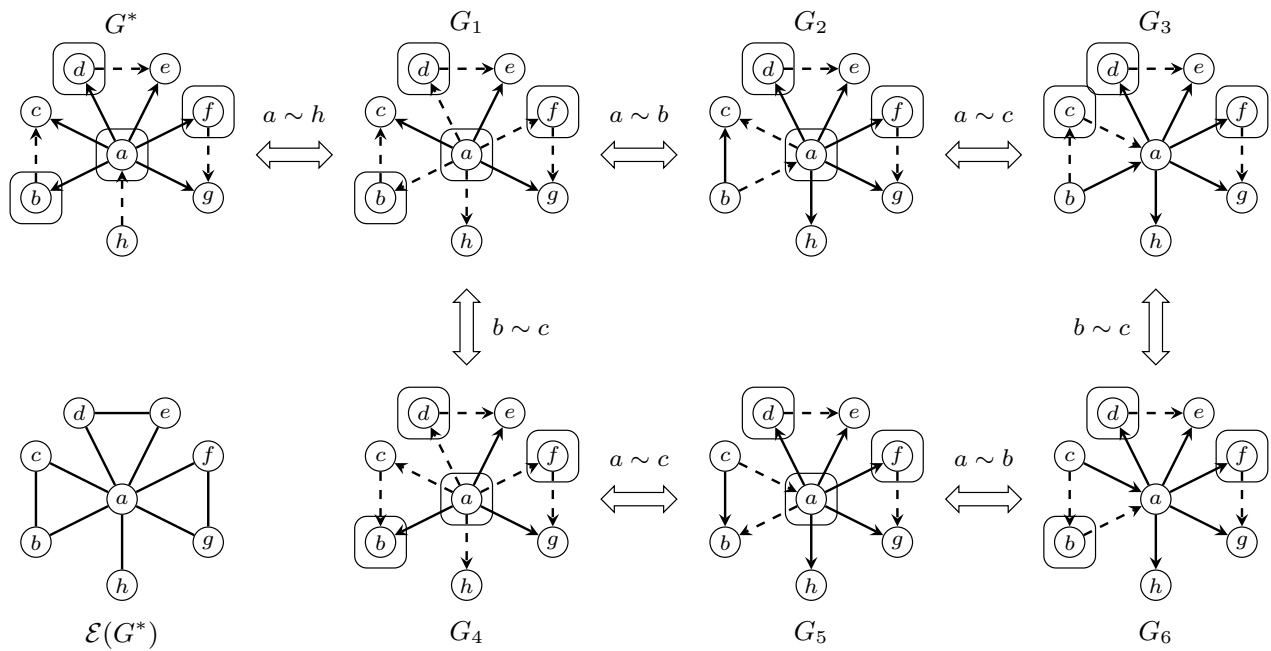}
\caption{
A DAG $G^*$ with its essential graph $\cE(G^*)$ and some of the graphs $G \in [G^*]$.
In each DAG, dashed arcs are covered edges and the boxed vertices represent a minimum vertex cover.
}
\label{fig:standing-windmill-full}
\end{figure}

\section{Upper and (worst case) lower bounds for fully orienting a DAG from its essential graph using ideal interventions}
\label{sec:appendix-landscape}

Let us briefly distinguish the various problem settings before summarizing the state of the art results.

\paragraph{Intervention size}
Since interventions are expensive, natural restrictions on the size of any intervention $S \in \cI$ has been studied.
Bounded size interventions enforce that an upper bound of $|S| \leq k$ always while unbounded size interventions allow $k$ to be as large as $n/2$.
Note that it does not make sense to intervene on a set $S$ with $|S| > n/2$ since intervening on $\overline{S}$ yields the same information while being a strictly smaller interventional set.
Atomic interventions are a special case where $k=1$.

\paragraph{Adaptivity}
A passive/non-adaptive/simultaneous algorithm is one which, given an essential graph $\cE(G^*)$, decides a \emph{set} of interventions without looking at the outcomes of the interventions.
Meanwhile, active/adaptive algorithms can provide a \emph{sequence} of interventions one-at-a-time, possibly using any information gained from the outcomes of earlier chosen interventions.

\paragraph{Determinism}
An algorithm is deterministic if it always produces the same output given the same input.
Meanwhile, randomized algorithms produces an output from a distribution.
Analyses of randomized algorithms typically involve probabilistic arguments and their performance is measured in expectation with probabilistic success\footnote{Typically, they will be shown to succeed with high probability in $n$: as the size of the graph $n$ increases, the failure probability decays quickly in the form of $n^{-c}$ for some constant $c > 1$.}.
The ability to use random bits (e.g.\ outcome of coin flips) is very powerful and may allow one to circumvent known deterministic lower bounds.

\paragraph{Special graph classes}
Two graph classes of particular interest are cliques and trees.
If $CC(\cE(G^*))$ is a clique, then all $\binom{n}{2}$ edges are present and fully orienting the clique is equivalent to finding the unique valid permutation on the vertices.
As such, cliques are often used to prove worst case lower bounds.
Meanwhile, if $CC(\cE(G^*))$ is a tree, then there must be a unique root (else there will be v-structures) and it suffices\footnote{This will later be obvious through the lenses of covered edges: all covered edges are incident to the root.} to intervene on the root node to fully orient the tree.

\cref{tab:upper-bounds} and \cref{tab:lower-bounds} summarize some existing upper (sufficient) and lower (worst case necessary) bounds on the size ($|\cI|$, or $\E[|\cI|]$ for randomized algorithms) of intervention sets that fully orient a given essential graph.
These lower bounds are ``worst case'' in the sense that there exists a graph, typically a clique, which requires the stated number of interventions.
Observe that there are settings where adaptivity and randomization strictly improves the number of required interventions.

\begin{table}[htbp]
\centering
\begin{tabular}{@{}cccccc@{}}
\toprule
Size & Adaptive & Randomized & Graph & Upper bound & Reference\\
\midrule
1 & \xmark & \xmark & General & $n-1$ & \cite{eberhardt2006n}\\
1 & \xmark & \cmark & General & $\frac{2}{3} n - \frac{1}{3}$ for $n > 3$ & \cite{eberhardt2010causal}\\
1 & \cmark & \xmark & Tree & $\cO(\log n)$ & \cite{shanmugam2015learning}\\
1 & \cmark & \xmark & Tree & $\lceil \log n \rceil$ & \cite{greenewald2019sample}\\
$\leq k$ & \xmark & \xmark & General & $(\frac{n}{k} - 1) + \frac{n}{2k}\log_2 k$ & \cite{eberhardt2012number}\\
$\leq k$ & \cmark & \xmark & Tree & $\lceil \log_{k+1} n \rceil$ & \cite{greenewald2019sample}\\
$\leq k$ & \cmark & \cmark & Clique & $\cO(\frac{n}{k} \log \log k)$ & \cite{shanmugam2015learning}\\
$\infty$ & \xmark & \xmark & General & $\log_2 n$ & \cite{eberhardt2012number}\\
$\infty$ & \xmark & \xmark & General & $\lceil \log_2(\omega(\cE(G)) \rceil$ & \cite{hauser2014two}\\
$\infty$ & \xmark & \cmark & General & $\cO(\log \log n)$ & \cite{hu2014randomized}\\
\bottomrule
\end{tabular}
\caption{
Upper bounds on the size ($|\cI|$, or $\E[|\cI|]$ for randomized algorithms) of the intervention set sufficient to fully orient a given essential graph $\cE(G)$.
The first three columns indicate the setting which the algorithm operates in terms of intervention size, adaptivity, and randomness.
The fourth column indicate whether the algorithm is for special graph classes.
Roughly speaking, the algorithm has \emph{more power as we move down the rows} since it can use larger intervention sets, be adaptive, utilize randomization, and possibly only work on special graph classes.
}
\label{tab:upper-bounds}
\end{table}

\begin{table}[htbp]
\centering
\begin{tabular}{@{}ccccc@{}}
\toprule
Size & Adaptive & Randomized & Lower bound & Reference\\
\midrule
1 & \xmark & \cmark & $\frac{2}{3} n - \frac{1}{3}$ for $n > 3$ & \cite{eberhardt2010causal}\\
1 & \cmark & \xmark & $n-1$ & \cite{eberhardt2006n}\\
$\leq k$ & \xmark & \xmark & $(\frac{n}{k} - 1) + \frac{n}{2k}\log_2 k$ & \cite{eberhardt2012number}\\
$\leq k$ & \cmark & \cmark & $\frac{n}{2k}$ & \cite{shanmugam2015learning}\\
$\infty$ & \xmark & \xmark & $\log_2 n$ & \cite{eberhardt2012number}\\
$\infty$ & \xmark & \cmark & $\Omega(\log \log n)$ & \cite{hu2014randomized}\\
$\infty$ & \cmark & \cmark & $\lceil \log_2(\omega(\cE(G)) \rceil$ & \cite{hauser2014two}\\
\bottomrule
\end{tabular}
\caption{
Lower bounds on the size ($|\cI|$, or $\E[|\cI|]$ for randomized algorithms) of the intervention set necessary to fully orient a given essential graph $\cE(G)$.
The first three columns indicate the setting which the algorithm operates in terms of intervention size, adaptivity, and randomness.
Roughly speaking, the setting becomes \emph{easier as we move down the rows} so the lower bounds are \emph{stronger as we move down the rows}.
On cliques, \cite{shanmugam2015learning} also showed that $\geq n/2$ vertices must be intervened.
}
\label{tab:lower-bounds}
\end{table}

\section{Verification}
\label{sec:appendix-verification}

\subsection{Properties of covered edges}

\coverededgesproperties*
\begin{proof}
\hspace{0pt}
\begin{enumerate}
    \item Suppose, for a contradiction, that there exists some vertex $w$ with two incoming covered edges $u \to w \gets v$.
    For $u \to w$ to be covered, we must have $v \to u$.
    Similarly, for $v \to w$ to be covered, we must have $u \to v$.
    However, we cannot simultaneously have both $u \to v$ and $v \to u$, as it would lead to a contradiction as $G$ is a DAG. 
    Furthermore, since $G$ is acyclic, it implies that $H$ must also be acyclic. Therefore $H$ is a forest of directed trees.
    \item Let $A = \{v_1 \to v_2, v_2 \to v_3, \ldots, v_{n-1} \to v_n\}$ be the set of arcs of interest.
    For any arc $v_i \to v_{i+1} \in A$, one can check that they share the same parents by the topological ordering $\pi$.
    Consider an arbitrary arc $v_i \to v_j \not\in A$.
    Since $v_i \to v_j \not\in A$, there exists $v_k \in V$ such that $\pi(v_i) < \pi(v_k) < \pi(v_j)$.
    Then, since $G$ is a clique, we must have $v_i \to v_k \to v_j$ and so $v_i \to v_j$ \emph{cannot} be covered since $v_k \in Pa(v_j) \setminus \{v_i\}$ but $v_k \not\in Pa(v_i) \setminus \{v_j\}$.
    \item Suppose, for a contradiction, that $u$ is a sink of some maximal clique $K_h$ of size $h$ and $u \to v$ is a covered edge.
    Then, we must have $Pa(v) \setminus \{u\} = Pa(u) \setminus v$.
    However, that means that $V(K_h) \cup \{v\}$ is a clique of size $h+1$.
    Thus, $K_h$ was not a maximal clique.
    Contradiction.
\end{enumerate}
\end{proof}

\subsection{Characterization via separation of covered edges}

\begin{lemma}[Necessary]
\label{lem:necessary}
Fix an essential graph $\cE(G^*)$ and $G \in [G^*]$.
If $\cI \subseteq 2^V$ is a verifying set, then $\cI$ separates all unoriented covered edge $u \sim v$ of $G$.
\end{lemma}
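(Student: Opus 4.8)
Fix an essential graph $\cE(G^*)$ and $G \in [G^*]$. If $\cI \subseteq 2^V$ is a verifying set, then $\cI$ separates every unoriented covered edge $u \sim v$ of $G$.

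Here is my thinking about how to prove this.

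The claim is the "necessary" direction. I want to show that if some covered edge $u \sim v$ of $G$ that is unoriented in $\cE(G^*)$ is *not* separated by $\cI$, then $\cI$ cannot be a verifying set. So let me prove the contrapositive: suppose $u \sim v$ is a covered edge of $G$, unoriented in $\cE(G^*)$, and $\cI$ does not separate it. I want to exhibit a graph $G' \in [G^*]$, distinct from $G$, that is indistinguishable from $G$ under the interventions in $\cI$, thereby showing $\cE_\cI(G^*)$ cannot equal $G$.

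Let me think about the overall plan. Since $u \sim v$ is a covered edge of $G$, by Lemma~\ref{lem:sequence} reversing it yields a DAG $G'$ in the same MEC $[G^*] = [G]$ that differs from $G$ exactly in the orientation of this one arc. The natural idea is: because $\cI$ does not separate $u \sim v$, for every $S \in \cI$ we have $|\{u,v\} \cap S| \in \{0, 2\}$, i.e. $u$ and $v$ are either both in $S$ or both out of $S$. I want to argue that $G$ and $G'$ have the *same* $\cI$-essential graph, so $\cI$ cannot distinguish them and hence $\cE_\cI(G)[\{u,v\}]$ stays unoriented, contradicting that $\cI$ verifies $G$.

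The cleanest route, I expect, is to invoke the characterization of $\cI$-essential graphs (Theorem~\ref{thm:characterization-of-I-essential-graph}, and the interventional-Markov-equivalence framework). Two DAGs $G$ and $G'$ have the same $\cI$-essential graph precisely when they are $\cI$-Markov equivalent, which (for ideal interventions) means they share the same skeleton and v-structures *and* agree on which arcs are cut by each intervention $S \in \cI$. Since $G'$ is obtained from $G$ by a single covered-edge reversal, they share the same skeleton and v-structures (that is exactly what a covered-edge reversal preserves, which is the content behind Lemma~\ref{lem:sequence}). The remaining thing to check is the interventional distinguishability: for each $S \in \cI$, the interventional graphs $G_S$ and $G'_S$ should be Markov equivalent. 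Because $G$ and $G'$ differ only in the arc between $u$ and $v$, and because every $S$ contains either both of $u,v$ or neither, intervening on $S$ removes incoming arcs to $S$ symmetrically: if both $u,v \in S$, the arc between them is cut in both $G_S$ and $G'_S$ (and since $u,v$ share the same other parents, their interventional graphs coincide); if neither is in $S$, the arc is untouched in both. In either case $G_S$ and $G'_S$ have the same skeleton and v-structures, so the outcome of intervention $S$ cannot tell $G$ from $G'$.

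The main obstacle, and where I would spend the most care, is the v-structure bookkeeping in the interventional graphs $G_S$ versus $G'_S$ when $\{u,v\} \cap S = \emptyset$: I need the covered-edge hypothesis $Pa(u)\setminus\{v\} = Pa(v)\setminus\{u\}$ to guarantee that reversing $u \to v$ to $v \to u$ creates no new v-structure and destroys none — this is exactly why covered edges are the special ones whose reversal preserves Markov equivalence, and the interventional version requires rechecking that the intervention on $S$ (removing arcs into $S$) does not break the symmetry between $u$'s and $v$'s parent sets. Concretely, since $u$ and $v$ share identical parents outside $\{u,v\}$, for any $S$ either containing or avoiding both, the set of parents of $u$ and of $v$ retained after the intervention is altered identically, so no asymmetry that would pin down the orientation of $u \sim v$ can arise. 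Once this symmetry is established for every $S \in \cI$, $G$ and $G'$ are $\cI$-Markov equivalent, the edge $u \sim v$ remains unoriented in $\cE_\cI(G^*)$, and hence $\cI$ fails to verify $G$ — completing the contrapositive.
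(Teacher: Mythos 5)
Your proposal is correct, but it takes a genuinely different route from the paper. The paper argues directly inside the orientation machinery: assuming $u \sim v$ is an unseparated covered edge, it checks each of the four Meek rules in turn and shows that the configuration needed to fire each rule would contradict the covered-edge condition $Pa(u)\setminus\{v\} = Pa(v)\setminus\{u\}$ (or would already be a v-structure), so no rule can ever orient $u \sim v$ and $\cI$ cannot be a verifying set. You instead give an indistinguishability argument: reverse the covered edge to get $G' \in [G]$ (\cref{lem:sequence}), and use the fact that every $S \in \cI$ contains both of $u,v$ or neither to show $G_S = G'_S$ in the first case and that $u \to v$ remains covered in $G_S$ in the second (since $u,v \notin S$ means their parent sets are untouched), so $G_S$ and $G'_S$ are Markov equivalent for every $S \in \cI$; hence $G'$ lies in the same $\cI$-Markov equivalence class and the edge stays unoriented in $\cE_{\cI}(G)$. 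Your route leans on the characterization of interventional Markov equivalence (that the $\cI$-essential graph orients exactly the arcs on which all $\cI$-Markov-equivalent DAGs agree), whereas the paper's leans on the soundness and completeness of Meek rules; both are standard. What your version buys is a cleaner, rule-free explanation of \emph{why} separation is necessary --- an unseparated covered edge admits an explicit alternative DAG that no intervention in $\cI$ can rule out, which is information-theoretic and independent of any particular orientation procedure --- at the cost of invoking the Hauser--B\"uhlmann equivalence characterization; the paper's version is more mechanical but stays entirely within the Meek-rule toolkit it already needs for the sufficiency direction.
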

\begin{proof}
Let $u \to v$ be an arbitrary unoriented covered edge in $\cE(G^*)$ and $\cI$ be an intervention set where $u$ and $v$ are \emph{never} separated by any $S \in \cI$.
Then, interventions will not orient $u \to v$ and we can only possibly orient it via Meek rules.
We check that all four Meek rules will \emph{not} orient $u \to v$:
\begin{description}
    \item [(R1)] For R1 to trigger, we need to have $w \to u \to v$ and $w \not\sim v$ for some vertex $w \in V \setminus \{u,v\}$.
        However, such a vertex $w$ will imply that $u \to v$ is \emph{not} a covered edge.
    \item [(R2)] For R2 to trigger, we need to have $u \to w \to v$ for some $w \in V \setminus \{u,v\}$.
        However, such a vertex $w$ will imply that $u \to v$ is \emph{not} a covered edge.
    \item [(R3)] For R3 to trigger, we must have $w \sim u \sim x$, $w \to v \gets x$, and $w \not\sim x$ for some $w,x \in V \setminus \{u,v\}$.
        Since $u \to v$ is a covered edge, we must have $w \to u \gets x$.
        This implies that both $w \to u \gets x$ appear as v-structures in $\cE(G^*)$ and thus R3 will not trigger.
    \item [(R4)] For R4 to trigger, we must have $w \sim u \sim x$, $w \to x \to v$, and $w \not\sim v$ for some $w,x \in V \setminus \{u,v\}$.
        Since $u \to v$ is covered, we must have $x \to u$.
        To avoid directed cycles, it must be the case that $w \to u$.
        However, this implies that $u \to v$ is \emph{not} covered since $w \to u$ while $w \not\sim v$.
\end{description}
Therefore, $\cI$ \emph{cannot} be a verifying set if $u$ and $v$ are \emph{never} separated by any $S \in \cI$.
\end{proof}

\begin{lemma}[Sufficient]
\label{lem:sufficient}
Fix an essential graph $\cE(G^*)$ and $G \in [G^*]$.
If $\cI \subseteq 2^V$ is an intervention set that separates every unoriented covered edge $u \sim v$ of $G$, then $\cI$ is a verifying set.
\end{lemma}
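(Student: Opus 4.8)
The plan is to show that, once the interventions in $\cI$ have oriented every unoriented covered edge of $G$ (consistently with $G$, since separating an edge reveals its true direction), repeatedly applying the Meek rules to the resulting partially oriented graph orients \emph{every} remaining edge, giving $\cE_{\cI}(G) = G$. Because the Meek rules are sound, any edge they orient is oriented as in $G$, so it suffices to argue that every edge eventually becomes oriented. Fix a valid topological ordering $\pi$ of $G$ and write $V_i = \{v : \pi(v) \le i\}$, with $v_i$ the $i$-th vertex. I would prove by induction on $i$ that the Meek closure of the post-intervention graph fully orients $G[V_i]$; the base case $i=1$ is vacuous.

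For the inductive step I assume $G[V_{i-1}]$ is fully oriented. The only new edges in $G[V_i]$ are the incoming edges $u \to v_i$ with $u \in V_{i-1}$, so I must orient each of these. Suppose, for contradiction, that at the closure some incoming edge of $v_i$ is still unoriented, and let $u$ be the source of such an edge with $\pi(u)$ \emph{maximal}. By this maximality, every parent $w$ of $v_i$ with $\pi(w) > \pi(u)$ already has its arc $w \to v_i$ oriented. The edge $u \to v_i$ is not covered (covered edges are oriented by $\cI$) and is not already oriented in $\cE(G^*)$; since $v_i \notin Pa(u)$ by acyclicity, non-coveredness means $Pa(u) \ne Pa(v_i) \setminus \{u\}$, which splits into two cases. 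If there exists $w \in Pa(u) \setminus Pa(v_i)$, then $\pi(w) < \pi(v_i)$ together with $w \notin Pa(v_i)$ forces $w \not\sim v_i$, while $w \to u$ is oriented by the induction hypothesis; Meek rule R1 applied to $w \to u$, $u \sim v_i$, $w \not\sim v_i$ orients $u \to v_i$. Otherwise there exists $w \in Pa(v_i) \setminus (Pa(u) \cup \{u\})$: if $w \not\sim u$ then $w \to v_i \gets u$ is a v-structure and $u \to v_i$ would already be oriented in $\cE(G^*)$, a contradiction, so $w \sim u$, and since $w \notin Pa(u)$ we have $u \to w$ with $\pi(u) < \pi(w)$. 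The arc $u \to w$ lies inside $V_{i-1}$ and is oriented by the induction hypothesis, and $w \to v_i$ is oriented by the maximality of $\pi(u)$; hence Meek rule R2 applied to $u \to w \to v_i$ orients $u \to v_i$. Either way we contradict that $u \to v_i$ is unoriented at the closure, completing the step. The same argument applies independently within each chain component, so the whole graph is oriented and $\cI$ is a verifying set.

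The main obstacle is the R2 case: for R2 to fire on $u \to w \to v_i$ one needs the ``middle'' arc $w \to v_i$ to be oriented \emph{before} $u \to v_i$, yet both are incoming edges of the same vertex $v_i$ and so are not separated by the outer induction on $i$. The device that resolves this is to always target the unoriented incoming edge of $v_i$ whose source has the \emph{largest} $\pi$-value; this guarantees the witness $w$ satisfies $\pi(w) > \pi(u)$ and therefore already has an oriented arc into $v_i$. Checking that the two non-covered cases are exhaustive, and that in each case all premises of R1 or R2 are already established (via the outer induction hypothesis for arcs inside $V_{i-1}$ and via maximality of $\pi(u)$ for the arc $w \to v_i$), is the technical heart of the argument.
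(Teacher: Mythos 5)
Your proposal is correct and follows essentially the same route as the paper's proof: induction along a topological ordering $\pi$, selecting among the unoriented incoming edges of $v_i$ the one whose source $u$ maximizes $\pi(u)$, and then splitting the failure of coveredness into the two witness cases $w \in Pa(u)\setminus Pa(v_i)$ (handled by Meek rule R1) and $w \in Pa(v_i)\setminus(Pa(u)\cup\{u\})$ (handled by the v-structure observation or Meek rule R2, using maximality of $\pi(u)$ to ensure $w \to v_i$ is already oriented). Your writeup is, if anything, slightly more explicit than the paper's about why the two cases are exhaustive.
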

\begin{proof}
Let $\cI$ be an arbitrary intervention set such that every unoriented covered edge $u \sim v$ of $G$ has an set $S \in \cI$ that separates $u$ and $v$.
Fix an arbitrary valid vertex permutation $\pi: V \to [n]$ of $G$.
For any $i \in [n]$, define $V_i = \{\pi^{-1}(1), \ldots, \pi^{-1}(i)\} \subseteq V$ as the $i$ smallest vertices according to $\pi$'s ordering.
We argue that any unoriented edges in $\cE(G^*)[V_i]$ will be oriented by $\cI$ by performing induction on $i$.

\textbf{Base case ($i=1$)}: There are no edges in $G[V_1]$ so $\cE(G^*)[V_1]$ is trivially fully oriented.

\textbf{Inductive case ($i > 1$)}: Suppose $v = \pi^{-1}(i)$.
By induction hypothesis, $\cE(G^*)[V_{i-1}]$ is fully oriented so any unoriented edge in $\cE(G^*)[V_i]$ must have the form $u \to v$, where $\pi(u) < \pi(v)$.
For any $u \to v$ is an unoriented covered edge in $\cE(G^*)[V_i]$, there will be an intervention $S \in \cI$ that separates $u$ and $v$ (or both), and hence orient $u \to v$.

Suppose, for a contradiction, that there exists unoriented edges in $\cE(G^*)[V_i]$ that are \emph{not} covered edges.
Let $u \to v$ be the unoriented edge where $\pi(u)$ is \emph{maximized}.
Then, one of the two cases must occur:
\begin{description}
    \item[Case 1] ($u \to v$ and $\exists w \in V_i$ such that $w \to u$ and $w \not\to v$)
        Since $\pi(v) > \pi(w)$, we must have $w \not\sim v$.
        By induction, $w \to u$ will be oriented.
        So, R1 orients $u \to v$.
    \item[Case 2] ($u \to v$ and $\exists w \in V_i$ such that $w \to v$ and $w \not\to u$)
        If $w \not\sim u$, then $u \to v \gets w$ is a v-structure and $u \to v$ would have been oriented.
        If $w \sim u$, then we must have $u \to w$ and $\pi(u) < \pi(w)$.
        By induction, $u \to w$ will be oriented.
        Since $\pi(u) < \pi(w)$ and $\pi(u)$ is maximized out of all possible unoriented edges in $\cE(G^*)[V_i]$ involving $v$, $w \to v$ must be an oriented edge and will be oriented by $\cI$.
        So, R2 orients $u \to v$.
\end{description}
In either case, $u \to v$ will be oriented.
Contradiction.
\end{proof}

Combining \cref{lem:necessary} and \cref{lem:sufficient} gives the following characterization of verifying sets.

\optimalverifyingset*

\subsection{Solving the verification problem}

\solvingverification*
\begin{proof}
Using \cref{thm:optimal-verifying-set}, we know that the minimal verifying set for $G$ is the smallest possible set of interventions $\cI$ such that \emph{all} covered edges of $G$ is separated by some intervention $S \in \cI$.
If the graph is fully oriented after intervening on all $S \in \cI$, then it must be the case that $G = G^*$.
Otherwise, we will either detect that some edge orientation disagrees with $G$ or there remains some unoriented edge at the end of all our interventions.
In the first case, we trivially conclude that $G \neq G^*$.
In the second case, \cref{thm:optimal-verifying-set} tells us that any such unoriented edge must be an unoriented covered edge of $G^*$ (which was not an unoriented cover edge of $G^A$) and so we can also conclude that $G \neq G^*$.

one unoriented covered edge in $G$, reverse it to get $G'$. cannot distinguishi.
\end{proof}

\subsection{Efficient optimal verification via atomic interventions}

\efficientoptimalatomic*
\begin{proof}
For $|S| = 1$, we see that $\cI$ separates every unoriented covered edge in $\cE(G)$ if and only if the set $\cup_{S \in \cI} S$ is a vertex cover of the unoriented covered edges in $\cE(G)$.
\cref{lem:covered-edges-properties} tells us that the edge-induced subgraph on covered edges of $G$ is a forest.
Thus, one can perform the standard dynamic programming algorithm to compute the minimum vertex cover on each tree.
\end{proof}

\subsection{Efficient near-optimal verification via bounded size interventions}
\label{sec:near-optimal-bounded}

We first prove a simple lower bound on the minimum number of non-atomic bounded size interventions (i.e.\ $|S| \leq k$) needed for verification and then show how to adapt a minimal atomic verifying set to obtain a near-optimal bounded size verifying set.

\begin{lemma}
\label{lem:atomic-only-helps}
Fix an essential graph $\cE(G^*)$ and $G \in [G^*]$.
Suppose $\cI$ is an arbitrary bounded size intervention set.
Intervening on vertices in $\cup_{S \in \cI} S$ one at a time, in an atomic fashion, can only increase the number of separated covered edges of $G$.
\end{lemma}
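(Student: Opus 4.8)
The plan is to prove the stronger containment statement that the set of covered edges separated by the atomic interventions \emph{contains} the set separated by $\cI$; the claimed monotonicity of the count then follows immediately. Write $W = \cup_{S \in \cI} S$ for the set of all intervened vertices, and let $\cI_{\mathrm{atomic}} = \{\{w\} : w \in W\}$ be the atomic intervention set obtained by intervening on each vertex of $W$ one at a time. For any intervention set $\mathcal{J}$, let $C(\mathcal{J})$ denote the collection of covered edges of $G$ that $\mathcal{J}$ separates. It suffices to show $C(\cI) \subseteq C(\cI_{\mathrm{atomic}})$.

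First I would take an arbitrary covered edge $u \sim v \in C(\cI)$. By the definition of separation there is some $S \in \cI$ with $|\{u,v\} \cap S| = 1$; without loss of generality suppose $u \in S$ and $v \notin S$. Then $u \in W$, so the singleton $\{u\}$ belongs to $\cI_{\mathrm{atomic}}$. Since $u \neq v$, we have $|\{u,v\} \cap \{u\}| = 1$, so $\{u\}$ separates $u \sim v$, and hence $u \sim v \in C(\cI_{\mathrm{atomic}})$. This establishes the containment, and since $|C(\cI)| \leq |C(\cI_{\mathrm{atomic}})|$, the number of separated covered edges can only increase.

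The conceptual point, and the only place where the inequality could fail to be an equality, is when a covered edge $u \sim v$ has \emph{both} endpoints inside a single $S \in \cI$. Such an $S$ does not separate $u \sim v$, since $|\{u,v\} \cap S| = 2$, so a bounded intervention may ``waste'' both endpoints on the same set; yet the atomic interventions $\{u\}$ and $\{v\}$ each separate it. This is precisely why decomposing into atomic interventions can only help. There is no genuine obstacle here beyond carefully tracking the separation condition $|\{u,v\} \cap S| = 1$ and observing that membership of a single endpoint of $u \sim v$ in $W$ guarantees a separating singleton in $\cI_{\mathrm{atomic}}$.
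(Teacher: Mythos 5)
Your proof is correct and follows essentially the same route as the paper's: both arguments take an arbitrary covered edge separated by some $S \in \cI$, note that exactly one endpoint (WLOG $u$) lies in $S$, and observe that the atomic intervention $\{u\}$ then separates the same edge, giving the containment of separated edge sets. Your additional remark about when the inequality is strict is accurate but not needed for the claim.
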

\begin{proof}
Consider an arbitrary covered edge $u \sim v$ that was seprated by some intervention $S \in \cI$.
This means that $|\{u,v\} \cap S| = 1$.
Without loss of generality, suppose $u \in S$.
Then, when we intervene on $u$ in an atomic fashion, we would also separate the edge $u \sim v$.
\end{proof}

\begin{lemma}
\label{lem:bounded-size-lb}
Fix an essential graph $\cE(G^*)$ and $G \in [G^*]$.
If $\nu_1(G) = \ell$, then $\nu_k(G) \geq \lceil \frac{\ell}{k} \rceil$.
\end{lemma}
\begin{proof}
A bounded size intervention set of size strictly less than $\lceil \frac{\ell}{k} \rceil$ involves strictly less than $\ell$ vertices.
By \cref{thm:efficient-optimal-atomic} and \cref{lem:atomic-only-helps}, such an intervention set cannot be a verifying set.
\end{proof}

\begin{lemma}
\label{lem:bounded-size-ub}
Fix an essential graph $\cE(G^*)$ and $G \in [G^*]$.
If $\nu_1(G) = \ell$, then there exists a polynomial time algorithm that computes a bounded size intervention set $\cI$ of size $|\cI| \leq \lceil \frac{\ell}{k} \rceil + 1$.
\end{lemma}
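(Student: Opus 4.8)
The plan is to leverage the fact established in \cref{lem:covered-edges-properties} that the covered edges of $G$ form a forest, hence a bipartite graph. I would start from a minimum vertex cover $U$ of the unoriented covered edges of $G$, which by \cref{thm:efficient-optimal-atomic} has size exactly $\nu_1(G) = \ell$ and constitutes a minimal atomic verifying set. The goal is to repackage these $\ell$ vertices into bounded-size interventions (each of size $\leq k$) while ensuring the resulting intervention set still separates every unoriented covered edge.

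The key obstruction to naive grouping is that if I place two vertices $u,v$ into the same intervention $S$ and $u \sim v$ happens to be a covered edge, then $S$ fails to separate that edge (since $|\{u,v\} \cap S| = 2 \neq 1$). \textbf{This is the main obstacle, and the forest/bipartite structure is exactly what resolves it.} Since the edge-induced subgraph $H$ on the covered edges is a forest, it is bipartite; I would 2-color $H$ to obtain a partition of its vertices into two independent sets $L$ and $R$. The vertex cover $U$ splits as $U = U_L \cup U_R$ where $U_L = U \cap L$ and $U_R = U \cap R$. Crucially, within $U_L$ (respectively $U_R$) no two vertices are joined by a covered edge, so I can freely group them into larger interventions without destroying any separation: each covered edge is covered by some vertex in $U$, and that vertex's intervention separates the edge precisely because its other endpoint lies in a different part or is simply not bundled adversarially.

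Concretely, I would partition $U_L$ into $\lceil |U_L|/k \rceil$ groups of size $\leq k$ and $U_R$ into $\lceil |U_R|/k \rceil$ groups of size $\leq k$, giving an intervention set $\cI$ that separates every unoriented covered edge (by \cref{thm:optimal-verifying-set}, $\cI$ is therefore a verifying set) with
\[
|\cI| \;=\; \left\lceil \frac{|U_L|}{k} \right\rceil + \left\lceil \frac{|U_R|}{k} \right\rceil.
\]
Using $|U_L| + |U_R| = \ell$ and the elementary inequality $\lceil a/k \rceil + \lceil b/k \rceil \leq \lceil (a+b)/k \rceil + 1$ for nonnegative integers $a,b$, I conclude $|\cI| \leq \lceil \ell/k \rceil + 1$. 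The entire construction---computing the minimum vertex cover on the forest, 2-coloring, and greedily bundling---runs in polynomial time, which establishes the claimed bound. The ``$+1$'' slack arises exactly from the two separate ceilings over the bipartition and cannot in general be avoided, which is consistent with the tightness remark accompanying \cref{thm:efficient-near-optimal-bounded}.
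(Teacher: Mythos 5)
Your proposal is correct and follows essentially the same route as the paper's proof: take a minimum vertex cover of the unoriented covered edges, use the forest structure from \cref{lem:covered-edges-properties} to 2-color, group vertices within each partite into interventions of size at most $k$, and bound the count via the two ceilings (the paper phrases this as an explicit case analysis on the sizes of the two leftover groups, which is equivalent to your inequality $\lceil a/k\rceil + \lceil b/k\rceil \leq \lceil (a+b)/k\rceil + 1$). Your justification that separation is preserved is also the right one: for any covered edge, the cover vertex's bundle lies entirely in one part of the bipartition while the other endpoint lies in the other part, so exactly one endpoint is intervened.
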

\begin{proof}
Consider the atomic verifying set $\cI$ of $G$.
By \cref{lem:covered-edges-properties}, the edge-induced subgraph on covered edges of $G$ is a forest and is thus 2-colorable.

Split the vertices in $\cI$ into partitions according to the 2-coloring.
By construction, vertices belonging in the same partite will \emph{not} be adjacent and thus choosing them together to be in an intervention $S$ will \emph{not} reduce the number of separated covered edges.
Now, form interventions of size $k$ by greedily picking vertices in $\cI$ within the same partite.
For the remaining unpicked vertices (strictly less than $k$ of them), we form a new intervention with them.
Repeat the same process for the other partite.

This greedy process forms groups of size $k$ and at most 2 groups of sizes, one from each partite.
Suppose that we formed $z$ groups of size $k$ in total and two ``leftover groups'' of sizes $x$ and $y$, where $0 \leq x,y < k$.
Then, $\ell = z \cdot k + x + y$, $\frac{\ell}{k} = z + \frac{x+y}{k}$, and we formed at most $z + 2$ groups.
If $0 \leq x+y < k$, then $\lceil \frac{\ell}{k} \rceil = z+1$.
Otherwise, if $k \leq x+y < 2k$, then $\lceil \frac{\ell}{k} \rceil = z+2$.
In either case, we use at most $\lceil \frac{\ell}{k} \rceil + 1$ interventions, each of size $\leq k$.

One can compute a bounded size intervention set efficiently because the following procedures can all be run in polynomial time:
(i) checking if each edge is a covered edge;
(ii) computing a minimum vertex cover on a tree;
(iii) 2-coloring a tree;
(iv) greedily grouping vertices into sizes $\leq k$.
\end{proof}

\cref{thm:efficient-near-optimal-bounded} follows by combining \cref{lem:bounded-size-lb} and \cref{lem:bounded-size-ub}.

\efficientnearoptimalbounded*

Observe that there exists graphs and values $k$ such that the optimal bounded size verifying set requires at least $\lceil \frac{\ell}{k} \rceil + 1$, and thus our upper bound is tight in the worst case: \cref{fig:near-optimal} shows there exists a family of graphs (and values $k$) such that the optimal bounded size verifying set requires $\lceil \frac{\ell}{k} \rceil + 1$.
However, we do not have a proof that \cref{thm:efficient-near-optimal-bounded} is optimal (or counter example that it is not).

\begin{conjecture}
\label{conj:optimality}
The construction of bounded size verifying set given in \cref{thm:efficient-near-optimal-bounded} is optimal.
\end{conjecture}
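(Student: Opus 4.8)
The plan is to first recast the conjecture as a purely combinatorial statement about forests, using the characterization of \cref{thm:optimal-verifying-set}. Let $F$ be the forest of unoriented covered edges of $G$, which is indeed a forest by \cref{lem:covered-edges-properties}. Given an intervention set $\cI = \{S_1, \dots, S_t\}$, assign to each vertex $v$ its \emph{signature} $\sigma(v) = (\mathbf{1}[v \in S_1], \dots, \mathbf{1}[v \in S_t]) \in \{0,1\}^t$. Then $\cI$ separates an edge $\{u,v\}$ of $F$ exactly when $\sigma(u) \neq \sigma(v)$, so by \cref{thm:optimal-verifying-set}, $\cI$ is a verifying set if and only if $\sigma$ is a proper colouring of $F$ with colours in $\{0,1\}^t$. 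The size constraint $|S_i| \le k$ says that each coordinate of the colouring carries at most $k$ ones. Hence $\nu_k(G)$ is the least $t$ for which $F$ admits such a column-bounded proper colouring, and the conjecture becomes the claim that the greedy two-colour grouping of \cref{thm:efficient-near-optimal-bounded} attains this minimum.

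Next I would match the construction against a refined lower bound. Writing $Z = \sigma^{-1}(\mathbf{0})$ for the all-zero class (necessarily independent), every other vertex contributes at least one $1$, so $t \cdot k \ge |V(F) \setminus Z| \ge \ell$, recovering $\nu_k(G) \ge \lceil \ell/k \rceil$ of \cref{lem:bounded-size-lb}. The heart of the plan is a \emph{canonicalisation} step: show that some optimal column-bounded proper colouring can be taken so that every signature has weight at most one, i.e.\ every vertex lies in at most one $S_i$. Such a solution is exactly a partition of a vertex cover of $F$ (its support $V(F) \setminus Z$) into independent sets of size $\le k$; since $F$ is a forest, this support is $2$-colourable as $C = C_A \sqcup C_B$ and the best such partition uses $\lceil |C_A|/k \rceil + \lceil |C_B|/k \rceil \le \lceil \ell/k \rceil + 1$ groups. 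This is precisely what the construction produces, once one additionally picks a minimum vertex cover and uses the per-component freedom to orient each tree's $2$-colouring so as to balance the two remainders modulo $k$. Combined with the lower bound, this would pin the construction's count to $\nu_k(G)$.

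The main obstacle is the canonicalisation step, and this is exactly where I expect the genuine difficulty (and the reason the statement remains conjectural) to lie. One must rule out that \emph{overlapping} interventions --- signatures of weight $\ge 2$ --- can ever beat the unit-weight scheme. The danger is a trade-off in which a single vertex placed in two interventions lets one coordinate absorb the ``remainder'' of both colour classes at once, shaving the final $+1$ that the greedy grouping spends. A local exchange argument that rewrites a weight-$\ge 2$ vertex into a weight-$1$ vertex must simultaneously preserve properness and respect every column bound, and it is not clear this is always possible without increasing $t$. I would therefore attempt a global counting/parity argument tying the achievable column sums to the bipartition structure of $F$ (and to the choice of minimum vertex cover, of which several with different colour-split profiles may exist, cf.\ \cref{fig:standing-windmill}), aiming to show that in the only regime where the construction spends the extra intervention --- both colour-class remainders nonzero and summing to at most $k$, per the remainder analysis of \cref{lem:bounded-size-ub} --- no column-bounded proper colouring uses fewer than $\lceil \ell/k \rceil + 1$ coordinates. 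Should this argument fail, the natural alternative is to hunt for a forest and value $k$ with $\nu_k(G) = \lceil \ell/k \rceil$ while the greedy output is $\lceil \ell/k \rceil + 1$, which would instead \emph{refute} the conjecture.
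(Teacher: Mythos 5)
The first thing to say is that the statement you are addressing is an explicit \emph{conjecture}: the paper offers no proof of it and states plainly that the authors ``do not have a proof that \cref{thm:efficient-near-optimal-bounded} is optimal (or counter example that it is not)''; all that is established is worst-case tightness of the upper bound via the family in \cref{fig:near-optimal}. Your proposal does not close this gap either, and to your credit you say so. The parts you do carry out are sound and align with the paper's machinery: the signature reformulation is a faithful restatement of \cref{thm:optimal-verifying-set} (an intervention set separates $\{u,v\}$ iff $\sigma(u)\neq\sigma(v)$, so verifying sets are exactly column-bounded proper colourings of the covered-edge forest of \cref{lem:covered-edges-properties}), and your counting bound $tk \ge |V(F)\setminus Z| \ge \ell$ recovers \cref{lem:bounded-size-lb}. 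This is a genuinely useful way to phrase the problem, arguably cleaner than the paper's.

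The genuine gap is the canonicalisation step, which is precisely the content of the conjecture and remains entirely unproven in your write-up. You must rule out that signatures of Hamming weight $\ge 2$ (vertices placed in several interventions) can achieve $\lceil \ell/k\rceil$ columns in the regime where both colour-class remainders are nonzero and sum to at most $k$; no exchange or counting argument is supplied, only the statement that one would be needed. There is also a secondary unaddressed issue within the unit-weight regime itself: the support of a unit-weight colouring is \emph{some} vertex cover of $F$, not necessarily a minimum one, and a non-minimum cover of size at most $k\lceil \ell/k\rceil$ with a more favourable independent-set partition could in principle reach $\lceil \ell/k\rceil$ parts even when the greedy two-colour grouping of \cref{lem:bounded-size-ub} applied to a minimum cover spends $\lceil \ell/k\rceil+1$; your remark about choosing among minimum vertex covers with different colour splits gestures at this but does not resolve it. As it stands, the proposal is a well-calibrated research plan --- correct reformulation, correct identification of the crux, and a sensible fallback of hunting for a refuting instance --- but it is not a proof, and the paper itself leaves \cref{conj:optimality} open.
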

 
\begin{figure}[htbp]
\centering
\resizebox{\linewidth}{!}{%
\begin{tikzpicture}
%
%
\node[draw, thick, circle, inner sep=2pt] at (0,0) (root) {};

\node[draw, thick, circle, inner sep=2pt, above left=15pt of root, xshift=-20pt] (topv1) {};
\node[draw, thick, circle, inner sep=2pt, right=10pt of topv1] (topv2) {};
\node[draw, thick, circle, inner sep=2pt, above right=15pt of root, xshift=20pt] (topvlast) {};
\node[draw, thick, circle, inner sep=2pt, left=10pt of topvlast] (topv3) {};
\node[inner sep=2pt] at ($(topv2)!0.5!(topv3)$) {$\dots$};

\node[draw, thick, circle, inner sep=2pt, below=30pt of root] (bot2) {};
\node[draw, thick, circle, inner sep=2pt, below left=25pt of bot2] (bot2v1) {};
\node[draw, thick, circle, inner sep=2pt, right=10pt of bot2v1] (bot2v2) {};
\node[draw, thick, circle, inner sep=2pt, below right=25pt of bot2] (bot2vlast) {};
\node[inner sep=2pt] at ($(bot2v2)!0.5!(bot2vlast)$) {$\dots$};

\node[draw, thick, circle, inner sep=2pt, left=100pt of bot2] (bot1) {};
\node[draw, thick, circle, inner sep=2pt, below left=25pt of bot1] (bot1v1) {};
\node[draw, thick, circle, inner sep=2pt, right=10pt of bot1v1] (bot1v2) {};
\node[draw, thick, circle, inner sep=2pt, below right=25pt of bot1] (bot1vlast) {};
\node[inner sep=2pt] at ($(bot1v2)!0.5!(bot1vlast)$) {$\dots$};

\node[draw, thick, circle, inner sep=2pt, right=100pt of bot2] (bot3) {};
\node[draw, thick, circle, inner sep=2pt, below left=25pt of bot3] (bot3v1) {};
\node[draw, thick, circle, inner sep=2pt, right=10pt of bot3v1] (bot3v2) {};
\node[draw, thick, circle, inner sep=2pt, below right=25pt of bot3] (bot3vlast) {};
\node[inner sep=2pt] at ($(bot3v2)!0.5!(bot3vlast)$) {$\dots$};

\node[inner sep=2pt, yshift=-10pt] at ($(bot2)!0.5!(bot1)$) {$\dots$};
\node[inner sep=2pt, yshift=-10pt] at ($(bot2)!0.5!(bot3)$) {$\dots$};

\draw[thick, -stealth, dashed] (root) -- (topv1);
\draw[thick, -stealth, dashed] (root) -- (topv2);
\draw[thick, -stealth, dashed] (root) -- (topv3);
\draw[thick, -stealth, dashed] (root) -- (topvlast);
\draw[thick, -stealth, dashed] (root) -- (bot1);
\draw[thick, -stealth, dashed] (root) -- (bot2);
\draw[thick, -stealth, dashed] (root) -- (bot3);
\draw[thick, -stealth, dashed] (bot1) -- (bot1v1);
\draw[thick, -stealth, dashed] (bot1) -- (bot1v2);
\draw[thick, -stealth, dashed] (bot1) -- (bot1vlast);
\draw[thick, -stealth, dashed] (bot2) -- (bot2v1);
\draw[thick, -stealth, dashed] (bot2) -- (bot2v2);
\draw[thick, -stealth, dashed] (bot2) -- (bot2vlast);
\draw[thick, -stealth, dashed] (bot3) -- (bot3v1);
\draw[thick, -stealth, dashed] (bot3) -- (bot3v2);
\draw[thick, -stealth, dashed] (bot3) -- (bot3vlast);
\draw[thick, -stealth] (root) to [out=180,in=90] (bot1v1);
\draw[thick, -stealth] (root) to [out=210,in=45] (bot1v2);
\draw[thick, -stealth] (root) to [out=210,in=45] (bot1vlast);
\draw[thick, -stealth] (root) to [out=225,in=90] (bot2v1);
\draw[thick, -stealth] (root) to [out=240,in=110] (bot2v2);
\draw[thick, -stealth] (root) to [out=315,in=90] (bot2vlast);
\draw[thick, -stealth] (root) to [out=330,in=135] (bot3v1);
\draw[thick, -stealth] (root) to [out=330,in=135] (bot3v2);
\draw[thick, -stealth] (root) to [out=0,in=90] (bot3vlast);

\node[fit=(root), rounded corners, draw] {};
\node[fit=(bot1), rounded corners, draw] {};
\node[fit=(bot2), rounded corners, draw] {};
\node[fit=(bot3), rounded corners, draw] {};

%
%
\node[draw, thick, circle, inner sep=2pt] at (8,0) (Hroot) {};
\node[draw, thick, circle, inner sep=2pt, below=30pt of Hroot] (Hv2) {};
\node[draw, thick, circle, inner sep=2pt, left=of Hv2] (Hv1) {};
\node[draw, thick, circle, inner sep=2pt, right=of Hv2] (Hv3) {};
\node[inner sep=2pt] at ($(Hv2)!0.5!(Hv1)$) {$\dots$};
\node[inner sep=2pt] at ($(Hv2)!0.5!(Hv3)$) {$\dots$};

\draw[thick, -stealth, dashed] (Hroot) -- (Hv1);
\draw[thick, -stealth, dashed] (Hroot) -- (Hv2);
\draw[thick, -stealth, dashed] (Hroot) -- (Hv3);

\node[fit=(Hroot), rounded corners, draw] {};
\node[fit=(Hv1), rounded corners, draw] {};
\node[fit=(Hv2), rounded corners, draw] {};
\node[fit=(Hv3), rounded corners, draw] {};

\draw[thick] (5.5,1) -- (5.5,-2);
\node[above=10pt of Hroot] {Induced graph $H$};
\draw [decorate,decoration={brace,amplitude=5pt,mirror,raise=10pt},yshift=-10pt]
(Hv1.west) -- (Hv3.east) node[midway, yshift=-20pt] {\footnotesize $k-1$};
\end{tikzpicture}
}
\caption{
A DAG with its covered edges given in dashed arcs.
The edge-induced subgraph of the covered edges is a tree and the minimum vertex cover is all the non-leaf vertices (the boxed vertices) of size $\ell$.
Denote the graph induced by the boxed vertices by $H$.
Now consider the star graph $H$ on $\ell = k$ nodes with $k-1$ leaves.
All the leaf nodes can be put in the same intervention without affecting the separation of any covered edges.
However, including the root with any of the leaf nodes in a same intervention will cause covered edges to be unseparated.
Thus, using bounded size interventions of size at most $k$, verifying such a DAG requires at least $\lceil \frac{\ell}{k} \rceil + 1 = 2$ interventions.
}
\label{fig:near-optimal}
\end{figure}
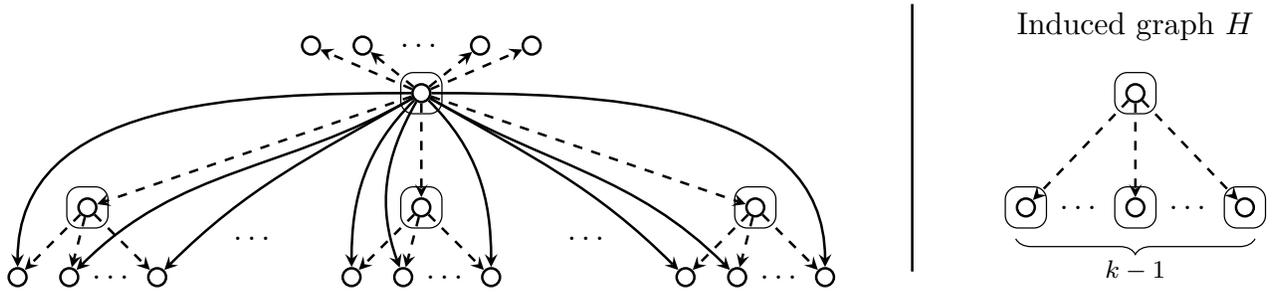

\subsection{Generalization to minimum cost verifying sets with additive cost structures}
\label{sec:generalized-cost}

Consider an essential graph which is a star graph on $n$ nodes where the leaves have cost 1 and the root has cost significantly larger than $n$.
For atomic verifying sets, we see that the \emph{minimum cost} verifying set to intervene on the leaves one at a time while the \emph{minimum size} verifying set is to simply intervene on the root.
Since one may be more preferred over the other, depending on the actual real-life situation, we propose to find a verifying set $\cI$ which minimizes
\begin{equation}
\alpha \cdot w(\cI) + \beta \cdot |\cI| \qquad \text{where $\alpha, \beta \geq 0$}
\tag{1}
\end{equation}
so as to explicitly trade-off between the cost and size of the intervention set.
This objective also naturally allows the constraint of bounded size interventions by restricting $|S| \leq k$ for all $S \in \cI$.

\efficientoptimalatomiccost*
\begin{proof}
By \cref{thm:optimal-verifying-set} and accounting for \cref{eq:generalized-cost}, we need to compute a \emph{weighted} minimum vertex cover in the edge-induced subgraph on covered edges of $G$.
Efficiency is implied by \cref{lem:covered-edges-properties}.
\end{proof}

For bounded size interventions, we show that the ideas in \cref{sec:near-optimal-bounded} translate naturally to give a near-optimal minimal generalized cost verifying set.
To prove our lower bound, we first consider an optimal atomic verifying set for a slightly different objective from \cref{eq:generalized-cost}.

\begin{lemma}
\label{lem:modified-atomic}
Fix an essential graph $\cE(G^*)$ and $G \in [G^*]$.
Let $\cI_A$ be an atomic verifying set for $G$ that minimizes $\alpha \cdot w(\cI_A) + \frac{\beta}{k} \cdot |\cI_A|$ and $\cI_B$ be a bounded size verifying set for $G$ that minimizes \cref{eq:generalized-cost}.
Then, $\alpha \cdot w(\cI_A) + \frac{\beta}{k} \cdot |\cI_A| \leq \alpha \cdot w(\cI_B) + \beta \cdot |\cI_B|$.
\end{lemma}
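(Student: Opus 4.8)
The plan is to start from the optimal bounded size verifying set $\cI_B$, ``atomize'' it into an atomic verifying set, and then exploit optimality of $\cI_A$. Concretely, I would define $\cI_B'$ to be the atomic intervention set that intervenes separately on each vertex of $\bigcup_{S \in \cI_B} S$. Since $\cI_B$ is a verifying set, \cref{thm:optimal-verifying-set} tells us it separates every unoriented covered edge of $G$; by \cref{lem:atomic-only-helps}, intervening on those same vertices one at a time can only increase the number of separated covered edges, so $\cI_B'$ still separates all of them and is therefore a valid atomic verifying set. This is the step that licenses the comparison against $\cI_A$.

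Next I would bound the modified objective $\alpha \cdot w(\cdot) + \frac{\beta}{k} \cdot |\cdot|$ evaluated at $\cI_B'$. For the size term, each $S \in \cI_B$ has $|S| \leq k$, so deduplicating across interventions gives $|\cI_B'| = \left| \bigcup_{S \in \cI_B} S \right| \leq \sum_{S \in \cI_B} |S| \leq k \cdot |\cI_B|$, and hence $\frac{\beta}{k} |\cI_B'| \leq \beta \cdot |\cI_B|$. For the weight term, since intervention costs are non-negative, summing $w$ over the union of intervened vertices is at most summing with multiplicity across the interventions, so $w(\cI_B') = \sum_{v \in \bigcup_{S} S} w(v) \leq \sum_{S \in \cI_B} w(S) = w(\cI_B)$. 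Adding the two bounds yields $\alpha \cdot w(\cI_B') + \frac{\beta}{k} \cdot |\cI_B'| \leq \alpha \cdot w(\cI_B) + \beta \cdot |\cI_B|$.

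Finally I would invoke optimality of $\cI_A$. Because $\cI_B'$ is an atomic verifying set for $G$ and $\cI_A$ minimizes $\alpha \cdot w(\cdot) + \frac{\beta}{k} \cdot |\cdot|$ over all such sets, we get $\alpha \cdot w(\cI_A) + \frac{\beta}{k} \cdot |\cI_A| \leq \alpha \cdot w(\cI_B') + \frac{\beta}{k} \cdot |\cI_B'|$. Chaining this with the previous display gives exactly $\alpha \cdot w(\cI_A) + \frac{\beta}{k} \cdot |\cI_A| \leq \alpha \cdot w(\cI_B) + \beta \cdot |\cI_B|$, the claimed inequality.

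The argument is short, and the main thing to get right is the bookkeeping in the middle paragraph: the $\frac{1}{k}$ factor in the modified atomic objective is precisely calibrated to absorb the blow-up incurred by atomizing, since a single bounded intervention of size at most $k$ splits into at most $k$ atomic ones. The only genuine subtlety is the weight inequality $w(\cI_B') \leq w(\cI_B)$, which relies on interventional costs being non-negative (so that collapsing repeated vertices in the union never increases total cost); I would state this non-negativity assumption explicitly where it is used.
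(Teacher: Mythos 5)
Your proposal is correct and follows essentially the same route as the paper: atomize $\cI_B$ into an atomic verifying set, observe that the weight does not increase and the count grows by at most a factor of $k$ (exactly absorbed by the $\frac{\beta}{k}$ calibration), and then invoke optimality of $\cI_A$. You are in fact slightly more careful than the paper's proof, which asserts without justification that the atomized set is a verifying set, whereas you explicitly derive this from \cref{thm:optimal-verifying-set} and \cref{lem:atomic-only-helps}.
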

\begin{proof}
Let $\cI = \sum_{S \in \cI_B} S$ be the atomic verifying set derived from $\cI_B$ by treating each vertex as an atomic intervention.
Clearly, $w(\cI_B) \geq w(\cI)$ and $k \cdot |\cI_B| \geq |\cI|$.
So,
\[
\alpha \cdot w(\cI_B) + \beta \cdot |\cI_B|
\geq \alpha \cdot w(\cI) + \frac{\beta}{k} \cdot |\cI|
\geq \alpha \cdot w(\cI_A) + \frac{\beta}{k} \cdot |\cI_A|
\]
since $\cI_A = \argmin_{\text{atomic verifying set } \cI'} \left\{ \alpha \cdot w(\cI') + \frac{\beta}{k} \cdot |\cI'| \right\}$.
\end{proof}

\efficientnearoptimalboundedcost*
\begin{proof}
Let $\cI_A$ be an atomic verifying set for $G$ that minimizes $\alpha \cdot w(\cI_A) + \frac{\beta}{k} \cdot |\cI_A|$ and $\cI_B$ be a bounded size verifying set for $G$ that minimizes \cref{eq:generalized-cost}.
Using the polynomial time greedy algorithm in \cref{lem:bounded-size-ub}, we construct bounded size intervention set $\cI$ by greedily grouping together atomic interventions from $\cI_A$.
Clearly, $w(\cI) = w(\cI_A)$ and $|\cI| \leq \lceil \frac{|\cI_A|}{k} \rceil + 1$.
So,
\[
\alpha \cdot w(\cI) + \beta \cdot |\cI|
\leq \alpha \cdot w(\cI_A) + \beta \cdot \left( \left\lceil \frac{|\cI_A|}{k} \right\rceil + 1 \right)
\leq \alpha \cdot w(\cI_B) + \beta \cdot |\cI_B| + 2 \beta
= OPT + 2 \beta
\]
where the second inequality is due to \cref{lem:modified-atomic}.
\end{proof}

\section{Search}
\label{sec:appendix-search}

We begin by proving a strengthened version of \cite{squires2020active}'s lower bound.

Note that we will be discussing only atomic interventions in \cref{lem:strengthened-lb}, so notation such as $\cI \cap V(H)$ makes sense for sets $\cI, V(H) \subseteq V$.

\begin{definition}[Moral DAG, Definition 3 of \cite{squires2020active}]
A graph $G$ is a \emph{moral DAG} if its essential graph only has a single chain component.
That is, after removing directed edges, there is only one single connected component remaining.
\end{definition}

\begin{lemma}[Lemma 6 of \cite{squires2020active}]
\label{lem:moral-dag-lemma}
Let $G$ be a moral DAG.
Then, $\nu_1(G) \geq \lfloor \frac{\omega(\text{skeleton}(G))}{2} \rfloor$.
\end{lemma}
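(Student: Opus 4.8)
The plan is to reduce the bound entirely to a statement about covered edges and then to a matching computation. Since $G$ is a moral DAG, its essential graph has a single chain component spanning all of $V$, which forces the absence of v-structures; hence no edge of $\cE(G)$ is compelled and $\cE(G)$ is exactly the (undirected, chordal, connected) skeleton of $G$. Consequently every edge of $G$ is unoriented in $\cE(G)$, so by \cref{thm:efficient-optimal-atomic} (applied with $G^\ast = G$) we have that $\nu_1(G)$ equals the size of a minimum vertex cover of \emph{all} covered edges of $G$. By \cref{lem:covered-edges-properties}, the edge-induced subgraph on the covered edges is a forest, hence bipartite, so K\"onig's theorem gives that this minimum vertex cover has the same size as a maximum matching among the covered edges. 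Thus it suffices to exhibit a matching of $\lfloor \omega(\mathrm{skeleton}(G))/2 \rfloor$ pairwise vertex-disjoint covered edges.

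\textbf{Constructing the matching from a maximum clique.} The second step exploits the structure of moral DAGs: the absence of v-structures is equivalent to $Pa(v)$ being a clique for every $v$. Fix a maximum clique $K = \{c_1 \to \cdots \to c_\omega\}$ of size $\omega = \omega(\mathrm{skeleton}(G))$, ordered by the acyclic orientation on $K$. Maximality of $K$ forces the sink $c_\omega$ to satisfy $Pa(c_\omega) = \{c_1,\ldots,c_{\omega-1}\}$ exactly, since any extra parent $y$ would (as $Pa(c_\omega)$ is a clique) enlarge $K$ to a clique of size $\omega+1$. I would then build the matching by peeling the clique two vertices at a time: extract a covered edge incident to the top of the current clique order, add it to the matching, and recurse on a residual clique of size $\omega - 2$, yielding $\lfloor \omega/2 \rfloor$ disjoint covered edges in total. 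The base step is the edge $c_{\omega-1} \to c_\omega$: it is covered precisely when $Pa(c_{\omega-1}) = \{c_1,\ldots,c_{\omega-2}\}$, and if instead $c_{\omega-1}$ has an extra parent $z$, then $\{c_1,\ldots,c_{\omega-2},z,c_{\omega-1}\}$ is again a maximum clique with sink $c_{\omega-1}$, so the same argument can be re-applied to locate a covered edge into this sink.

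\textbf{Main obstacle.} The delicate point is precisely this peeling step, and it is where the argument must be carried out carefully. There are two issues to resolve: first, one must guarantee that \emph{some} covered edge incident to the sink region of a maximum clique always exists — this is the content of the maximality/``reroute'' argument above and needs the parents-form-a-clique property together with a careful choice of which maximum clique to work with; second, because covered-ness depends on global parent sets, removing the two peeled vertices can change which residual edges are covered, so the induction does not obviously transport a residual matching back to a matching of covered edges of $G$. I would address this either by always peeling vertices that are childless (so their removal leaves the parent sets, and hence covered-ness, of all remaining edges intact) or by selecting the $\lfloor\omega/2\rfloor$ edges directly in $G$ and verifying pairwise disjointness, rather than recursing on a modified graph. (As a fallback, this is Lemma~6 of \cite{squires2020active}, whose directed-clique-tree proof may simply be cited; the covered-edge route above is the natural self-contained alternative within our framework.) Once a matching of size $\lfloor\omega/2\rfloor$ is produced, the reduction of the first paragraph immediately yields $\nu_1(G) \geq \lfloor \omega(\mathrm{skeleton}(G))/2 \rfloor$.
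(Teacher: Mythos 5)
First, note that the paper itself does not prove this statement: it is imported verbatim as Lemma~6 of \cite{squires2020active}, so your fallback option of citing that work is exactly what the paper does. Your attempted self-contained derivation via covered edges is a genuinely different route, and its first half is sound and not circular: a moral DAG has no v-structures, so $\cE(G)$ is the undirected skeleton, every covered edge is unoriented, and \cref{thm:efficient-optimal-atomic} together with the trivial direction of K\H{o}nig (any vertex cover meets every edge of a matching in a distinct vertex, so matchings lower-bound vertex covers) reduces the lemma to exhibiting $\lfloor \omega/2 \rfloor$ pairwise vertex-disjoint covered edges. Your base step is also correct: for \emph{any} maximal clique $K$ with sink $s$, morality plus maximality force $Pa(s) = K \setminus \{s\}$, and the reroute argument (replace $K$ by $(K\setminus\{s\}) \cup Pa(t)$ whenever the penultimate vertex $t$ has a parent outside $K$) terminates because the sink strictly decreases in the topological order, producing one covered edge at the top of some maximum clique.

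The genuine gap is the iteration. After extracting $t \to s$, the residual set $\{c_1,\dots,c_{\omega-2}\}$ is a clique of size $\omega-2$ but is in general neither maximum nor maximal in $G$, so the ``sink of a maximum clique has exactly the clique as its parent set'' argument that powered the base step simply does not apply to it; the next covered edge may have to leave $K$ entirely (e.g.\ with $\omega=4$, $Pa(c_4)=\{c_1,c_2,c_3\}$, $Pa(c_3)=\{c_1,c_2\}$, $c_1 \to y \to c_2$ and $c_1\to c_2$ with $y\not\sim c_3,c_4$: the second disjoint covered edge is $y \to c_2$, not $c_1 \to c_2$). Your two proposed repairs do not close this. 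Recursing on $G - \{s,t\}$ fails because coveredness is a global property of parent sets in $G$, and ``peel only childless vertices'' cannot apply to the source $t$ of the extracted covered edge $t \to s$, which by construction has the child $s$; meanwhile ``select the edges directly in $G$ and verify disjointness'' restates the goal rather than achieving it. Since the entire content of the lemma, once reduced, \emph{is} the existence of $\lfloor\omega/2\rfloor$ disjoint covered edges, the argument as written establishes only the case $\lfloor\omega/2\rfloor \le 1$. To stand on its own it needs a correct induction with an invariant that controls where the rerouted cliques can live relative to the already-matched vertices; absent that, the proof of record remains the citation to \cite{squires2020active}.
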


\strengthenedlb*
\begin{proof}
Consider be an arbitrary set of atomic interventions $\cI \subseteq V$ and the resulting $\cI$-essential graph $\cE_{\cI}(G^*)$.
Let $H \in CC(\cE_{\cI}(G^*))$ be an arbitrary chain component.

Let $\cI' \subseteq V$ be an arbitrary atomic verifying set of $G$.
Then, $\cE_{\cI'}(G^*) = G$ and thus $\cE_{\cI'}(G^*)[V(H)] = G[V(H)]$.
Then,
\[
\cE_{(\cI' \setminus \cI) \cap V(H)}(G[V(H)])
= \cE_{\cI \cup (\cI' \setminus \cI)}(G)[V(H)]
= \cE_{\cI'}(G)[V(H)]
= G[V(H)]
\]
where the first equality is due to \cref{lem:hauser-bulmann-strengthened} and the last equality is because $\cI'$ is a verifying set of $G$.
So, $(\cI' \setminus \cI) \cap V(H)$ is a verifying set for $G[V(H)]$, and so is $\cI' \cap V(H)$.
Thus, by minimality of $\nu_1$, we have $\nu_1(G[V(H)]) \leq |\cI' \cap V(H)|$ for \emph{any} atomic verifying set $\cI' \subseteq V$ of $G$.

Since $H \in CC(\cE_{\cI}(G^*))$, the graph $G[V(H)]$ is a moral DAG.
Since $H$ is a subgraph of $G[V(H)]$, $\omega(H) \leq \omega(G[V(H)])$.
Thus, by \cref{lem:moral-dag-lemma}, we have $\nu_1(G[V(H)]) \geq \lfloor \frac{\omega(G[V(H)])}{2} \rfloor \geq \lfloor \frac{\omega(H)}{2} \rfloor$.

Now, suppose $\cI^*$ is a minimal size verifying set of $G$.
Then,
\[
\nu_1(G)
= |\cI^*|
\geq \sum_{H \in CC(\cE_{\cI}(G^*))} |\cI^* \cap V(H)|
\geq \sum_{H \in CC(\cE_{\cI}(G^*))} \nu_1(G[V(H)])
\geq \sum_{H \in CC(\cE_{\cI}(G^*))} \left\lfloor \frac{\omega(H)}{2} \right\rfloor
\]

The claim follows by taking the maximum over all possible atomic interventions $\cI \subseteq V$.
\end{proof}

For convenience, we reproduce \cref{alg:search-algo} below.

\setcounter{algorithm}{0}
\begin{algorithm}[htbp]
\caption{Search algorithm via graph separators.}
\begin{algorithmic}[1]
    \State \textbf{Input}: Essential graph $\cE(G^*)$, intervention size $k$. \textbf{Output}: A fully oriented graph $G \in [G^*]$.
    \State Initialize $i=0$ and $\cI_0 = \emptyset$.
	\While{$\cE_{\cI_{i}}(G^*)$ still has undirected edges}
    	\State For each $H \in CC(\cE_{\cI_{i}}(G^*))$ of size $|H| \geq 2$, find a 1/2-clique separator $K_H$ using
    	\Statex\hspace{\algorithmicindent}\cref{thm:chordal-separator}.
    	Define $Q = \{K_H\}_{H \in CC(\cE_{\cI_{i}}(G^*)), |H| \geq 2}$ as the union of clique separator nodes.
    	\State \textbf{if} $k=1$ or $|Q|=1$ \textbf{then} Define $C_i = Q$ as an atomic intervention set.
    	\State \textbf{else} Define $k' = \min\{k, |Q|/2\}$, $a = \lceil |Q|/k' \rceil \geq 2$, and $\ell = \lceil \log_a n \rceil$. Compute labelling
    	\Statex\hspace{\algorithmicindent}scheme of \cite[Lemma 1]{shanmugam2015learning} on $Q$ with $(|Q|, k', a)$, and define $C_i = \{S_{x,y}\}_{x \in [\ell], y \in [a]}$,
    	\Statex\hspace{\algorithmicindent}where $S_{x,y} \subseteq Q$ is the subset of vertices whose $x^{th}$ letter in the label is $y$.
    	\State Update $i \gets i+1$, intervene on $C_{i}$ to obtain $\cE_{\cI_{i}}(G^*)$, and update $\cI_i \gets \cI_{i-1} \cup C_{i}$.
	\EndWhile
\end{algorithmic}
\end{algorithm}

\begin{lemma}
\label{lem:log-n-iterations}
\cref{alg:search-algo} terminates after at most $\cO(\log n)$ iterations.
\end{lemma}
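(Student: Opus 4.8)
The plan is to track the size of the largest chain component as a potential and show it is (roughly) halved in every iteration, so that after $\cO(\log n)$ rounds no chain component of size $\geq 2$ survives. Concretely, for iteration $i$ let $m_i = \max_{H \in CC(\cE_{\cI_i}(G^*))} |V(H)|$ denote the largest chain component size, with $m_0 \leq n$. Since every $\cI$-essential graph is a chain graph with chordal chain components, \cref{thm:chordal-separator} applies to each processed component $H$ (those with $|V(H)| \geq 2$) and yields a $1/2$-clique separator $K_H$: a clique $K_H \subseteq V(H)$ whose removal splits $V(H) \setminus K_H$ into sides $A_H, B_H$ with no edge between them and $|A_H|, |B_H| \leq |V(H)|/2$. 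I will argue that after intervening on $Q = \bigcup_H K_H$, each such $H$ breaks into chain components that are each contained in $A_H$ or $B_H$, giving $m_{i+1} \leq m_i/2$.

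The key step is to verify that intervening on $Q$ orients every edge incident to $K_H$ inside $H$, in both the atomic and the bounded-size branches. First observe that any vertex $w \in V(H) \setminus K_H$ lies outside $Q$: at the start of the iteration $w$ belongs to the single component $H$, and the chain components partition $V$, so the only separator $w$ could belong to is $K_H$, which it does not. Hence for an edge $u \sim w$ with $u \in K_H \subseteq Q$ and $w \in V(H) \setminus K_H$ (so $w \notin Q$), and for an edge between two vertices of $K_H$, it suffices that some intervention separates the endpoints. When $k=1$ we intervene on each vertex of $Q$ atomically, trivially separating every edge incident to $K_H$. When $k>1$ the interventions $\{S_{x,y}\}$ form a separating system on $Q$ by \cite[Lemma 1]{shanmugam2015learning}: for $u \sim u'$ with $u,u' \in K_H$ some label position separates them, and for $u \sim w$ with $w \notin Q$ the (unique per position) intervention containing $u$ separates it from $w$. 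In either case every edge touching $K_H$ is cut by some intervention and thus oriented.

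Given this, the undirected edges remaining inside $H$ after the intervention form a subset of the edges lying within $A_H$ together with those lying within $B_H$: no undirected edge can be incident to $K_H$ (all such are now oriented), and the separator property forbids edges between $A_H$ and $B_H$. Because Meek rules only orient further edges and never create new undirected ones, the connected components of the remaining undirected graph—the new chain components—are each contained in $A_H$ or $B_H$, hence of size at most $|V(H)|/2 \leq m_i/2$. Taking the maximum over all components gives $m_{i+1} \leq m_i/2$, so $m_t \leq n/2^t$, and after $t = \lceil \log_2 n \rceil$ iterations we have $m_t < 2$; at that point no chain component has size $\geq 2$, equivalently no undirected edges remain, and the while-loop terminates, establishing the $\cO(\log n)$ bound. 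The one place requiring care is the separating-system argument in the bounded-size branch, where one must confirm that edges from $K_H$ to the \emph{rest} of $H$ (not only edges inside $K_H$) are separated; this is exactly what the observation $w \notin Q$ secures, since such a neighbour is never co-intervened with its $K_H$ endpoint.
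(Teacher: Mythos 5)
Your proof is correct and follows essentially the same route as the paper's: orient every edge inside and incident to each $1/2$-clique separator $K_H$, observe that the surviving undirected edges therefore lie entirely within $A_H$ or $B_H$, and conclude that the largest chain component at least halves each iteration, giving termination in $\cO(\log n)$ rounds. You are in fact slightly more explicit than the paper about why the bounded-size branch separates edges leaving $K_H$ (a neighbour $w \notin Q$ is never co-intervened with its $K_H$ endpoint), a point the paper dispatches with a brief remark about the interventions produced by the labelling scheme.
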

\begin{proof}
Consider an arbitrary iteration $i$ and chain component $H$ with 1/2-clique separator $K_H$.

Edges within $K_H$ will be fully oriented by Step 6.
We now argue that any edge with exactly one endpoint in $K_H$ will be oriented by \cref{alg:search-algo}.
For $k'_H = 1$, the algorithm intervenes on all nodes in $K_H$ and thus such edges will be oriented trivially.
For $k'_H > 1$, the additional $\leq |K_H| / k'_H$ interventions \emph{after} the bounded size intervention strategy of \cite{shanmugam2015learning} ensures that \emph{every} edge with exactly one endpoint in $K_H$ will be separated.
Thus, after each iteration, the only remaining unoriented edges lie completely within the separated components that are of half the size.

Since the algorithm always recurse on graphs of size at least half the previous iteration, we see that $|H| \leq n/2^{i}$ for any $H \in CC(\cE_{\cI_{i}}(G^*))$.
Thus, all chain components will become singletons after $\cO(\log n)$ iterations and the algorithm terminates with a fully oriented graph.
\end{proof}

\searchatomic*
\begin{proof}
\cref{alg:search-algo} runs in polynomial time because 1/2-clique separators can be computed efficiently (see \cref{thm:chordal-separator}).

Fix an arbitrary iteration $i$ of \cref{alg:search-algo} and let $G_i$ be the partially oriented graph obtained after intervening on $\cI_i$.
By \cref{lem:strengthened-lb}, $\sum_{H \in CC(\cE_{\cI_i}(G^*))} \lfloor \frac{\omega(H)}{2} \rfloor \leq \nu_1(G^*)$.
By definition of $\omega$, we always have $|K_H| \leq \omega(H)$.
Thus, \cref{alg:search-algo} uses at most $2 \cdot \nu_1(G^*)$ interventions in each iteration.

By \cref{lem:log-n-iterations}, there are $\cO(\log n)$ iterations and so $\cO(\log (n) \cdot \nu_1(G^*))$ atomic interventions are used by \cref{alg:search-algo}.
\end{proof}

\begin{lemma}[Lemma 1 of \cite{shanmugam2015learning}]
Let $(n,k,a)$ be parameters where $k \leq n/2$.
There is a polynomial time labeling scheme that produces distinct $\ell$ length labels for all elements in $[n]$ using letters from the integer alphabet $\{0\} \cup [a]$ where $\ell = \lceil \log_a n \rceil$.
Further, in every digit (or position), any integer letter is used at most $\lceil n/a \rceil$ times.
This labelling scheme is a separating system: for any $i,j \in [n]$, there exists some digit $d \in [\ell]$ where the labels of $i$ and $j$ differ.
\end{lemma}

\searchbounded*
\begin{proof}
\cref{alg:search-algo} runs in polynomial time because 1/2-clique separators can be computed efficiently (see \cref{thm:chordal-separator}).
The label computation of \cite[Lemma 1]{shanmugam2015learning} also runs in polynomial time.

Fix an arbitrary iteration $i$ of \cref{alg:search-algo} and let $G_i$ be the partially oriented graph obtained after intervening on $\cI_i$.
Since $\cE(G_i) \subseteq \cE(G^*)$, we see that $\nu_1(G_i) \leq \nu_1(G^*)$.
\cref{alg:search-algo} intervenes on
\[
|C_{i}| \leq \left\lceil \frac{|Q|}{k'} \right\rceil \cdot \left\lceil \log_{\left\lceil \frac{|Q|}{k'} \right\rceil} |Q| \right\rceil
\]
sets of bounded size interventions, where $k' = \min\{k, |Q|/2\} > 1$.

Since $\nu_1(G^*) \geq \sum_{H \in CC(\cE(G^*))} \lfloor \omega(H)/2 \rfloor$, we know from \cref{lem:bounded-size-lb} that
\[
\nu_k(G^*)
\geq \left\lceil \frac{\nu_1(G^*)}{k} \right\rceil
\geq \left\lceil \frac{1}{k} \cdot \sum_{H \in CC(\cE(G^*))} \left\lfloor \frac{\omega(H)}{2} \right\rfloor \right\rceil \;.
\]
Since $Q$ is the union of clique separator nodes, we have that $|Q| \leq \sum_{H \in CC(\cE(G^*))} \omega(H)$ and so $\nu_k(G^*) \in \Omega(|Q|/k)$.
Note that $\nu_k(G^*) \geq 1$ always.

\textbf{Case 1: $k \leq |Q|/2$.}
Then, $k' = k$ and
\[
|C_{i}|
\leq \left\lceil \frac{|Q|}{k} \right\rceil \cdot \left\lceil \log_{\left\lceil \frac{|Q|}{k} \right\rceil} |Q| \right\rceil
\leq \left\lceil \frac{|Q|}{k} \right\rceil \cdot \left\lceil \frac{\log |Q|}{\log \frac{|Q|}{k}} \right\rceil
\leq \left\lceil \frac{|Q|}{k} \right\rceil \cdot \left\lceil \log (k) + 1 \right\rceil
\in \cO\left(\frac{|Q|}{k} \cdot \log (k) \right)
\]

\textbf{Case 2: $k \geq |Q|/2$.}
Then, $k' = |Q|/2$ and
\[
|C_{i}|
\leq 2 \cdot \left\lceil \log_{2} |Q| \right\rceil
\in \cO(\log (k))
\]

In either case, we see that $|C_i| \in \cO\left(\nu_k(G^*) \cdot \log k \right)$.
By \cref{lem:log-n-iterations}, there are $\cO(\log n)$ iterations and so $\cO(\log (n) \cdot \log (k) \cdot \nu_k(G^*))$ bounded size interventions are used by \cref{alg:search-algo}.
\end{proof}

\section{Example illustrating the power of \texorpdfstring{\cref{lem:strengthened-lb}}{Lemma 21}}
\label{sec:appendix-chain-triangles}

\begin{figure}[htbp]
\centering
\resizebox{\linewidth}{!}{%
\begin{tikzpicture}
%
%
\node[draw, circle, inner sep=2pt] at (0,0) (v1) {};
\node[draw, circle, inner sep=2pt, right=of v1] (v3) {};
\node[draw, circle, inner sep=2pt, above=of v3] (v2) {};

\node[draw, circle, inner sep=2pt, right=of v3] (v4) {};
\node[draw, circle, inner sep=2pt, right=of v4] (v6) {};
\node[draw, circle, inner sep=2pt, above=of v6] (v5) {};

\node[draw, circle, inner sep=2pt, right=50pt of v6] (v7) {};
\node[draw, circle, inner sep=2pt, right=of v7] (v9) {};
\node[draw, circle, inner sep=2pt, above=of v9] (v8) {};

\node[draw, circle, inner sep=2pt, right=of v9] (v10) {};
\node[draw, circle, inner sep=2pt, right=of v10] (v12) {};
\node[draw, circle, inner sep=2pt, above=of v12] (v11) {};

\node[draw, circle, inner sep=2pt, right=of v12] (v13) {};
\node[draw, circle, inner sep=2pt, right=of v13] (v15) {};
\node[draw, circle, inner sep=2pt, above=of v15] (v14) {};

\node[draw, circle, inner sep=2pt, right=50pt of v15] (v16) {};
\node[draw, circle, inner sep=2pt, right=of v16] (v18) {};
\node[draw, circle, inner sep=2pt, above=of v18] (v17) {};

\node[draw, circle, inner sep=2pt, right=of v18] (v19) {};
\node[draw, circle, inner sep=2pt, right=of v19] (v21) {};
\node[draw, circle, inner sep=2pt, above=of v21] (v20) {};

\node[inner sep=2pt] at ($(v6)!0.5!(v7)$) (dotsL) {$\ldots$};
\node[inner sep=2pt] at ($(v15)!0.5!(v16)$) (dotsR) {$\ldots$};

\draw[thick, -stealth, dashed] (v1) -- (v2);
\draw[thick, -stealth] (v1) -- (v3);
\draw[thick, -stealth, dashed] (v2) -- (v3);

\draw[thick, -stealth] (v3) -- (v4);

\draw[thick, -stealth] (v4) -- (v5);
\draw[thick, -stealth] (v4) -- (v6);
\draw[thick, -stealth, dashed] (v5) -- (v6);

\draw[thick] (v6) -- (dotsL);
\draw[thick, -stealth] (dotsL) -- (v7);

\draw[thick, -stealth] (v7) -- (v8);
\draw[thick, -stealth] (v7) -- (v9);
\draw[thick, -stealth, dashed] (v8) -- (v9);

\draw[thick, -stealth] (v9) -- (v10);

\draw[thick, -stealth] (v10) -- (v11);
\draw[thick, -stealth] (v10) -- (v12);
\draw[thick, -stealth, dashed] (v11) -- (v12);

\draw[thick, -stealth] (v12) -- (v13);

\draw[thick, -stealth] (v13) -- (v14);
\draw[thick, -stealth] (v13) -- (v15);
\draw[thick, -stealth, dashed] (v14) -- (v15);

\draw[thick] (v15) -- (dotsR);
\draw[thick, -stealth] (dotsR) -- (v16);

\draw[thick, -stealth] (v16) -- (v17);
\draw[thick, -stealth] (v16) -- (v18);
\draw[thick, -stealth, dashed] (v17) -- (v18);

\draw[thick, -stealth] (v18) -- (v19);

\draw[thick, -stealth] (v19) -- (v20);
\draw[thick, -stealth] (v19) -- (v21);
\draw[thick, -stealth, dashed] (v20) -- (v21);

%
%
\node[draw, circle, inner sep=2pt] at (0,-2) (ev1) {};
\node[draw, circle, inner sep=2pt, right=of ev1] (ev3) {};
\node[draw, circle, inner sep=2pt, above=of ev3] (ev2) {};

\node[draw, circle, inner sep=2pt, right=of ev3] (ev4) {};
\node[draw, circle, inner sep=2pt, right=of ev4] (ev6) {};
\node[draw, circle, inner sep=2pt, above=of ev6] (ev5) {};

\node[draw, circle, inner sep=2pt, right=50pt of ev6] (ev7) {};
\node[draw, circle, inner sep=2pt, right=of ev7] (ev9) {};
\node[draw, circle, inner sep=2pt, above=of ev9] (ev8) {};

\node[draw, circle, inner sep=2pt, right=of ev9] (ev10) {};
\node[draw, circle, inner sep=2pt, right=of ev10] (ev12) {};
\node[draw, circle, inner sep=2pt, above=of ev12] (ev11) {};

\node[draw, circle, inner sep=2pt, right=of ev12] (ev13) {};
\node[draw, circle, inner sep=2pt, right=of ev13] (ev15) {};
\node[draw, circle, inner sep=2pt, above=of ev15] (ev14) {};

\node[draw, circle, inner sep=2pt, right=50pt of ev15] (ev16) {};
\node[draw, circle, inner sep=2pt, right=of ev16] (ev18) {};
\node[draw, circle, inner sep=2pt, above=of ev18] (ev17) {};

\node[draw, circle, inner sep=2pt, right=of ev18] (ev19) {};
\node[draw, circle, inner sep=2pt, right=of ev19] (ev21) {};
\node[draw, circle, inner sep=2pt, above=of ev21] (ev20) {};

\node[inner sep=2pt] at ($(ev6)!0.5!(ev7)$) (edotsL) {$\ldots$};
\node[inner sep=2pt] at ($(ev15)!0.5!(ev16)$) (edotsR) {$\ldots$};

\draw[thick] (ev1) -- (ev2);
\draw[thick] (ev1) -- (ev3);
\draw[thick] (ev2) -- (ev3);

\draw[thick] (ev3) -- (ev4);

\draw[thick] (ev4) -- (ev5);
\draw[thick] (ev4) -- (ev6);
\draw[thick] (ev5) -- (ev6);

\draw[thick] (ev6) -- (edotsL);
\draw[thick] (edotsL) -- (ev7);

\draw[thick] (ev7) -- (ev8);
\draw[thick] (ev7) -- (ev9);
\draw[thick] (ev8) -- (ev9);

\draw[thick] (ev9) -- (ev10);

\draw[thick] (ev10) -- (ev11);
\draw[thick] (ev10) -- (ev12);
\draw[thick] (ev11) -- (ev12);

\draw[thick] (ev12) -- (ev13);

\draw[thick] (ev13) -- (ev14);
\draw[thick] (ev13) -- (ev15);
\draw[thick] (ev14) -- (ev15);

\draw[thick] (ev15) -- (edotsR);
\draw[thick] (edotsR) -- (ev16);

\draw[thick] (ev16) -- (ev17);
\draw[thick] (ev16) -- (ev18);
\draw[thick] (ev17) -- (ev18);

\draw[thick] (ev18) -- (ev19);

\draw[thick] (ev19) -- (ev20);
\draw[thick] (ev19) -- (ev21);
\draw[thick] (ev20) -- (ev21);

%
%
\node[draw, circle, inner sep=2pt] at (0,-4) (iv1) {};
\node[draw, circle, inner sep=2pt, right=of iv1] (iv3) {};
\node[draw, circle, inner sep=2pt, above=of iv3] (iv2) {};

\node[draw, circle, inner sep=2pt, right=of iv3] (iv4) {};
\node[draw, circle, inner sep=2pt, right=of iv4] (iv6) {};
\node[draw, circle, inner sep=2pt, above=of iv6] (iv5) {};

\node[draw, circle, inner sep=2pt, right=50pt of iv6] (iv7) {};
\node[draw, circle, inner sep=2pt, right=of iv7] (iv9) {};
\node[draw, circle, inner sep=2pt, above=of iv9] (iv8) {};

\node[draw, circle, inner sep=2pt, right=of iv9] (iv10) {};
\node[draw, circle, inner sep=2pt, right=of iv10] (iv12) {};
\node[draw, circle, inner sep=2pt, above=of iv12] (iv11) {};

\node[draw, circle, inner sep=2pt, right=of iv12] (iv13) {};
\node[draw, circle, inner sep=2pt, right=of iv13] (iv15) {};
\node[draw, circle, inner sep=2pt, above=of iv15] (iv14) {};

\node[draw, circle, inner sep=2pt, right=50pt of iv15] (iv16) {};
\node[draw, circle, inner sep=2pt, right=of iv16] (iv18) {};
\node[draw, circle, inner sep=2pt, above=of iv18] (iv17) {};

\node[draw, circle, inner sep=2pt, right=of iv18] (iv19) {};
\node[draw, circle, inner sep=2pt, right=of iv19] (iv21) {};
\node[draw, circle, inner sep=2pt, above=of iv21] (iv20) {};

\node[inner sep=2pt] at ($(iv6)!0.5!(iv7)$) (idotsL) {$\ldots$};
\node[inner sep=2pt] at ($(iv15)!0.5!(iv16)$) (idotsR) {$\ldots$};

\draw[thick] (iv1) -- (iv2);
\draw[thick] (iv1) -- (iv3);
\draw[thick] (iv2) -- (iv3);

\draw[thick] (iv3) -- (iv4);

\draw[thick] (iv4) -- (iv5);
\draw[thick] (iv4) -- (iv6);
\draw[thick] (iv5) -- (iv6);

\draw[thick] (iv6) -- (idotsL);
\draw[thick] (idotsL) -- (iv7);

\draw[thick] (iv7) -- (iv8);
\draw[thick] (iv7) -- (iv9);
\draw[thick] (iv8) -- (iv9);

\draw[thick, -stealth] (iv9) -- (iv10);

\draw[thick, -stealth] (iv10) -- (iv11);
\draw[thick, -stealth] (iv10) -- (iv12);
\draw[thick, -stealth] (iv11) -- (iv12);

\draw[thick, -stealth] (iv12) -- (iv13);

\draw[thick, -stealth] (iv13) -- (iv14);
\draw[thick, -stealth] (iv13) -- (iv15);
\draw[thick] (iv14) -- (iv15);

\draw[thick, -stealth] (iv15) -- (idotsR);
\draw[thick, -stealth] (idotsR) -- (iv16);

\draw[thick, -stealth] (iv16) -- (iv17);
\draw[thick, -stealth] (iv16) -- (iv18);
\draw[thick] (iv17) -- (iv18);

\draw[thick, -stealth] (iv18) -- (iv19);

\draw[thick, -stealth] (iv19) -- (iv20);
\draw[thick, -stealth] (iv19) -- (iv21);
\draw[thick] (iv20) -- (iv21);

%
%
\node[draw, circle, inner sep=2pt] at (0,-6) (iev1) {};
\node[draw, circle, inner sep=2pt, right=of iev1] (iev3) {};
\node[draw, circle, inner sep=2pt, above=of iev3] (iev2) {};

\node[draw, circle, inner sep=2pt, right=of iev3] (iev4) {};
\node[draw, circle, inner sep=2pt, right=of iev4] (iev6) {};
\node[draw, circle, inner sep=2pt, above=of iev6] (iev5) {};

\node[draw, circle, inner sep=2pt, right=50pt of iev6] (iev7) {};
\node[draw, circle, inner sep=2pt, right=of iev7] (iev9) {};
\node[draw, circle, inner sep=2pt, above=of iev9] (iev8) {};

\node[draw, circle, inner sep=2pt, right=of iev9] (iev10) {};
\node[draw, circle, inner sep=2pt, right=of iev10] (iev12) {};
\node[draw, circle, inner sep=2pt, above=of iev12] (iev11) {};

\node[draw, circle, inner sep=2pt, right=of iev12] (iev13) {};
\node[draw, circle, inner sep=2pt, right=of iev13] (iev15) {};
\node[draw, circle, inner sep=2pt, above=of iev15] (iev14) {};

\node[draw, circle, inner sep=2pt, right=50pt of iev15] (iev16) {};
\node[draw, circle, inner sep=2pt, right=of iev16] (iev18) {};
\node[draw, circle, inner sep=2pt, above=of iev18] (iev17) {};

\node[draw, circle, inner sep=2pt, right=of iev18] (iev19) {};
\node[draw, circle, inner sep=2pt, right=of iev19] (iev21) {};
\node[draw, circle, inner sep=2pt, above=of iev21] (iev20) {};

\node[inner sep=2pt] at ($(iev6)!0.5!(iev7)$) (iedotsL) {$\ldots$};
\node[inner sep=2pt] at ($(iev15)!0.5!(iev16)$) (iedotsR) {$\ldots$};

\draw[thick] (iev1) -- (iev2);
\draw[thick] (iev1) -- (iev3);
\draw[thick] (iev2) -- (iev3);

\draw[thick] (iev3) -- (iev4);

\draw[thick] (iev4) -- (iev5);
\draw[thick] (iev4) -- (iev6);
\draw[thick] (iev5) -- (iev6);

\draw[thick] (iev6) -- (iedotsL);
\draw[thick] (iedotsL) -- (iev7);

\draw[thick] (iev7) -- (iev8);
\draw[thick] (iev7) -- (iev9);
\draw[thick] (iev8) -- (iev9);

\draw[thick] (iev14) -- (iev15);

\draw[thick] (iev17) -- (iev18);

\draw[thick] (iev20) -- (iev21);

%
%
\node[] at ($(v2)!0.5!(v3) + (-2,0)$) {$G^*$};
\node[] at ($(ev2)!0.5!(ev3) + (-2,0)$) {$\cE(G^*)$};
\node[] at ($(iv2)!0.5!(iv3) + (-2,0)$) {$\cE_{\cI}(G^*)$};
\node[] at ($(iev2)!0.5!(iev3) + (-2,0)$) {$CC(\cE_{\cI}(G^*))$};
\node[fit=(ev10)(ev11)(ev12), rounded corners, draw, inner sep=10pt] {};
\end{tikzpicture}
}
\caption{
A DAG $G^*$ where minimum vertex cover of the unoriented covered edges (dashed arcs) is much larger than the size of the maximal clique (triangle): $\nu_1(G^*) \approx n$ while the lower bound of \cite{squires2020active} on $\cE(G^*)$ is a constant.
Let $\cI$ be an atomic intervention set on the middle triangle, i.e.\ the three vertices boxed up in $\cE(G^*)$.
The partially directed graph $\cE_{\cI}(G^*)$ shows the learnt arc directions after intervening on $\cI$ and applying Meek rules.
Applying the lower bound of \cite{squires2020active} on $CC(\cE_{\cI}(G^*))$ now gives a much stronger lower bound of $\approx n$ due to the single edge components.
}
\label{fig:chain-triangles}
\end{figure}
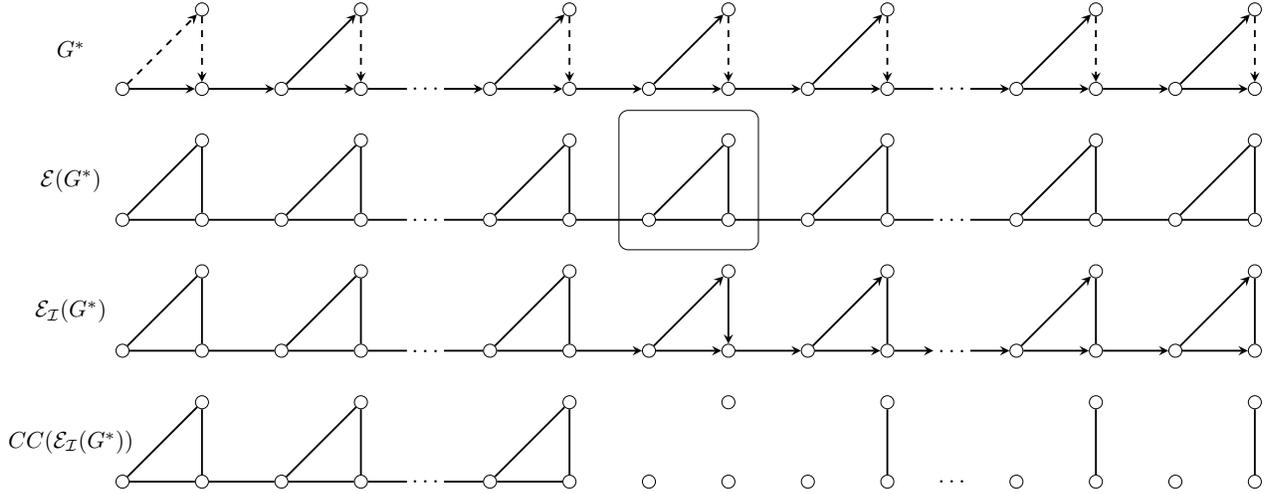
\section{Experiments and implementation}
\label{sec:appendix-experiments}

Our code and entire experimental setup is available at \url{https://github.com/cxjdavin/verification-and-search-algorithms-for-causal-DAGs}.

\subsection{Implementation details}

\paragraph{Verification}
We implemented our verification algorithm and tested its correctness on some well-known graphs such as cliques and trees for which we know the exact verification number.

\paragraph{Search}
We use \texttt{FAST CHORDAL SEPARATOR} algorithm in \cite{gilbert1984separatorchordal} to compute a chordal graph separator. This algorithm first computes a perfect elimination ordering of a given chordal graph and we use Eppstein's \texttt{LexBFS} implementation (\url{https://www.ics.uci.edu/~eppstein/PADS/LexBFS.py}) to compute such an ordering.

\subsection{Experiments}

We base our evaluation on the experimental framework of \cite{squires2020active} (\url{https://github.com/csquires/dct-policy}) which empirically compares atomic intervention policies.
The experiments are conducted on an Ubuntu server with two AMD EPYC 7532 CPU and 256GB DDR4 RAM.

\subsubsection{Synthetic graph classes}

The synthetic graphs are random connected DAGs whose essential graph is a single chain component (i.e.\ moral DAGs in \cite{squires2020active}'s terminology).
Below, we reproduce the synthetic graph generation procedure from \cite[Section 5]{squires2020active}.

\begin{enumerate}
    \item Erd\H{o}s-R\'{e}nyi styled graphs\\
    These graphs are parameterized by 2 parameters: $n$ and density $\rho$.
    Generate a random ordering $\sigma$ over $n$ vertices.
    Then, set the in-degree of the $n^{th}$ vertex (i.e.\ last vertex in the ordering) in the order to be $X_n = \max\{1, \texttt{Binomial}(n-1, \rho)\}$, and sample $X_n$ parents uniformly form the nodes earlier in the ordering.
    Finally, chordalize the graph by running the elimination algorithm of \cite{koller2009probabilistic} with elimination ordering equal to the reverse of $\sigma$.
    \item Tree-like graphs\\
    These graphs are parameterized by 4 parameters: $n$, degree $d$, $e_{\min}$, and $e_{\max}$. First, generate a complete directed $d$-ary tree on $n$ nodes.
    Then, add $\texttt{Uniform}(e_{\min}, e_{\max})$ edges to the tree.
    Finally, compute a topological order of the graph by DFS and triangulate the graph using that order.
\end{enumerate}

\subsubsection{Algorithms benchmarked}

The following algorithms perform \emph{atomic interventions}.
Our algorithm \texttt{separator} perform atomic interventions when given $k=1$ and \emph{bounded size interventions} when given $k > 1$.
\begin{description}
    \item[\texttt{random}:] A baseline algorithm that repeatedly picks a random non-dominated node (a node that is incident to some unoriented edge) from the interventional essential graph
    \item[\texttt{dct}:] \texttt{DCT Policy} of \cite{squires2020active}
    \item[\texttt{coloring}:] \texttt{Coloring} of \cite{shanmugam2015learning}
    \item[\texttt{opt\_single}:] \texttt{OptSingle} of \cite{hauser2014two}
    \item[\texttt{greedy\_minmax}:] \texttt{MinmaxMEC} of \cite{he2008active}
    \item[\texttt{greedy\_entropy}:] \texttt{MinmaxEntropy} of \cite{he2008active}
    \item[\texttt{separator}:] Our \cref{alg:search-algo}. It takes in a parameter $k$ to serve as an upper bound on the number of vertices to use in an intervention.
\end{description}

\subsubsection{Metrics measured}

Each experiment produces 4 plots measuring ``average competitive ratio'', ``maximum competitive ratio'', ``intervention count'', and ``time taken''.
For any fixed setting, 100 synthetic DAGs are generated as $G^*$ for testing, so we include error bars for ``average competitive ratio'', ``average intervention count'', and ``time taken'' in the plots.
For all metrics, ``lower is better''.

The competitive ratio for an input DAG $G^*$ is measured in terms of \emph{total atomic interventions used to orient the essential graph of $G^*$ to become $G^*$}, divided by \emph{minimum number of interventions needed to orient $G^*$} (i.e. the verification number $\nu_1(G^*)$ of $G^*$).
For non-atomic interventions, we know (\cref{lem:bounded-size-lb}) that $\nu_k(G^*) \geq \lceil \nu_1(G^*)/k \rceil$, so we use $\lceil \nu_1(G^*)/k \rceil$ as the denominator of the competitive ratio computation.
While the competitive ratio increases as $k$ increases (\cref{thm:search-bounded}), the number of interventions used decreases as $k$ increases.

Time is measured as the total amount of time taken to finish computing the nodes to intervene and performing the interventions.
Note that our algorithm can beat \texttt{random} in terms of runtime in some cases because \texttt{random} uses significantly more interventions and hence more overall computation.

\subsubsection{Experimental results}

Qualitatively, our \cref{alg:search-algo} with $k=1$ has a similar competitive ratio to the current best-known atomic intervention policies in the literature (\texttt{DCT} and \texttt{Coloring}) while running significantly faster for some graphs (roughly $\sim$10x faster on tree-like graphs).

\paragraph{Experiment 1}
Graph class 1 with $n \in \{10, 15, 20, 25\}$ and density $\rho = 0.1$.
This is the same setup as \cite{squires2020active}.
Additionally, we run \cref{alg:search-algo} with $k = 1$.
See \cref{fig:exp1}.

\begin{figure}[htbp]
\centering
\begin{subfigure}[b]{0.4\textwidth}
    \centering
    \includegraphics[width=\textwidth]{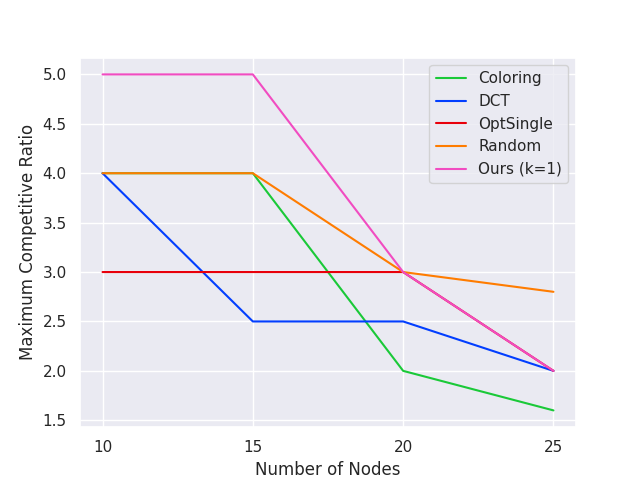}
    \caption{Max competitive ratio}
    \label{fig:maxcompratio1}
\end{subfigure}
\begin{subfigure}[b]{0.4\textwidth}
    \centering
    \includegraphics[width=\textwidth]{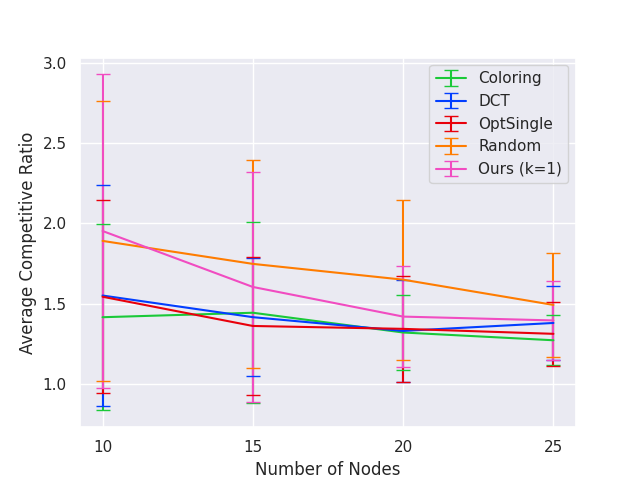}
    \caption{Average competitive ratio}
    \label{fig:avgcompratio1}
\end{subfigure}
\begin{subfigure}[b]{0.4\textwidth}
    \centering
    \includegraphics[width=\textwidth]{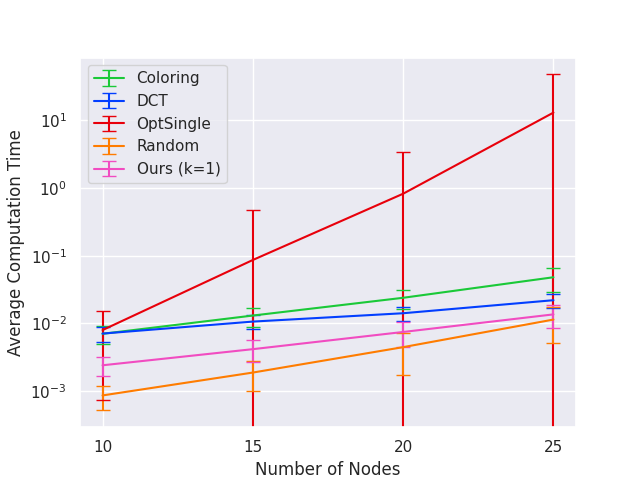}
    \caption{Time}
    \label{fig:time1}
\end{subfigure}
\begin{subfigure}[b]{0.4\textwidth}
    \centering
    \includegraphics[width=\textwidth]{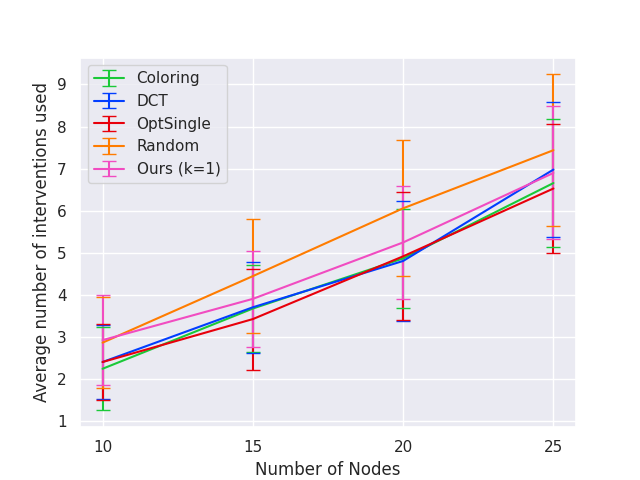}
    \caption{Average interventions count}
    \label{fig:interventions1}
\end{subfigure}
\caption{Plots for experiment 1}
\label{fig:exp1}
\end{figure}

\paragraph{Experiment 2}
Graph class 1 with $n \in \{8, 10, 12, 14\}$ and density $\rho = 0.1$.
This is the same setup as \cite{squires2020active}.
Additionally, we run \cref{alg:search-algo} with $k = 1$.
Note that this is the same graph class as experiment 1, but on smaller graphs because some slower algorithms are being run.
See \cref{fig:exp2}.

\begin{figure}[htbp]
\centering
\begin{subfigure}[b]{0.4\textwidth}
    \centering
    \includegraphics[width=\textwidth]{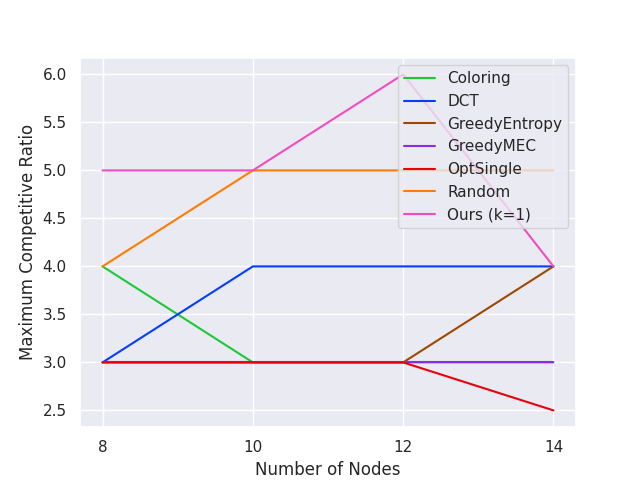}
    \caption{Max competitive ratio}
    \label{fig:maxcompratio2}
\end{subfigure}
\begin{subfigure}[b]{0.4\textwidth}
    \centering
    \includegraphics[width=\textwidth]{plots/exp2_avgcompratio.png}
    \caption{Average competitive ratio}
    \label{fig:avgcompratio2}
\end{subfigure}
\begin{subfigure}[b]{0.4\textwidth}
    \centering
    \includegraphics[width=\textwidth]{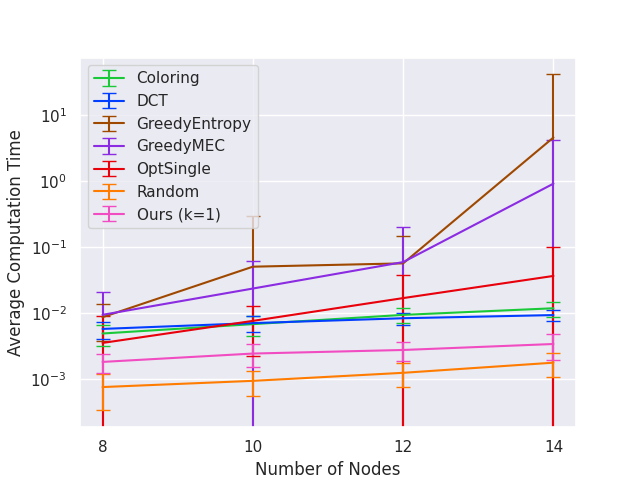}
    \caption{Time}
    \label{fig:time2}
\end{subfigure}
\begin{subfigure}[b]{0.4\textwidth}
    \centering
    \includegraphics[width=\textwidth]{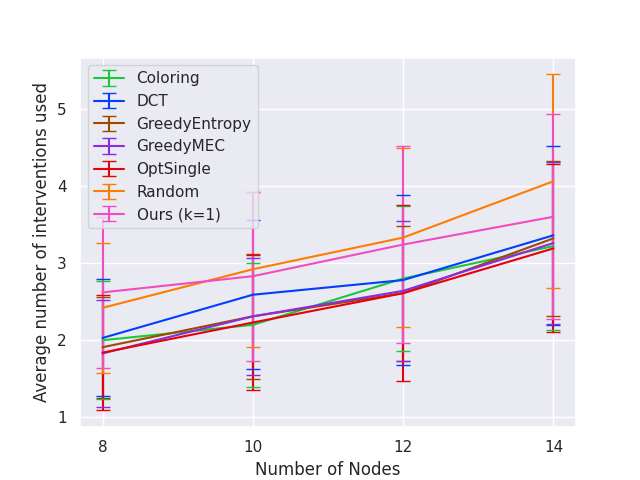}
    \caption{Average interventions count}
    \label{fig:interventions2}
\end{subfigure}
\caption{Plots for experiment 2}
\label{fig:exp2}
\end{figure}

\paragraph{Experiment 3}
Graph class 2 with $n \in \{100, 200, 300, 400, 500\}$ and $(\text{degree}, e_{\min}, e_{\max}) = (4, 2, 5)$.
This is the same setup as \cite{squires2020active}.
Additionally, we run \cref{alg:search-algo} with $k = 1$.
See \cref{fig:exp3}.

\begin{figure}[htbp]
\centering
\begin{subfigure}[b]{0.4\textwidth}
    \centering
    \includegraphics[width=\textwidth]{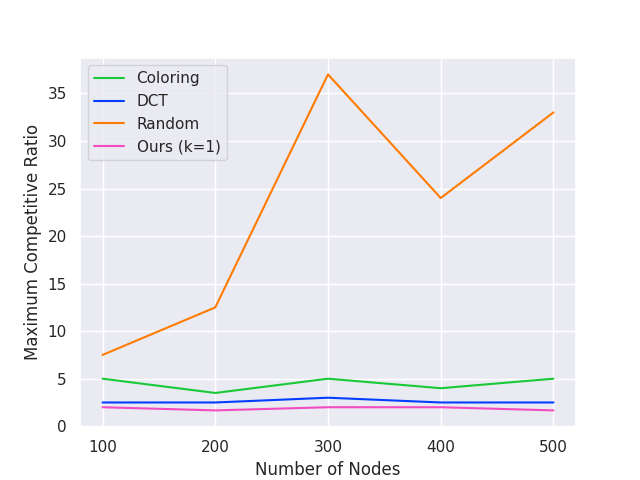}
    \caption{Max competitive ratio}
    \label{fig:maxcompratio3}
\end{subfigure}
\begin{subfigure}[b]{0.4\textwidth}
    \centering
    \includegraphics[width=\textwidth]{plots/exp3_avgcompratio.png}
    \caption{Average competitive ratio}
    \label{fig:avgcompratio3}
\end{subfigure}
\begin{subfigure}[b]{0.4\textwidth}
    \centering
    \includegraphics[width=\textwidth]{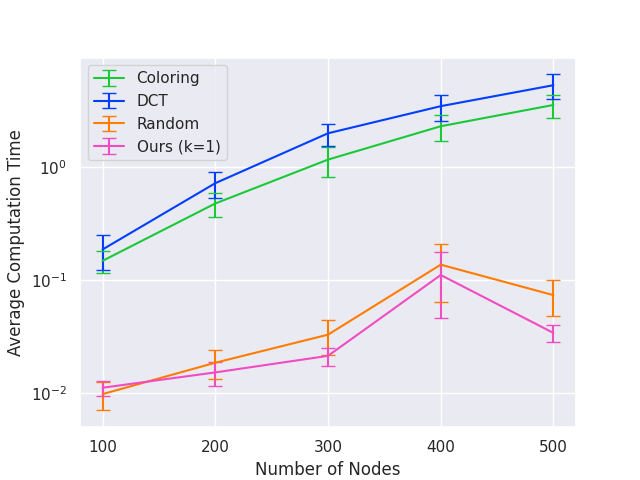}
    \caption{Time}
    \label{fig:time3}
\end{subfigure}
\begin{subfigure}[b]{0.4\textwidth}
    \centering
    \includegraphics[width=\textwidth]{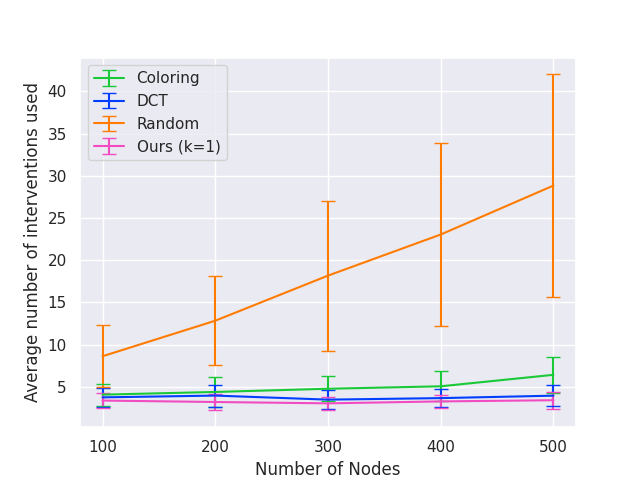}
    \caption{Average interventions count}
    \label{fig:interventions3}
\end{subfigure}
\caption{Plots for experiment 3}
\label{fig:exp3}
\end{figure}

\paragraph{Experiment 4}
Graph class 1 with $n \in \{10, 15, 20, 25\}$ and density $\rho = 0.1$.
We run \cref{alg:search-algo} with $k \in$ {1,2,3,5} on the same graph class as experiment 1, but on larger graphs.
See \cref{fig:exp4}.

\begin{figure}[htbp]
\centering
\begin{subfigure}[b]{0.4\textwidth}
    \centering
    \includegraphics[width=\textwidth]{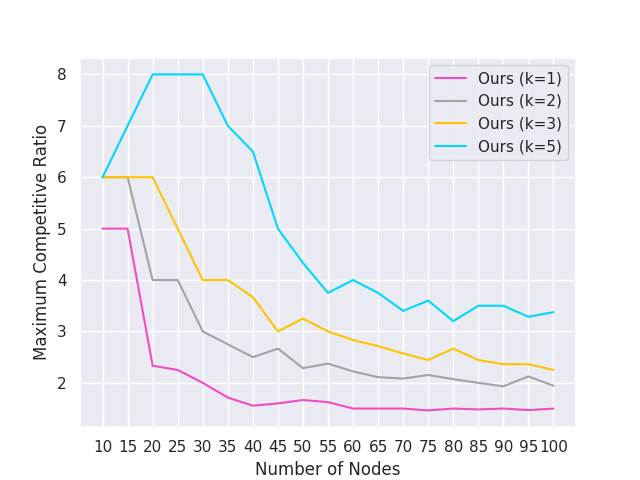}
    \caption{Max competitive ratio}
    \label{fig:maxcompratio4}
\end{subfigure}
\begin{subfigure}[b]{0.4\textwidth}
    \centering
    \includegraphics[width=\textwidth]{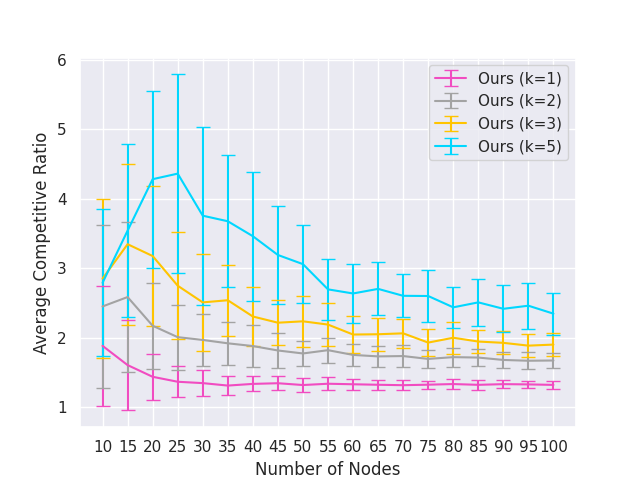}
    \caption{Average competitive ratio}
    \label{fig:avgcompratio4}
\end{subfigure}
\begin{subfigure}[b]{0.4\textwidth}
    \centering
    \includegraphics[width=\textwidth]{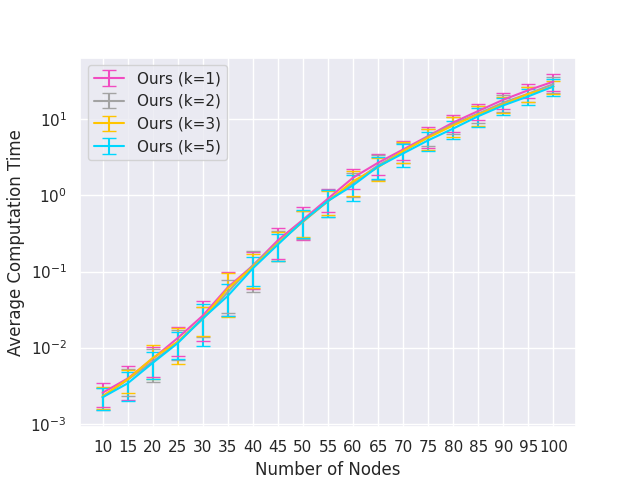}
    \caption{Time}
    \label{fig:time4}
\end{subfigure}
\begin{subfigure}[b]{0.4\textwidth}
    \centering
    \includegraphics[width=\textwidth]{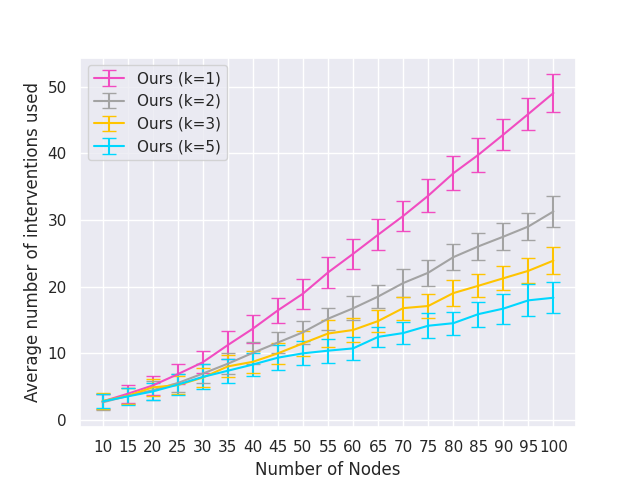}
    \caption{Average interventions count}
    \label{fig:interventions4}
\end{subfigure}
\caption{Plots for experiment 4}
\label{fig:exp4}
\end{figure}

\paragraph{Experiment 5}
Graph class 2 with $n \in \{100, 200, 300, 400, 500\}$ and $(\text{degree}, e_{\min}, e_{\max}) = (40, 20, 50)$.
We run \cref{alg:search-algo} with $k \in \{1,2,3,5\}$ on the same graph class as experiment 3, but on denser graphs.
See \cref{fig:exp5}.

\begin{figure}[htbp]
\centering
\begin{subfigure}[b]{0.4\textwidth}
    \centering
    \includegraphics[width=\textwidth]{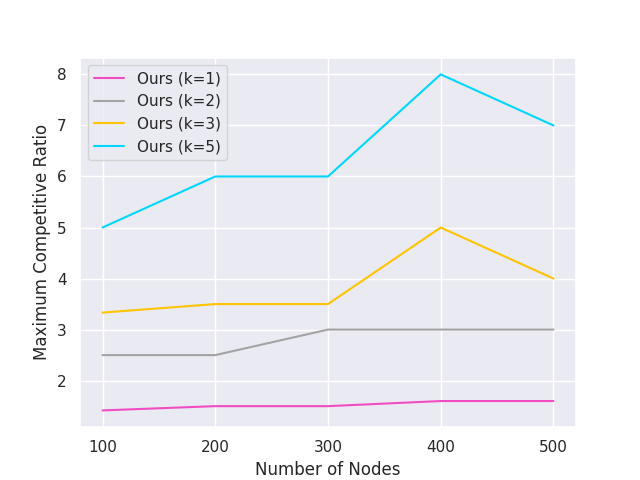}
    \caption{Max competitive ratio}
    \label{fig:maxcompratio5}
\end{subfigure}
\begin{subfigure}[b]{0.4\textwidth}
    \centering
    \includegraphics[width=\textwidth]{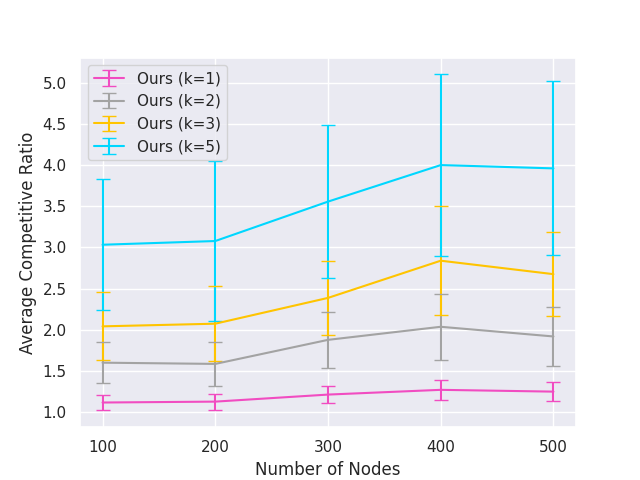}
    \caption{Average competitive ratio}
    \label{fig:avgcompratio5}
\end{subfigure}
\begin{subfigure}[b]{0.4\textwidth}
    \centering
    \includegraphics[width=\textwidth]{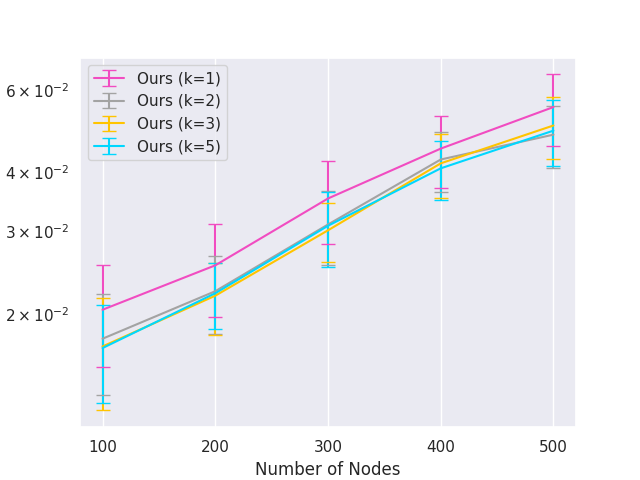}
    \caption{Time}
    \label{fig:time5}
\end{subfigure}
\begin{subfigure}[b]{0.4\textwidth}
    \centering
    \includegraphics[width=\textwidth]{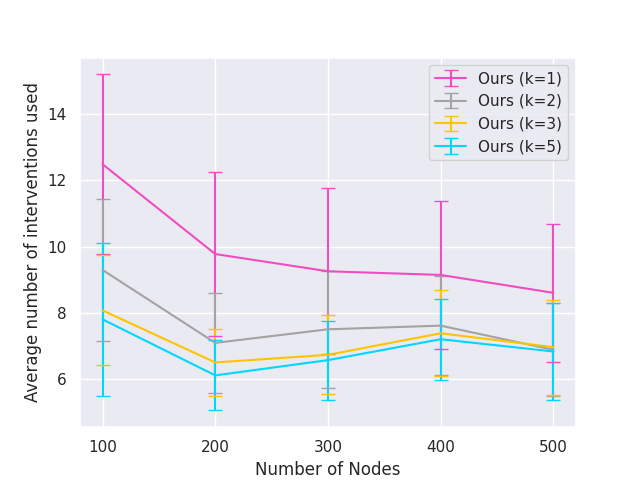}
    \caption{Average interventions count}
    \label{fig:interventions5}
\end{subfigure}
\caption{Plots for experiment 5}
\label{fig:exp5}
\end{figure}

\end{document}